\definecolor{ourspecialtextcolor}{rgb}{0.528, 0.471, 0.701} %
\algrenewcommand{\algorithmiccomment}[1]{\bgroup\hfill//~#1\egroup}
\newcommand{\ptome}{\textsc{PiToMe}\@\xspace}
\newcommand{\scolorbox}[2]{{\setlength{\fboxsep}{2pt}\colorbox{#1}{#2}}}
\definecolor{LightCyan}{rgb}{0.88,1,1}
\definecolor{Gray}{gray}{0.95}
\DeclareMathOperator*{\argmax}{arg\,max}
\DeclareMathOperator*{\argmin}{arg\,min}
\newcommand\blfootnote[1]{%
  \begingroup
  \renewcommand\thefootnote{}\footnote{#1}%
  \addtocounter{footnote}{-1}%
  \endgroup
}
\definecolor{cyan}{cmyk}{.3,0,0,0}
\definecolor{darkcyan}{rgb}{0.0, 0.55, 0.55}
\newtheorem{assumption}{Assumption}
\newtheorem{lemma}{Lemma}
\newtheorem{theorem}{Theorem}
\newtheorem{conjecture}{Conjecture}
\newtheorem{proposition}{Proposition}
\newtheorem{definition}{Definition}
\newcommand{\red}[1]{\textcolor{red}{#1}}
\newcommand{\blue}[1]{\textcolor{blue}{#1}}
\newcommand{\ps}{^{(s)}}
\newcommand{\sd}{\text{SD}}
\def\st{{\em s.t.~}}
\def\ie{{\em i.e.,~}}
\def\eg{{\em e.g.,~}}
\def\wrt{{\em w.r.t.~}}
\def\resp{{\em resp.~}}
\def\respn{{\em resp.}}
\def\wlogt{{\em w.l.o.g.~}}
\newcommand{\R}{\mathbb{R}}
\newcommand{\Pb}{\mathbb{P}} 
\newcommand{\card}{\text{card}} 
\newcommand{\cut}{\text{cut}}
\newcommand{\Ep}[1]{\mathbb{E}\left(#1\right)}
\newcommand{\cA}{\mathcal{A}}
\newcommand{\cB}{\mathcal{B}}
\newcommand{\cC}{\mathcal{C}}
\newcommand{\cE}{\mathcal{E}}
\newcommand{\cG}{\mathcal{G}}
\newcommand{\cL}{\mathcal{L}}
\newcommand{\cM}{\mathcal{M}}
\newcommand{\cN}{\mathcal{N}}
\newcommand{\cP}{\mathcal{P}}
\newcommand{\cU}{\mathcal{U}}
\newcommand{\cV}{\mathcal{V}}
\newcommand{\bfD}{\mathbf{D}}
\newcommand{\bfE}{\mathbf{E}}
\newcommand{\bfI}{\mathbf{I}}
\newcommand{\bfL}{\mathbf{L}}
\newcommand{\bfu}{\mathbf{u}}
\newcommand{\bfv}{\mathbf{v}}
\newcommand{\bfW}{\mathbf{W}}
\newcommand{\bslambda}{\boldsymbol{\lambda}}
\newcommand{\hlc}[2][yellow]{{\sethlcolor{#1} \hl{#2}} }
\title{Accelerating Transformers with Spectrum-Preserving Token Merging}
\author{Hoai-Chau Tran$^{*1,2}$, Duy M. H. Nguyen$^{*1,3,4}$,~Duy M. Nguyen$^{5}$,~TrungTin Nguyen$^{6}$,~Ngan Le$^{7}$, \\ ~\textbf{Pengtao Xie}$^{8,9}$,~\textbf{Daniel Sonntag}$^{1,10}$,~\textbf{James Zou}$^{11}$,~\textbf{Binh T. Nguyen}$^{\dagger\,2}$,~\textbf{Mathias Niepert}$^{\dagger\,3,4}$
\\\\
$^{1}$German Research Center for Artificial Intelligence (DFKI), $^{2}$University of Science - VNUHCM\\
$^{3}$Max Planck Research School for Intelligent Systems (IMPRS-IS), $^{4}$University of Stuttgart, \\$^{5}$Dublin City University,~$^{6}$University of Queensland, $^{7}$University of Arkansas,~$^{8}$MBZUAI,\\
~$^{9}$UC San Diego,~$^{10}$Oldenburg University,~$^{11}$Stanford University.
}
\begin{document}
\maketitle
\vspace{-0.2in}
\begin{abstract}
\vspace{-0.1in}
Increasing the throughput of the Transformer architecture, a foundational component used in numerous state-of-the-art models for vision and language tasks (e.g., GPT, LLaVa), is an important problem in machine learning. One recent and effective strategy is to merge token representations within Transformer models, aiming to reduce computational and memory requirements while maintaining accuracy. Prior works have proposed algorithms based on Bipartite Soft Matching (BSM), which divides tokens into distinct sets and merges the top $k$ similar tokens. However, these methods have significant drawbacks, such as sensitivity to token-splitting strategies and damage to informative tokens in later layers. This paper presents a
novel paradigm called \ptome, which prioritizes the preservation of informative tokens using an additional metric termed the \textit{energy score}. This score identifies large clusters of similar tokens as high-energy, indicating potential candidates for merging, while smaller (unique and isolated) clusters are considered as low-energy and preserved.
Experimental findings demonstrate that \ptome saved from 40-60\% FLOPs of the base models while exhibiting superior off-the-shelf performance on image classification (0.5\% average performance drop of ViT-MAEH compared to 2.6\% as baselines), image-text retrieval (0.3\% average performance drop of CLIP on Flickr30k compared to 4.5\% as others), and analogously in visual questions answering with LLaVa-7B. Furthermore, \ptome is theoretically shown to preserve intrinsic spectral properties to the original token space under mild conditions. Our implementation is available at this \href{https://github.com/hchautran/PiToMe}{link}.
\blfootnote{$^{*}$Co-first author, $^{\dagger}$Corresponding Authors.}
\end{abstract}
\vspace{-0.15in}
\addtocontents{toc}{\protect\setcounter{tocdepth}{-1}}
\section{Introduction}

\vspace{-0.1in}
Vision Transformers (ViTs)~\citep{ViT} have been integral to recent advancements in computer vision, leading to state-of-the-art deep learning architectures for representing images and videos~\citep{liu2024visual,mh2024lvm,kirillov2023segment,liu2022video}. However, these transformer-based architectures incur substantial memory costs and have a quadratic time complexity in the number of tokens due to the self-attention layers. This challenge becomes particularly severe as model sizes increase, as observed in Large Language Models (LLMs)~\citep{dehghani2023scaling}.

To address such limitations, several efforts focus on designing a more \textit{efficient attention} mechanism by making it linearly scale with input tokens~\cite{katharopoulos2020transformers,bolya2022hydra}, integrating vision or language domain-specific modules~\citep{liu2021swin,nguyen2023joint}, or pruning the head numbers in ViT ~\citep{meng2022adavit,michel2019sixteen}. Others propose \textit{dynamically pruning} less important tokens \wrt pre-defined metrics using learnable masks ~\citep{kong2022spvit,wei2023joint}. However, a primary downside of these novel methodologies lies in the necessity to retrain the model from scratch, therefore hindering the leveraging of well-trained models such as LLMs. Moreover, most pruning-based techniques may not accelerate the training process, which arises from the dynamic removal of tokens in each sample, resulting in a mismatch of dimensions and consequently preventing the batching of samples with consistent dimensions.

\begin{figure}[t]
    \centering
     \resizebox{0.95\textwidth}{!}{
    \input{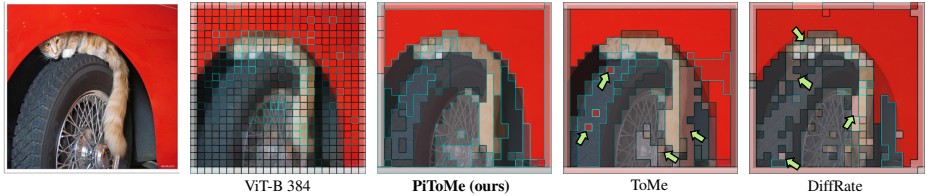}}
    \vspace{-0.05in}
    \caption{A comparison of token merging methods. Patches of the same color are merged. Green arrows highlight incorrect merges, avoided by \ptome. Position of tokens with high attention scores (\textcolor{darkcyan}{cyan} borders, zoom for clarity) in \ptome are maintained proportionality akin to ViT-base 384.}
    \vspace{-0.25in}
    \label{fig:concept}
\end{figure}
Recent research has introduced a novel \textit{token merging} technique.  Instead of pruning, this method combines tokens with high semantic similarity, removing background tokens and merging less informative foreground ones. Its versatility extends to training and non-training scenarios, drastically reducing compute and memory usage. A notable example is ToMe \citep{tome}, which introduced the Bipartite Soft Matching (BSM) algorithm, prominent for its simplicity and effectiveness in merging highly similar tokens. Since ToMe, several works, including ToFu \citep{kim2024token}, Pumer \citep{pumer}, LTPM \citep{LTPM}, and DiffRate \citep{DiffRate}, have built upon BSM with various adaptations in vision and language domains. In BSM, tokens representing image patches are separated into sets $\mathcal{A}$ and $\mathcal{B}$, and their pairwise cosine similarity is computed. The top $k$ similar pairs of tokens between the sets $\mathcal{A}$ and $\mathcal{B}$ are merged. However, the performance of this algorithm is sensitive to the token-splitting strategy. For instance, ToMe's approach, which first splits tokens based on index parity, can lead to incorrect merging since tokens in $\mathcal{A}$ can subsequently only be merged with those in $\mathcal{B}$ (Figure \ref{fig:concept}). Moreover, while BSM excels in initial layers with many redundant tokens, deeper layers risk merging informative tokens due to latent object correlations. Though current enhancements \citep{DiffRate} mitigated this by considering token attention scores in BSM \citep{jiang2022trips}, their adaptability to different ViT architectures, each with potentially distinct attention score distributions \citep{park2023self}, remains a challenge.

In this work, we propose \ptome (\textcolor{darkcyan}{P}rotect \textcolor{darkcyan}{I}nformative \textcolor{darkcyan}{To}kens before \textcolor{darkcyan}{Me}rging), a method designed to safeguard crucial information-bearing tokens prior to the merging step. Our method prioritizes preserving informative tokens by utilizing an additional metric termed the \textit{energy score} inspired by connections to \textit{graph energy} in spectral graph theory~\citep{balakrishnan2004energy,gutman2006laplacian} (Theorem \ref{theorem_spectrum_consistent_PiToME}). Specifically, our energy score assesses large clusters of similar tokens as possessing high 
energy (like background and repeated textures), thereby marking them as suitable candidates for merging, while smaller, distinct regions (foreground) are deemed low-energy and thus treated as protected informative tokens. The proposed energy term operates on the graph built for input tokens, taking into account their relationships and aggregating information from nearby neighbors when their similarities exceed certain thresholds. This approach facilitates a deeper contextual comprehension compared to previous works \citep{tome,kim2024token,pumer,DiffRate} that rely solely on attention scores or feature embedding per token. Subsequently, we only select the highest-scoring tokens and pass them on for merging in the next steps, ensuring the preservation of important tokens, particularly in the latter stages when only a few remaining ones. During the merging process, we continue leveraging sorted energy vectors from earlier stages by distributing tokens with similar energy into two sets, $\mathcal{A}$ and $\mathcal{B}$, resulting in candidates in $\mathcal{A}$ having a high probability of finding compatible matches in $\mathcal{B}$. Matched tokens are then merged using a weighted average feature embedding to create a new token representation. 

The empirical results demonstrate that despite the increased computational cost associated with energy score calculations, \ptome exhibits comparable speed to other BSM-based approaches since the matching is performed on a smaller, high-energy token set. At the same time, it consistently shows superior accuracy across various experimental scenarios. Additionally, we present theoretical insights into \ptome, showing that, under moderate assumptions — such as the discriminative nature of feature embeddings generated by ViT for node pairs within and across distinct objects — our algorithm efficiently preserves the spectral properties of the initial input tokens, maintaining the eigenvalues derived from normalized Laplacian matrices of the original tokens~\citep{loukas2018spectrally,jin2020graph,loukas2019graph}.
To summarize, our contributions encompass:
\vspace{-0.1in}
\begin{itemize}[noitemsep, nolistsep]
\vspace{0.05in}
    \item A new token merging procedure for accelerating ViT architectures is designed to protect crucial yet small-region tokens while identifying redundant ones for merging based on contextual token correlations captured by our energy score functions. 
    \vspace{0.05in}
    \item Our \ptome runs as fast as other BSM-based approaches while achieving SOTA performance on diverse tasks, ranging from image-text retrieval (Sec.~\ref{sec:img-text}), visual question answering with LLMs (Sec.~\ref{sec:vqa-llm}), image classification (Sec.~\ref{subsec:ic}), and text classification (Sec.~\ref{sec:text-class}). In several cases, \ptome is shown to reduce up to $40-60\%$ FLOPs of base models while only dropping performance around $~0.3-0.5\%$ (CLIP model on Flick30k).
     \vspace{0.05in}
    \item We also present theoretical findings indicating that, given reasonable assumptions, \ptome can effectively approximate the spectral distance between the initial token spaces and the merged token set. This sheds light on why \ptome tends to outperform baselines in practical applications and contributes to a better understanding of the potential limitations inherent in BSM-based methods, such as those in \citep{tome,kim2024token,DiffRate,pumer,wang2023zero}.
\end{itemize}




\vspace{-0.05in}
\section{Related Work}
\vspace{-0.1in}
\textbf{Efficient Attention Mechanisms.} Various efforts have sought to enhance the efficiency of transformers in both NLP and Vision domains. Some concentrate on accelerating attention computation \citep{shen2021efficient,dao2022flashattention,bolya2022hydra} through approximation techniques involving hashing \citep{daras2020smyrf}, low-rank \citep{likhosherstov2021sub}, or sparse approximations \citep{ren2021combiner}. Others explore strategies such as head or feature pruning \citep{meng2022adavit,fayyaz2022adaptive} or the integration of domain-specific modules \citep{liu2021swin,liu2022video,wang2023internimage,nguyen2023joint}.  However, many of them necessitate joint training with the backbone model from scratch. For instance, DynamicViT \citep{rao2021dynamicvit} runs approximately 150 hours of fine-tuning on an NVIDIA A100 GPU to prune the DeiT-S model \citep{touvron2021training}. In contrast, we focus on accelerating existing ViT models by token merging, which applies to training and non-training scenarios.

\textbf{Dynamic Token Pruning.} Several studies have explored token pruning in transformer models across NLP 
\citep{goyal2020power,zhong2023revisiting,yun2023focus} 
and vision domains \citep{yin2022vit,spvit,song2022cp,wang2023zero}. However, like efficient transformers, these methods typically require training. Additionally, most pruning techniques are dynamic, meaning the number of tokens varies across different inputs, which improves accuracy but complicates batching for practical deployment. To address this, numerous pruning methods employ masks during the training phase rather than directly eliminating tokens; however, it yields to cancel out the speed advantages associated with pruning.

\textbf{Token Merging.} Leading techniques such as ToMe \citep{tome} and its improvements \citep{pumer, TokenMF, LTPM, DiffRate, kim2024token, shi2023crossget}, build upon lightweight Bipartite Soft Matching (BSM). These methods exhibit speeds comparable to pruning while achieving superior performance. They have demonstrated the ability to double the throughput of state-of-the-art Vision Transformers (ViT) on both images and videos with minimal accuracy degradation in various scenarios. However, BSM-based approaches are sensitive to the selection of sets in the matching process, potentially resulting in the loss of informative tokens due to heuristic merging procedures. To address these issues, methods like DiffRate \citep{DiffRate} and Crossget \citep{shi2023crossget} leverage attention scores in ViT or cross-modal guidance to identify important tokens during the matching process, though they remain sensitive to the distribution of the token space, especially with imbalanced clusters. Another direction involves adapting more intricate algorithms, such as k-means \citep{marin2023token}, spectral clustering \citep{bianchi2020spectral}, graph pooling \citep{wu2022structural}, or graph coarsening \citep{loukas2018spectrally,kumar2023featured}, to merge similar tokens. While these strategies offer some guarantees and well-controlled outputs, their iteration schemes are highly complex and may not align with the goal of reducing model complexity in ViT layers. Our \ptome, on the other hand, enables the advantages of both approaches. It maintains efficiency comparable to BSM, remains robust to token partitioning strategies, and offers a reasonable trade-off between speed and accuracy. 
Moreover, \ptome is theoretically proved to approximate the spectral spectrum of the original token space under reasonable assumptions, resembling the behavior of other spectral clustering methods.

\vspace{-0.1in}
\section{Methodology}
\begin{figure}[t]
  \centering
    \resizebox{0.9\textwidth}{!}{
    \input{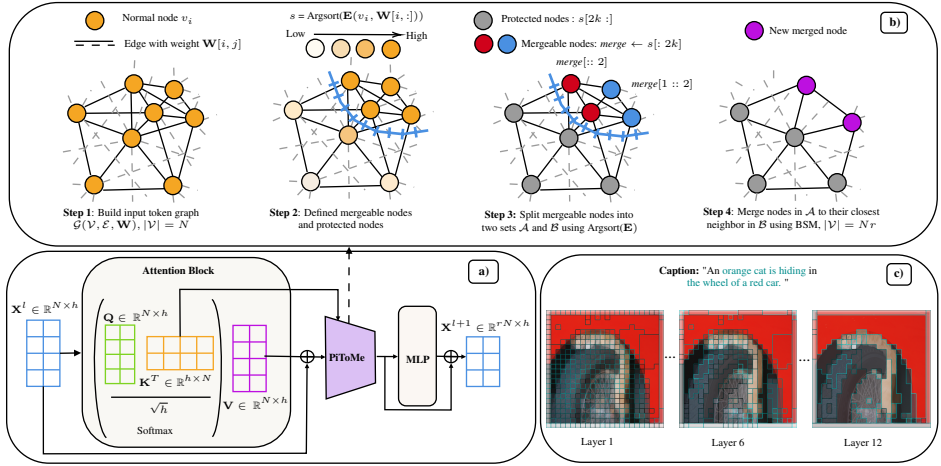}}
  \caption{\textbf{a)} \ptome can be inserted inside transformer block; \textbf{b)} Energy scores are computed to identify mergeable and protective tokens; \textbf{c)} Our algorithm gradually merges tokens in each block. 
  }
  \vspace{-0.2in}
  \label{fig:method}
\end{figure}
\vspace{-0.1in}
\subsection{Token Merging Formulation}
\vspace{-0.1in}
We apply token merging to each transformer block of the ViT architecture (Figure \ref{fig:method}-a). Given the input token of the $l$-th block $\mathbf{X}^l\in\mathbb{R}^{N\times h}$ where $N$ and $h$ are the token length and token hidden embeddings, a forward step in one Transformer block can be formulated as follows:
\begin{equation}
    \mathbf{\hat{X}}^l= \mathbf{X}^l + \text{Attention}( \mathbf{X}^l\mathbf{W}_Q,\mathbf{X}^l\mathbf{W}_K,\mathbf{X}^l\mathbf{W}_V), \text{ }\mathbf{X}^{l+1} = \mathbf{\hat{X}}^l + \text{MLP}(\mathbf{\hat{X}}^l) \label{eq:x_att}
\end{equation}
where $\text{Attention}$ and $\text{MLP}$ are the self-attention and multiple layer perceptron components. We then apply merge operations on $\mathbf{\hat{X}}^l$ and compute the output of the reduced MLP block as:
\begin{equation}
    \mathbf{X}^{l+1} = \mathbf{\hat{X}}^l_m + \text{MLP}(\mathbf{\hat{X}}^l_m), \text{ where }\mathbf{\hat{X}}^l_m=\mathbf{F}_{\mathrm{mer}}(\mathbf{\hat{X}}^l, \mathbf{X}^l\mathbf{W}_K, r).
\end{equation}
Here $\mathbf{F}_{\mathrm{mer}}(.)$ is the merging operation that receives $\mathbf{\hat{X}}^l$ as input for compressing, $\mathbf{X}^l\mathbf{W}_K$ (key matrices) as the token features of $\mathbf{\hat{X}}^l$ following prior work \citep{tome, TokenMF,LTPM, DiffRate}, and $r$ is the fraction 
of remaining tokens. The output $\mathbf{\hat{X}}^l_m \in \mathbb{R}^{rN \times h}$ serves as input for the MLP layer to produce $\mathbf{X}^{l+1} \in \mathbb{R}^{rN\times h}$. We present the \ptome $\mathbf{F}_{\mathrm{mer}}(.)$ function in the next section.


\subsection{Energy-based Merging}\label{sec:method}
\vspace{-0.1in}
We propose to use a new term called \textit{energy score} to evaluate the redundancy of each token, which is then used to protect informative or isolated tokens (low energy scores) while considering tokens that
are in the large cluster as high energy scores and characterizing them as merging candidates. Figure \ref{fig:method}-b illustrates the main steps in \ptome.
\label{subsec:overall}

\textbf{Token Graph Construction}: Given a set of $N$ token inputs in $\mathbf{\hat{X}}^l$, we build a weighted graph $\cG(\cV,\cE,\bfW)$ with $\cV$ a set of $N = |\cV|$ nodes, $\cE$ a set of $M = |\cE|$ edges
defined by connecting one token to the remaining ones in $\cG$, $\bfW \in \R^{N\times N}$ be a weighted adjacency matrix. We opt for using the \textit{key} vectors $\mathbf{K} = \mathbf{X}^l\mathbf{W}_K \in \mathbb{R}^{N\times h}$ as node features of $\cV$, i.e., $v_i \in \mathcal{V}$ has $h$ feature dimensions. The weight $\bfW[i,j]$ assigned to an edge $e_{ij} \in \cE$ connects $v_i$ and $v_j$ is computed by cosine distance:
\begin{align}\label{eq_weight_def}
\bfW[i,j] = 1-\cos(v_i,v_j), \text{ where } \cos(v_i,v_j) = \frac{v_i\cdot v_j}{{\|v_i\| \|v_j\| }}, \quad \forall v_i \in \cV, v_j \in \cV. 
\end{align}
For simplicity, $\bfW[i,:]$ and $\bfW[:,i]$ denote the i-th row and column, \respn; $[N]$ stands for $\{1,\ldots,N\}$.

    \textbf{Token Energy Scores}: 
    In this step, the \textit{energy score}, denoted as $\mathbf{E} = (E_i)_{i \in [N]}$, is computed for each node  (Figure \ref{fig:method}-a, Step 2). The term is inspired by the concept of \textit{graph energy} in spectral graph theory \citep{balakrishnan2004energy,gutman2006laplacian}, defined as the sum of the absolute eigenvalues of the adjacency matrix $\mathbf{W}$. We also leverage such structures of $\mathbf{W}$ to find correlations among tokens and to estimate token redundancy. Instead of using independent token values such as attention scores \citep{DiffRate}, our energy leads to better performance (Figure \ref{fig:ablation}, Appendix) and provides theoretical connections to the spectral properties of the original token graphs (Theorem \ref{theorem_spectrum_consistent_PiToME}).
        
    Let $i$ be the index of the current node and $\mathcal{N}(i)$ represent the set of neighbor nodes. The energy score $E_i \equiv E_i(v_i,\bfW[i,:])$ of node $v_i$ is calculated using the following equation:
    \begin{equation} \label{equa:energy}
       E_i(v_i,\bfW[i,:]) = \frac{1}{N} \sum_{j \in \mathcal{N}(i)}  {f}_m(\cos(v_i,v_j)), \text{ } {f}_m(x) = \begin{cases}
            x  & \text{ if }  x \geq m \\
            \alpha(\exp(x-m) - 1) & \text{otherwise}
        \end{cases}. 
    \end{equation}
Rather than accumulating all $\cos(v_i, v_j)$ values, the function $f_m(.)$ in Eq.\eqref{equa:energy} mimics the exponential linear unit activation function \cite{clevert_fast_2016}, focusing on similar tokens even if they are far apart, while ignoring dissimilar ones.
Here, $m$ is a dynamic margin value varying at each layer in the ViT model. Nodes within this margin, i.e., ($x > m$) with high cosine similarity $\cos(v_i, v_j)$ are considered true neighbors, potentially representing tokens belonging to the same object. Nodes outside this margin have $\cos(v_i, v_j)$ replaced by a constant $\alpha$, providing a lower bound for minimal edge weights. The term $\exp(x-m) - 1 < 0$ smooths the function $f(x)$ for neighboring nodes near the margin $m$. In experiments, we set $\alpha = 1.0$ and $m = 0.9 - 0.9 \times l_i / l$, where $l_i$ is the current layer index and $l$ is the total number of encoder layers, indicating an increasing margin as tokens move to deeper layers. The ablation studies for the $\alpha$ and $m$ values are presented in Section \ref{sec:schedule}.

Intuitively, Eq.\eqref{equa:energy} reflects the number of tokens potentially representing the same object. Tokens belonging to large objects (e.g., background) will have high energy scores, indicating potential candidates for merging, while smaller ones (e.g., foreground) will have low energy scores and are considered to be protected. This guides us to sort the energy vectors $\mathbf{E}$ in descending order and choose only the top $2k$ nodes with the highest scores as mergeable candidates and the remaining ones as protective tokens, i.e, 
$\mathbf{s} = \mathrm{argsort}(\mathbf{E})$, $\mathrm{\textit{merge}} \leftarrow \mathbf{s}[:2k],\ \mathrm{\textit{protect}} \leftarrow \mathbf{s}[2k:], k = N - Nr$.
    
\textbf{Ordered Energy-based Bipartite Soft Matching}: Having identified mergeable tokens in the $\mathrm{\textit{merge}}$ set, we continue exploit the sorted order in $\mathbf{E}$ to form two sets $\mathcal{A}$ and $\mathcal{B}$ in BSM, each containing $k$ nodes.
Specifically, tokens with odd and even indices in $\mathrm{\textit{merge}}$ are selected for $\mathcal{A}$ and $\mathcal{B}$, \respn \,given 
the fact that those in the same object should have similar energy scores, resulting in likely distributing 
in consecutive positions in $\mathrm{argsort}(\mathbf{E})$. In other words, our choosing has a high probability that one token in $\mathcal{A}$ always finds its best match in the same object in $\mathcal{B}$. This sets us apart with random partitions based on spatial indices in images like \citep{tome,kim2024token}.

\textbf{Tracking Token Sizes} All nodes in set $\mathcal{A}$ are then merged with their nearest neighbors in set $\mathcal{B}$ through the \textit{fast} BSM algorithm. 
Following prior works \citep{tome,kim2024token}, we also add proportional attention to balance the effect of the merged token on the output of the softmax function:
$\displaystyle{\mathbf{A} = \mathrm{Softmax}\left(\mathbf{X}^l\mathbf{W_Q}\cdot(\mathbf{X}^l\mathbf{W_K})^{T} / \sqrt{h} + \log \mathbf{m}\right)}$ where $\mathbf{m}$ is a row vector containing the size of each token, i.e., the number of data patches the token represents. The pseudo-code for our method is provided in Algorithm \ref{algo:pitome} (Appendix) with complexity analysis. 
\vspace{-0.05in}
\subsection{Connection to Graph Coarsening with Spectral Preservation}
\vspace{-0.1in}
In this section, we employ tools from spectral graph theory to show a spectral distance preservation of \ptome. We note that similar properties can be obtained by using more complicated clustering algorithms such as K-mean \citep{marin2023token} or spectral clustering \citep{bianchi2020spectral,wu2022structural,loukas2018spectrally}; however, these methods are typically loop-based algorithms, which are computationally expensive and not suitable for batch-type data. Our \ptome, in contrast, is as fast as BSM methods but theoretically preserves spectral properties of input token graphs. 

We begin by introducing Definitions \ref{def_Gc} and \ref{def_Gl} of graph coarsening  and lifting, \respn, 
to justify the spectral distance constructed in equation (\ref{eq_def_SD}), measuring the similarity between the original and coarse graphs.
For more thorough coverage of the mathematics of graph coarsening and graph lifting, we refer the reader to~\citep{jin_graph_2020,loukas_graph_2019,brissette_spectrum_2023,loukas_spectrally_2018}. In short, \textit{we treat the result of token merging as a graph coarsening process} (Figure~\ref{fig:graph-coarsen}, Appendix). We then create the \textit{lifted graph} as a reconstruction from this coarsened version to assess the spectral distance to the original token graph.
\vspace{0.05in}
\begin{definition}[Graph Coarsening]\label{def_Gc}
    Given a weighted graph $\cG(\cV,\cE,\bfW)$, we denote  $\cP = \{\cV_i\}_{i\in[n]}$ where $\cV = \cup_{i\in[n]}\cV_i$, be a partition of its node into $n$ disjoint sets.
    The coarsened graph of \(\mathcal{G}\) \wrt\(\mathcal{P}\) is the weighted graph \(\mathcal{G}_c\), where each partition in \(\mathcal{P}\) is aggregated into a single node, denoted \(\{\nu_i\}_{i \in [n]}\), by averaging the elements within each partition. The elements of the adjacency matrix are given by $\bfW_c[i,j] = \sum_{v_i\in\cV_i}\sum_{v_j \in \cV_j} \bfW[i,j]/(|\cV_i||\cV_j|)$.
    We denote the combinatorial and normalized Laplacians of $\cG$ by $\bfL = \bfD - \bfW$ and $\cL = \bfI_N - \bfD^{-1/2}\bfW\bfD^{-1/2}$, \respn, where $\bfD$ is the diagonal degree matrix with $\bfD[i,i] = d_i := \sum_{j=1}^N \bfW[i,j]$.
    Similarly, the definition of the coarsened Laplacian matrices follows directly: $\bfL_c = \bfD_c - \bfW_c$ and $\cL_c = \bfI_n - \bfD_c^{-1/2}\bfW_c\bfD_c^{-1/2}$.
    Finally, the eigenvalues and eigenvectors of $\cL$ (\resp $\cL_c$) are denoted as $\bslambda$ and $\bfu$ (\resp $\bslambda_c$ and $\bfu_c$).
\end{definition}
\vspace{0.05in}
\begin{definition}[Graph Lifting]\label{def_Gl}
     We call $\cG_l(\cV_l,\cE_l,\bfW_l)$ the lifted graph of $\cG$ if the adjacency matrix elements are given by $\bfW_l[i,j] = \bfW_c[i,j]$.
    We denote the node degree of $v_{li} \in \cV_l$ by $d_{li} = \sum_{j=1}^N \bfW_l[i,j]$. The combinatorial and normalized Laplacians of $\cG_l$ is then defined as $\bfL_l = \bfD_l - \bfW_l$ and $\cL_l = \bfI_N - \bfD_l^{-1/2}\bfW_l\bfD_l^{-1/2}$, \respn, where $\bfD_l$ is the diagonal degree matrix with $\bfD[i,i] = d_{li}$.
    Then, we denote, \respn, the eigenvalues and eigenvectors of $\cL_l$ by $\bslambda_l$ and $\bfu_l$.
\end{definition}
\begin{lemma}[Eigenvalue Preservation, see \eg~\cite{jin_graph_2020,loukas_graph_2019,toivonen_compression_2011,butler_interlacing_2007}]\label{lem_Eigenvalue_Preservation}
    The normalized Laplacian eigenvalues of the lifted graph $\bslambda_l$ contain all the eigenvalues of the coarse graph $\bslambda_c$ and additional eigenvalues $1$ with $(N-n)$ multiplicity.
\end{lemma}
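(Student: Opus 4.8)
The plan is to exploit the fact that, under Definition~\ref{def_Gl}, the lifted graph $\cG_l$ is a \emph{blow-up} of the coarsened graph $\cG_c$: every node of $\cG_l$ lying in a part $\cV_a$ has the same incident weights (they depend only on the parts of the two endpoints), so $\cP$ is an equitable partition of $\cG_l$ and $\cG_c$ is its normalized quotient. Let $\mP\in\R^{N\times n}$ be the assignment matrix with $\mP[i,a]=1$ if $i\in\cV_a$ and $0$ otherwise, so that $\mP^\top\mP=\mathrm{diag}(|\cV_1|,\dots,|\cV_n|)$ and, reading Definition~\ref{def_Gl} entrywise, $\bfW_l=\mP\bfW_c\mP^\top$. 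The first step is to record two structural consequences: (i) the lifted degree matrix $\bfD_l$ is constant on each part, so $\bfD_l^{-1/2}$ respects the block structure; and (ii) $\R^N$ splits orthogonally as $\mathrm{range}(\mP)\oplus\ker(\mP^\top)$ — the $n$-dimensional subspace of vectors that are constant on every part, and the $(N-n)$-dimensional subspace of vectors whose entries sum to zero on every part — with both summands invariant under the symmetric operator $\cL_l$.

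The second step is the clean half: the zero-sum summand is responsible for the claimed eigenvalue. If $\bfx$ is supported on one part $\cV_a$ with $\sum_{i\in\cV_a}\bfx_i=0$, then $\bfD_l^{-1/2}\bfx$ is again supported on $\cV_a$ with vanishing part-sum (because $\bfD_l$ is constant on $\cV_a$), and $\bfW_l\bigl(\bfD_l^{-1/2}\bfx\bigr)=0$ since each row of $\bfW_l$ is constant on the columns indexed by $\cV_a$. Hence $\cL_l\bfx=\bfx$, and such vectors span a space of dimension $\sum_a(|\cV_a|-1)=N-n$, so $1$ is an eigenvalue of $\cL_l$ with multiplicity at least $N-n$.

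The third step identifies $\cL_l$ on the block-constant summand. Writing a block-constant vector as $\mP\bfy$ and substituting $\bfW_l=\mP\bfW_c\mP^\top$, the eigenvalue equation $\cL_l\mP\bfy=\mu\mP\bfy$ reduces, after cancelling the common factor $\mP$ and pushing $\bfD_l^{-1/2}$ through the partition map (modulo the normalization bookkeeping discussed below), to the $n$-dimensional eigenvalue equation for $\cL_c$; equivalently, $\cL_l$ restricted to $\mathrm{range}(\mP)$, written in an orthonormal basis adapted to the parts, is unitarily equivalent to $\cL_c$ — which is precisely the assertion that $\cG_c$ is the normalized quotient of the equitable partition $\cP$ of $\cG_l$ (see \cite{jin_graph_2020,loukas_graph_2019,butler_interlacing_2007}). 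Together with the second step this yields $\bslambda_l=\bslambda_c\cup\{1^{(N-n)}\}$; and since the two invariant summands exhaust $\R^N$, no copies of $1$ arise beyond those counted in $\bslambda_c$ and in the zero-sum space, giving exactly the stated multiplicity.

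I expect the third step to be the main obstacle: one must faithfully translate Definitions~\ref{def_Gc}--\ref{def_Gl} into the algebraic relation between $\bfD_l$ and $\bfD_c$ that makes the reduced operator literally $\cL_c$ (and not the normalized Laplacian of a coarse graph reweighted by part sizes). When all parts have equal size this is immediate; in general it is exactly the bookkeeping carried out — sometimes in a part-size--weighted inner product — in the references cited with the lemma, so the cleanest route may be to invoke those results once the invariant-subspace decomposition above is in place. Everything else is elementary linear algebra.
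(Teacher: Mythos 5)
Your invariant-subspace plan is the right skeleton and Steps~1 and~2 are correct: $\bfD_l$ is constant on parts, the orthogonal split $\R^N=\mathrm{range}(\mP)\oplus\ker(\mP^\top)$ is $\cL_l$-invariant, and every part-supported zero-sum vector is an eigenvector of $\cL_l$ with eigenvalue~$1$, giving multiplicity at least $N-n$. But the worry you flag at Step~3 is not bookkeeping that the references will absorb --- under the paper's literal Definition~\ref{def_Gc} the step actually fails. Writing $\mC=\mP^\top\mP$ and $\tilde\bfD$ for the $n\times n$ diagonal with $\tilde\bfD[a,a]=d_{la}=\sum_b|\cV_b|\bfW_c[a,b]$, the restriction of $\cL_l$ to $\mathrm{range}(\mP)$ in the orthonormal basis $\mQ=\mP\mC^{-1/2}$ is
\begin{equation*}
\mQ^\top\cL_l\mQ \;=\; \bfI_n - \mC^{1/2}\tilde\bfD^{-1/2}\,\bfW_c\,\tilde\bfD^{-1/2}\mC^{1/2}
\;=\; \bfI_n - \mS\bigl(\bfD_c^{-1/2}\bfW_c\bfD_c^{-1/2}\bigr)\mS,\qquad \mS:=\mC^{1/2}\tilde\bfD^{-1/2}\bfD_c^{1/2}.
\end{equation*}
Since the paper's $\bfW_c$ is the \emph{average} of original weights over pairs of parts, generically $\tilde\bfD\neq\mC\bfD_c$, hence $\mS\neq\bfI$; the restricted operator is only \emph{congruent} to $\cL_c$, not similar to it, so its eigenvalues differ from $\bslambda_c$. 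Concretely: let $\cG$ be the unweighted triangle on $\{1,2,3\}$ and merge $\{1,2\}$. Then $\bfW_c=\bigl(\begin{smallmatrix}1/2&1\\1&0\end{smallmatrix}\bigr)$, $\bfD_c=\mathrm{diag}(3/2,1)$, $\bslambda_c=\{0,5/3\}$, while the lifted graph has constant degree~$2$ and $\bslambda_l=\{0,1,3/2\}$; the containment fails because $5/3\notin\bslambda_l$.

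The statement is true --- and your Step~3 closes on the nose --- once $\bfW_c$ is the \emph{sum} $\bfW_c^{\mathrm{sum}}[a,b]=\sum_{u\in\cV_a}\sum_{v\in\cV_b}\bfW[u,v]$ with $\bfD_c^{\mathrm{sum}}=\bfW_c^{\mathrm{sum}}\mathbf{1}$, which is the convention in the cited coarsening literature; the lift is then $\bfW_l[i,j]=\bfW_c^{\mathrm{sum}}[a,b]/(|\cV_a||\cV_b|)$, i.e.\ the same matrix $\bfW_l$ as before. In that case $\tilde\bfD=\mC^{-1}\bfD_c^{\mathrm{sum}}$, the diagonal factors cancel, and $\mQ^\top\cL_l\mQ=\bfI_n-(\bfD_c^{\mathrm{sum}})^{-1/2}\bfW_c^{\mathrm{sum}}(\bfD_c^{\mathrm{sum}})^{-1/2}=\cL_c$ exactly, whence $\bslambda_l=\bslambda_c\cup\{1^{(N-n)}\}$ (in the triangle example $\bslambda_c^{\mathrm{sum}}=\{0,3/2\}$, matching $\bslambda_l$). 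So the decomposition is the right idea and Step~2 is done, but Step~3 hinges on a normalization mismatch between Definition~\ref{def_Gc} and the lemma as stated; under the paper's average convention the lemma is false, and you should state and use the sum convention to complete the argument.
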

Through Lemma \ref{lem_Eigenvalue_Preservation}, we can use the lifted graph $\cG_l$ as a proxy for the coarse graph $\cG_c$, and define: 
\vspace{-0.05in}
\setlength{\abovedisplayskip}{2pt}
\setlength{\belowdisplayskip}{2pt}
\begin{align}\label{eq_def_SD}
    \sd(\cG,\cG_c) = \|\bslambda-\bslambda_l\|_1 = \sum_{i=1}^N |\lambda_i-\lambda_{li}| \text{ as a spectral distance.}
    \vspace{-0.05in}
\end{align}
Next, we present our main theoretical result demonstrating how spectral distance characterizes the superiority of our novel \ptome paradigm over the state-of-the-art approaches as ToMe \citep{tome,kim2024token}.
The Theorem \ref{theorem_spectrum_consistent_PiToME} quantifies how similar the original $\cG$ is to its coarsened counterpart $\cG_c$, and is proved in Appendix~\ref{section_proof_theorem_spectrum_consistent_PiToME}.
\begin{theorem}[Spectrum Consistent of Token Merging]\label{theorem_spectrum_consistent_PiToME}
    Suppose the graphs $\cG\ps_0$, $\cG_{\text{\ptome}}\ps$, and $\cG_{\text{ToMe}}\ps$ are coarsened from the original graph $\cG$ by iteratively merging pairs of nodes $v_{a_s}$ and $v_{b_s}$  \wrt the true partition  $\cP\ps_{0}=\{\cV\ps_{0i}\}_{i \in [s]}$, the \ptome-partition $\cP\ps_{\text{\ptome}}=\{\cV\ps_{\text{\ptome}i}\}_{i\in[s]}$, defined by \ptome in Algorithm \ref{algo:pitome}, and the ToMe-partition~\citep{tome,kim2024token}, $\cP\ps_{\text{ToMe}}=\{\cV\ps_{\text{ToMe}i}\}_{i\in[s]}$, for $s = N, \ldots,n+1$.
    We assume some standard mild assumptions: (\textit{A1}) $\mathbb{E}[\cos(v_{a_s}, v_{b_s})] \rightarrow 1, \quad \forall v_{a_s} \in \cV^{(s)}_{0i}, \forall v_{b_s} \in \cV^{(s)}_{0i}, i \in [s]$; (\textit{A2}) there exists a margin $m$ s.t.,  $\cos(v_{a_s}, v_{b_s}) \ge m > \cos(v_{a_s}, v_{c_s}), \quad \forall v_{a_s} \in \cV^{(s)}_{0i}, \forall v_{b_s} \in \cV^{(s)}_{0i}, \forall v_{c_s} \in \cV^{(s)}_{0j}, \forall i \neq j \in [s]$; and (\textit{A3}) there is an order of cardinality in the true partition, without loss of generality, we assume $N\ps_1 \ge N\ps_2 \ge \ldots \ge N\ps_s$, where $N\ps_i = |\cV^{(s)}_{0i}|, \forall i \in [s]$.
    Then it holds that:
    \vspace{-0.1in}
    \begin{enumerate}
        \item The spectral distance between the original $\cG\equiv\cG^{(N)}_{0}$ and the \ptome-coarse $\cG_{\text{\ptome}}^{(n)}$ graphs converges to $0$, \ie$\sd(\cG,\cG_{\text{\ptome}}^{(n)}) \rightarrow 0$, 
        \vspace{-0.1in}
        \item The spectral distance between the original $\cG$ and the ToMe-coarse $\cG_{\text{ToMe}}^{(n)}$ graphs converges to a non-negative constant $C$, with a high probability that $C > 0$. 
    \end{enumerate}
\end{theorem}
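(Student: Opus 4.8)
The plan is to track how the spectral distance $\sd(\cG,\cG_c\ps)$ evolves across the single merge step that takes $\cG_c^{(s)}$ to $\cG_c^{(s-1)}$, and then telescope from $s=N$ down to $s=n$. The central quantity to control is the $\ell_1$ discrepancy of the two rows being merged, $\delta\ps := \|\bfW[a_s,:]-\bfW[b_s,:]\|_1$, where $v_{a_s},v_{b_s}$ are the nodes merged at stage $s$. I would first establish a \textbf{stability lemma}: if we coarsen a graph by averaging two nodes whose adjacency rows are $\delta$-close in $\ell_1$, then the normalized-Laplacian eigenvalues move by at most $c(\delta)$, a quantity that $\to 0$ as $\delta\to 0$ (using Weyl's inequality together with the fact that the lifted Laplacian $\cL_l$ differs from $\cL$ by a low-rank-plus-small perturbation when the merged rows nearly coincide; degrees $d_{li}$ are then close to $d_i$, so the $\bfD^{-1/2}\bfW\bfD^{-1/2}$ normalization is also controlled). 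Combined with Lemma~\ref{lem_Eigenvalue_Preservation}, which lets us substitute $\bslambda_l$ for $\bslambda_c$ in the definition (\ref{eq_def_SD}), this gives $\sd(\cG_c^{(s)},\cG_c^{(s-1)}) \le c(\delta\ps)$, and the triangle inequality yields $\sd(\cG,\cG_{\text{\ptome}}^{(n)}) \le \sum_{s=n+1}^{N} c(\delta_{\text{\ptome}}\ps)$.

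For part 1, it then remains to show that under \ptome the per-step discrepancies $\delta_{\text{\ptome}}\ps$ are uniformly small. This is where assumptions (A1)--(A3) and the structure of Algorithm~\ref{algo:pitome} enter. Using (A1), within a true cluster $\cV^{(s)}_{0i}$ all pairwise cosines concentrate near $1$, so for $v_{a_s},v_{b_s}$ in the same cluster the entries $\bfW[a_s,k]=1-\cos(v_{a_s},v_k)$ and $\bfW[b_s,k]=1-\cos(v_{b_s},v_k)$ agree up to a vanishing error for every $k$ — more carefully, for $k$ in the same cluster both are near $0$, and for $k$ outside, (A2)'s margin forces both cosines below $m$ so the difference is again small (and provably bounded by something going to $0$ as the within-cluster cosines $\to 1$). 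The remaining point is that \ptome \emph{always merges within a true cluster}: the energy score $E_i$ of Eq.~\eqref{equa:energy} is, up to the $f_m$ clipping, essentially $|\cV(i)\cap\text{cluster}(i)|/N$ inflated by the within-cluster near-$1$ cosines and deflated to $\alpha$-level on cross-cluster edges, so by (A3) large clusters get high energy and are chosen into the $\mathrm{merge}$ set first; and the odd/even split of $\mathrm{argsort}(\bfE)$ places consecutive-energy (hence same-cluster) tokens into $\mathcal{A}$ and $\mathcal{B}$, so BSM's nearest-neighbor match stays inside the cluster by (A2). Hence every $\delta_{\text{\ptome}}\ps\to 0$ and the finite sum $\sum c(\delta_{\text{\ptome}}\ps)\to 0$.

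For part 2, the argument is a lower-bound mirror. ToMe splits tokens by spatial index parity into fixed sets $\mathcal{A},\mathcal{B}$ with no regard to cluster membership; with positive probability a token in $\mathcal{A}$ has \emph{no} same-cluster partner in $\mathcal{B}$ (e.g. a singleton or small foreground cluster whose members all landed in $\mathcal{A}$), forcing a cross-cluster merge. For such a merge, (A2) gives $\cos(v_{a_s},v_{b_s})\le m<1$ bounded away from $1$, so $\|\bfW[a_s,:]-\bfW[b_s,:]\|_1$ is bounded \emph{below} by a positive constant, and via a matching lower-bound version of the stability lemma (eigenvalue perturbation is not just $\le c(\delta)$ but also $\gtrsim$ some increasing function of $\delta$, using that a genuine low-rank change of size bounded away from $0$ must move at least one eigenvalue) this contributes a fixed positive amount to $\sd(\cG,\cG_{\text{ToMe}}^{(n)})$ that later averaging steps cannot fully cancel. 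Taking $C$ to be this residual gives $\sd\to C$ with $C>0$ on the positive-probability event.

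The \textbf{main obstacle} I anticipate is the lower bound in part 2: showing that a bad cross-cluster merge leaves a \emph{persistent} positive spectral gap rather than one that gets washed out by subsequent merges. Upper bounds via Weyl are routine, but lower-bounding $\|\bslambda-\bslambda_l\|_1$ after many further coarsening steps requires either an interlacing/monotonicity argument (each coarsening can only move eigenvalues in a controlled direction) or an explicit eigenvector-localization witness showing the distortion introduced by the bad merge survives. Making the ``high probability that $C>0$'' statement precise — i.e. quantifying the probability that ToMe's parity split isolates some small cluster — is a secondary but necessary combinatorial step, presumably handled under a mild genericity assumption on cluster sizes and positions relative to the fixed parity pattern.
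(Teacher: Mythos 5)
Your Part~1 argument tracks the paper's approach closely: the paper (Propositions~\ref{proposition_spectrum_consistent_PiToME_ToMe_convergence}, \ref{proposition_spectrum_consistent_PiToME}, \ref{proposition_spectrum_consistent}) likewise controls the per-step row discrepancy $\epsilon\ps := \|\bfW[a_s,:]-\bfW[b_s,:]\|_1$, bounds the spectral distance by $\tfrac{3N}{2}\sum_s\epsilon\ps$ via Weyl's inequality and a case analysis of the lifted adjacency, and then proves (Lemma~\ref{lemma_PiToMe_contradiction}, a contradiction argument using the energy ordering and (A2)–(A3)) that \ptome always merges within the same true cluster, so (A1) drives $\epsilon\ps_{\text{\ptome}}\to 0$. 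Your sketch of why the energy score separates clusters and why the odd/even split of $\mathrm{argsort}(\bfE)$ keeps cluster-mates matched is essentially the paper's Lemma~\ref{lemma_PiToMe_contradiction} restated.

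For Part~2 you are aiming higher than the paper does, and you correctly flag the spot where that ambition would require new work. The paper's actual argument is one-sided: Proposition~\ref{proposition_spectrum_consistent_PiToME} computes only $\Eb{\epsilon\ps} = 2[1-\Eb{\cos(v_{a_s},v_{b_s})}] + 3(1-\beta)(1-\Pb(M\ps_{ab}))$ and, via Lemma~\ref{lemma_ToMe_contradiction}, shows $\Pb(M\ps_{ab})<1$ for ToMe so that the \emph{upper bound} $\epsilon\ps_{\text{ToMe}}$ does not vanish. Proposition~\ref{proposition_spectrum_consistent} then gives only the \emph{upper} estimate $\sd(\cG,\cG_{\text{ToMe}}^{(n)}) \le \tfrac{3N}{2}\sum_s\epsilon\ps$. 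Taken together these show the upper bound on the spectral distance stays bounded away from zero with positive probability; they do \emph{not} establish a lower bound $\sd(\cG,\cG_{\text{ToMe}}^{(n)}) \ge c > 0$. Your proposed ``lower-bound mirror'' of the stability lemma --- showing that a cross-cluster merge moves at least one eigenvalue by a fixed amount that subsequent coarsenings cannot undo, e.g.\ via interlacing or an eigenvector-localization witness --- is precisely what would be needed to upgrade the paper's statement from ``upper bound stays positive'' to ``spectral distance stays positive.'' You leave this as the anticipated obstacle; so does the paper. If you want to match the paper exactly, you can drop the lower-bound component entirely and argue only that $\epsilon\ps_{\text{ToMe}}$ converges to a positive constant with positive probability; if you want to prove the stated conclusion in its strong reading, you must fill the gap you identified, and the interlacing route (each merge is a rank-one-per-row change in $\bfW$, and Cauchy interlacing constrains how much later merges can cancel an earlier eigenvalue shift) is the natural place to look.
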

Intuitively, Theorem \ref{theorem_spectrum_consistent_PiToME} states that, given assumptions (i) tokens are closely embedded within classes and distinct between classes $(\textit{A1, A2})$, and (ii) the number of tokens per class follows certain orders $(A3)$, the spectral distance between \ptome and the original tokens in Eq.\eqref{eq_def_SD} will converge to $0$. In contrast, with ToMe partitions, a non-eliminable constant likely remains.
\vspace{-0.1in}
\section{Experiments}
\vspace{-0.15in}
We focus on two settings: \hlc[cyan!30]{\textit{Off-the-Shelf Performance}}, where we evaluate the models' performance immediately after compression without training, and \hlc[gray!15]{\textit{Retrained}}, where we treat the compression algorithms as pooling functions and retrain the models on downstream tasks. The experiments cover four tasks: (i) \textit{image \& text retrieval}, (ii) \textit{visual question answering (VQA)}, (iii) \textit{image classification}, and (iv) \textit{text classification}. We use the number of floating-point operations (FLOPS) needed for inference on one sample as the main metric to benchmark memory footprint and speed. Higher FLOPS indicate greater memory requirements and longer training and inference times.
\vspace{-0.1in}
\subsection{Image \& Text Retrieval}
\label{sec:img-text}
\vspace{-0.1in}
We evaluate \ptome on the image-text retrieval task using three different backbone models CLIP \citep{clip}, ALBEF \citep{ALBEF}, and BLIP \citep{BLIP} on two frequently used Flickr30k \citep{FLickr30k} and MSCOCO \citep{MSCOCO} datasets. Our experiment is benchmarked using $\mathrm{recall@k}$ \citep{cao2022image}, where a higher $\mathrm{recall@k}$ indicates the model's effectiveness in retrieval.
In Figure \ref{fig:off-the-shell-image-text}, we benchmarked \ptome against other SOTA \textit{merging} or \textit{pruning}-based methods such as ToMe~\citep{tome}, ToFu~\citep{kim2024token}, DiffRate~\citep{DiffRate}, and DCT~\citep{DCT} on \textit{off-the-shelf} setting when varying amount of merged tokens at each layer. Given the same FLOPS, it is clear that \ptome consistently outperforms previous compression algorithms across all backbones. The performance gap increases as we decrease the percentage $r$ of tokens retained in each layer. The same behavior remains consistent in Table \ref{tab:re-trained-img-text}, where we set $r=0.925$ and \textit{retrain} pre-trained checkpoints of BLIP and CLIP. For more details about the training method, please refer to \citet{BLIP}.




\begin{table}[t]
\centering

\end{table}
In Table \ref{tab:sota-img-text}, we compare \ptome using compression ratios of $r \in \{0.95, 0.975\}$ on BLIP and BLIP-2 against other advanced architectures such as ViLT~\citep{kim2021vilt}, LightningDOT~\citep{sun2021lightningdot}, UNITER~\citep{chen2020uniter}, METER~\citep{dou2022empirical}, CLIP-L~\citep{radford2021learning}, and ALBEF~\citep{li2021align}. The results show that \ptome consistently surpasses those architectures by a significant margin. Moreover, the performance drop on the base BLIP/BLIP-2 is minimal while achieving substantial reductions in memory footprint and \textit{rerank times—nearly halving for BLIP} and \textit{tripling for BLIP-2}. Additionally, the speedup can further improve with increased batch and model sizes.
\vspace{-0.1in}
\subsection{Visual Question Answering (VQA) with Large Vision-Language Models}
\label{sec:vqa-llm}
\vspace{-0.1in}
This experiment focuses on assessing the off-the-shelf performance of large vision-language models like LLaVa \citep{liu2024visual}. We extensively conduct experiments across six VQA datasets: VQA-v2~\citep{vqa_v2}, GQA~\citep{gqa} (academic questions), VizWiz~\citep{vizwiz} (visually impaired individuals), ScienceQA~\citep{scienceqa} (zero-shot scientific question answering), TextVQA \citep{textvqa} (text-rich VQA tasks), and MME-Perception~\citep{mme} (visual perception with yes/no question). More details on the number of samples in each dataset are in the Appendix.
All experiments are conducted using 
\textit{LLAVA-1.5 7B} and \textit{LLAVA-1.5 13B}
with the  
$\mathrm{lmms\_eval}$ 
library \citep{lmms_eval2024} provided by the LMMs-Lab team.
\vspace{-0.1in}
\begin{figure}[H]
\centering
     \includegraphics[width=0.82\textwidth]{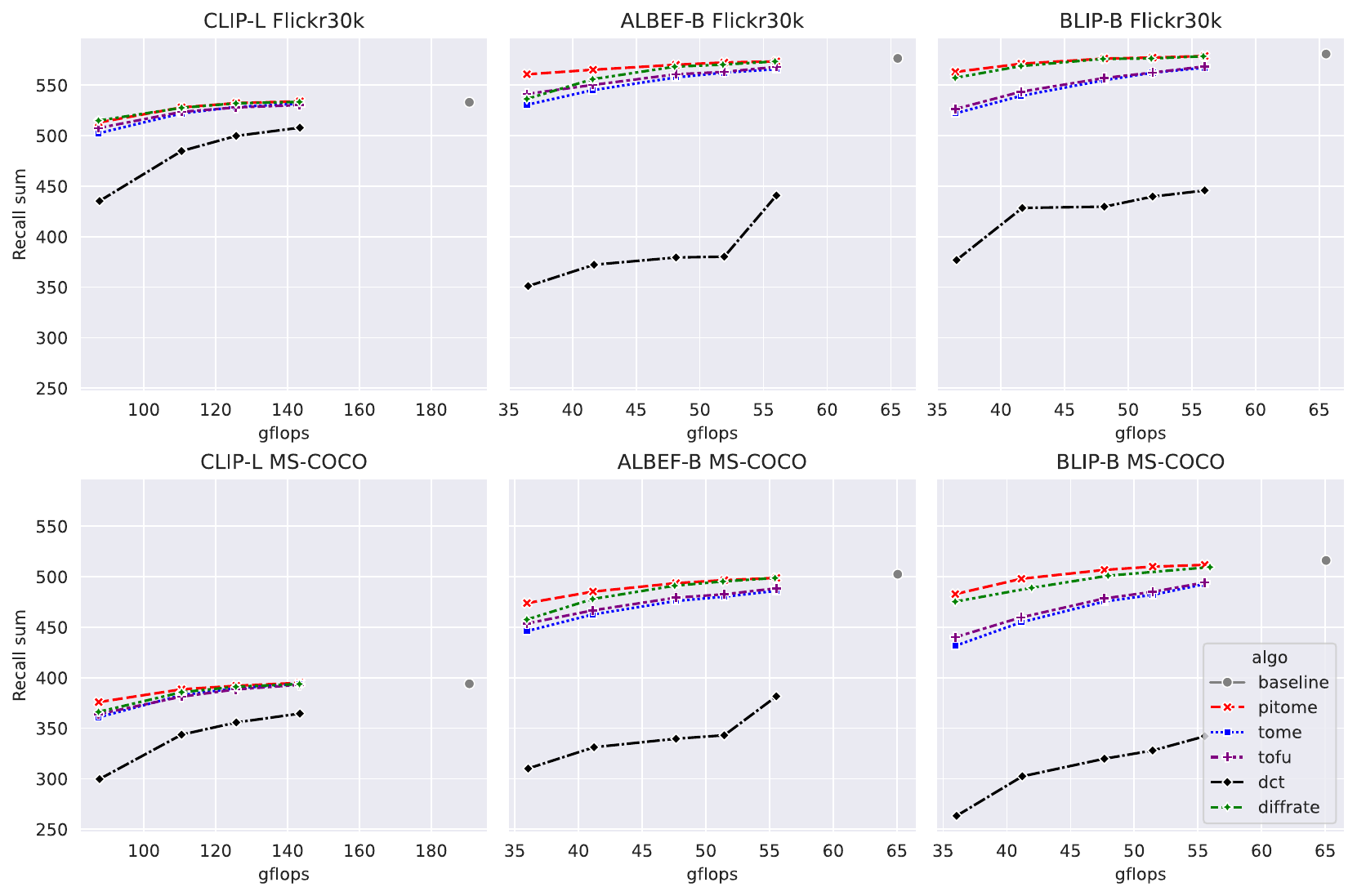}
     \vspace{-0.05in}
    \caption{\scolorbox{cyan!30}{\textbf{Off-the-shell}} \textbf{ Image-Text Retrieval comparison} between \ptome v.s. merging/pruning methods on different backbones on \textbf{} tasks when varying the number of merged tokens. Here, Recall sum =$Rt@1+ Rt@5+Rt@10+Ri@1+ Ri@5+Ri@10$ is close to 600, indicating recall scores at top 1,5, and 10 for retrieving image and text reached close to 100\%. \ptome curves, in most cases, are above other baselines.}
    \label{fig:off-the-shell-image-text}
\end{figure}
\vspace{-0.4in}
\begin{table}[H]
\begin{minipage}[ht]{.62\linewidth}
 \centering
\caption{\textbf{Image-Text Retrieval comparison.} 
\ptome without training are in \scolorbox{cyan!30}{blue}, and with training in \scolorbox{gray!15}{gray}. \ptome achieves SOTA while saving $36\%-56\%$ in FLOPS and speeding up by $\times1.4$ to $\times 1.6$ compared to the base models. 
} 
 \resizebox{\textwidth}{!}{
 \setlength{\tabcolsep}{1.8pt}
\begin{tabular}{l|l|cc|cc|cc|cc}
\toprule
\textbf{Datasets} & \textbf{Methods} &  \makecell{$\mathbf{Rt}@1$ $\uparrow$} & \makecell{$\mathbf{Ri}@1$ $\uparrow$} & \makecell{\textbf{ZS Retrieval} \\ $Rsum$ $\uparrow$} & \makecell{\textbf{Reranked}\\$Rsum$ $\uparrow$} & \makecell{\textbf{ViT} \\\textbf{FLOPS} $\downarrow$} & \makecell{\textbf{Total} \\\textbf{FLOPS} $\downarrow$} & \makecell{ZS \textbf{Retrieval} \\ \textbf{Time} $\downarrow$ }& \makecell{\textbf{Total} \\ \textbf{Time} $\downarrow$}\\
\toprule
\multirow{11}{*}{Flickr30k} 
            &ViLT&83.50&64.40&490.60&525.70&-&55.90&-&-\\
            &LightingDOT&83.90&69.90&532.26&-&-&-&-&-\\
            &UNITER&92.87&83.73&521.90&542.80&-&949.9&-&-\\
            &METER&94.30&82.22&560.54&570.72&-&-&-&-\\
            &CLIP-L&92.90&81.34& 568.23&-&80.85&-&25s&-\\
            &ALBEF&94.91&85.32& 564.58&575.00&55.14&65.54&16s&58s\\
\cline{2-10}
&\cellcolor{cyan!30}$\text{\textbf{\ptome}}^{\text{BLIP}}_{r=0.95}$ &\cellcolor{cyan!30}95.72&\cellcolor{cyan!30}86.32&\cellcolor{cyan!30}567.58&\cellcolor{cyan!30}577.81&\cellcolor{cyan!30}38.55&\cellcolor{cyan!30}47.65&\cellcolor{cyan!30}13s&\cellcolor{cyan!30}56s\\ 
&\cellcolor{gray!15}$\text{\textbf{\ptome}}^{\text{BLIP}}_{r=0.95}$&\cellcolor{gray!15}\textbf{96.61}&\cellcolor{gray!15}\textbf{87.18}&\cellcolor{gray!15}\textbf{569.98}&\cellcolor{gray!15}\textbf{579.35}&\cellcolor{gray!15}\textbf{38.55}&\cellcolor{gray!15}\textbf{47.65}&\cellcolor{gray!15}\textbf{13s}&\cellcolor{gray!15}\textbf{56s}\\
            &\text{\color{gray!70}BLIP}&\text{\color{gray!70}96.86}&\text{\color{gray!70}87.48}& \text{\color{gray!70}572.24}&\text{\color{gray!70}580.76}&\text{\color{gray!70}55.14}&\text{\color{gray!70}65.54}&\text{\color{gray!70}16s}&\text{\color{gray!70}1m17s}\\ [1mm]
            
\cline{2-10}
            &\cellcolor{cyan!30}$\text{\textbf{\ptome}}^{\text{BLIP2}}_{r=0.95}$& \cellcolor{cyan!30}96.83&\cellcolor{cyan!30}87.84&\cellcolor{cyan!30}566.25&\cellcolor{cyan!30}580.77&\cellcolor{cyan!30}296.93&\cellcolor{cyan!30}390.77&\cellcolor{cyan!30}45s&\cellcolor{cyan!30}1m21s\\
            &\cellcolor{cyan!30}$\text{\textbf{\ptome}}^{\text{BLIP2}}_{r=0.975}$&\cellcolor{cyan!30}\underline{\textbf{97.55}}&\cellcolor{cyan!30}\underline{\textbf{89.04}}& \cellcolor{cyan!30}\underline{\textbf{572.81}}&\cellcolor{cyan!30}\underline{\textbf{583.72}}&\cellcolor{cyan!30}\underline{\textbf{434.50}}&\cellcolor{cyan!30}\underline{\textbf{564.78}}&\cellcolor{cyan!30}\underline{\textbf{1m5s}}&\cellcolor{cyan!30}\underline{\textbf{1m54s}}\\  [0.5mm]
            &\text{\color{gray!70}BLIP2}&\text{\color{gray!70}97.61}&\text{\color{gray!70}89.79}& \text{\color{gray!70}572.72}&\text{\color{gray!70}584.76}& \text{\color{gray!70}678.45}&\text{\color{gray!70}900.77}&\text{\color{gray!70}1m37s}&\text{\color{gray!70}3m15s}\\ [1mm]
\midrule
\multirow{8}{*}{MS-COCO} 
        &ViLT&61.50&42.70& 420.20&439.20&-&55.90&-&-\\
        &CLIP-L&70.78&53.79& 478.18&-&80.85&-&2m10s&-\\
        &METER&76.16&57.08&-&495.95&-&-&-&-\\
        &ALBEF&76.94&60.24&478.39&500.44& 55.14&65.54&43s&5m29s\\
\cline{2-10}
&\cellcolor{cyan!30}$\text{\textbf{\ptome}}^{\text{BLIP}}_{r=0.95}$&\cellcolor{cyan!30}79.46&\cellcolor{cyan!30}62.50&\cellcolor{cyan!30}485.99&\cellcolor{cyan!30}506.65&\cellcolor{cyan!30}38.85&\cellcolor{cyan!30}47.65&\cellcolor{cyan!30}51s&\cellcolor{cyan!30}4m30s\\
&\cellcolor{gray!15}$\text{\textbf{\ptome}}^{\text{BLIP}}_{r=0.95}$&\cellcolor{gray!15}\textbf{80.44}&\cellcolor{gray!15}\textbf{63.91}&\cellcolor{gray!15}\textbf{493.33}&\cellcolor{gray!15}\textbf{512.66}&\cellcolor{gray!15}\textbf{38.85}&\cellcolor{gray!15}\textbf{47.65}&\cellcolor{gray!15}\textbf{51s}&\cellcolor{gray!15}\textbf{4m30s}\\
        &\text{\color{gray!70}BLIP}&\text{\color{gray!70}81.82}&\text{\color{gray!70}64.36}&   \text{\color{gray!70}494.34}&\text{\color{gray!70}516.03}&\text{\color{gray!70}55.14}&\text{\color{gray!70}65.54}&\text{\color{gray!70}1m3s}&\text{\color{gray!70}7m10s}\\[1mm]
\cline{2-10}
&\cellcolor{cyan!30}$\text{\textbf{\ptome}}^{\text{BLIP-2}}_{r=0.95}$&\cellcolor{cyan!30}82.29&\cellcolor{cyan!30}65.54&\cellcolor{cyan!30} \textbf{494.92}&\cellcolor{cyan!30}\textbf{518.44}&\cellcolor{cyan!30}296.93&\cellcolor{cyan!30}390.77&\cellcolor{cyan!30}3m33s&\cellcolor{cyan!30}6m34s\\
&\cellcolor{cyan!30}$\text{\textbf{\ptome}}^{\text{BLIP-2}}_{r=0.975}$ &\cellcolor{cyan!30}\underline{\textbf{84.12}}&\cellcolor{cyan!30}\underline{\textbf{67.37}}&\cellcolor{cyan!30}\underline{\textbf{504.95}} &\cellcolor{cyan!30}\underline{\textbf{527.06}}&\cellcolor{cyan!30}\underline{\textbf{434.50}}&\cellcolor{cyan!30}\underline{\textbf{564.78}}&\cellcolor{cyan!30}\underline{\textbf{5m13s}}&\cellcolor{cyan!30}\underline{\textbf{9m24s}}\\

&\text{\color{gray!70}BLIP-2}&\text{\color{gray!70}85.32}&\text{\color{gray!70}68.26}& \text{\color{gray!70}507.46}& \text{\color{gray!70}528.63}&\text{\color{gray!70}678.45}&\text{\color{gray!70}900.77}&\text{\color{gray!70}7m52s}&\text{\color{gray!70}16m12s}\\
\bottomrule
\end{tabular}}
\label{tab:sota-img-text}
\end{minipage}
\hspace{0.05in}
\begin{minipage}[ht]{0.36\linewidth}
    \caption{\scolorbox{gray!15}{\textbf{Retrained}} \textbf{Image-Text Retrieval comparison} when retraining from scratch on CLIP and BLIP backbones.  $Rk = Rk@1 + Rk@5 + Rk@10,\,k\in\{t,i\}$, 
    }
 \resizebox{\textwidth}{!}{
 \setlength{\tabcolsep}{2.3pt}
\begin{tabular}{l|l|ccccc}
\toprule
 \textbf{Models} &\textbf{Algo.} &$\mathbf{Rt} \uparrow$&$\mathbf{Ri} \uparrow$&\textbf{GFLOPS} $\uparrow$&\textbf{\makecell{Eval\\Speed}}$\uparrow$&\textbf{\makecell{Train\\Speed}}$\uparrow$\\
\midrule
\multirow{6}{*}{$\text{CLIP}_{\text{Flickr}}$}
&Baseline& \text{\color{gray!70}291.80} & \text{\color{gray!70}275.52} & \text{\color{gray!70}x1.00} & \text{\color{gray!70}x1.00} & \text{\color{gray!70}x1.00}\\
 & ToMe & 287.30 & 270.52 & x2.10 & x1.39 &  {\textbf{x1.79}} \\
 & ToFu &288.32 & 269.68 & x2.10 & x1.39 & x1.76  \\
 & DCT &279.70 & 258.24 & x2.10 & x1.30 &  x1.37 \\
 & DiffRate&289.33& 266.45& x2.10 & x1.39& x1.78 \\
 & \cellcolor{gray!15}\textbf{\ptome} & \cellcolor{gray!15} \textbf{291.50} & \cellcolor{gray!15} \textbf{270.94} & \cellcolor{gray!15}\textbf{x2.10} & \cellcolor{gray!15}\textbf{x1.39} & \cellcolor{gray!15}\underline{x1.78}  \\
\midrule
\multirow{6}{*}{$\text{BLIP}_{\text{FLickr}}$}
&Baseline&\text{\color{gray!70}296.70} & \text{\color{gray!70}284.06} & \text{\color{gray!70}x1.00} & \text{\color{gray!70}x1.00}& \text{\color{gray!70}x1.00} \\
 & ToMe & 294.80 & 280.64 & x1.57& x1.66& \textbf{x1.60} \\
 & ToFu &{\textbf{296.46}} & 281.04 & x1.57& x1.65& x1.59  \\
 & DCT & 291.79 & 275.22 & x1.57& x1.61& x1.45 \\
 & DiffRate&292.77& 279.46& x1.57& x1.65& x1.59  \\
 & \cellcolor{gray!15}\textbf{\ptome} & \cellcolor{gray!15}\underline{296.00} & \cellcolor{gray!15}\textbf{282.36} & \cellcolor{gray!15}\textbf{x1.57} & \cellcolor{gray!15}\textbf{x1.66} & \cellcolor{gray!15}\underline{x1.59}  \\
 \midrule
\multirow{6}{*}{$\text{CLIP}_{\text{coco}}$}
&Baseline& \text{\color{gray!70}256.30} & \text{\color{gray!70}222.21} & \text{\color{gray!70}x1.00} & \text{\color{gray!70}x1.00} & \text{\color{gray!70}x1.00}  \\
 & ToMe &248.64 & 215.03 & x2.10 & x1.38 & x1.79  \\
 & ToFu & 248.99 & 216.56 & x2.10& x1.39 & x1.79\\
 & DCT & 240.04& 211.28& x2.10& x1.34& x1.37\\
 & DiffRate& 248.87 & 215.45& x2.10 & x1.39& x1.79  \\
 & \cellcolor{gray!15}\textbf{\ptome} & \cellcolor{gray!15}\textbf{250.70} & \cellcolor{gray!15}\textbf{217.01}& \cellcolor{gray!15} \textbf{x2.10}& \cellcolor{gray!15} \textbf{x1.39}& \cellcolor{gray!15} \textbf{x1.79}\\
\midrule
\multirow{6}{*}{$\text{BLIP}_{\text{coco}}$}
&Baseline& \text{\color{gray!70}273.72} & \text{\color{gray!70}241.30} & \text{\color{gray!70}x1.00} & \text{\color{gray!70}x1.00} & \text{\color{gray!70}x1.00}  \\
 & ToMe & 266.86 & 234.67& x1.57 & \textbf{x1.90}& x1.85  \\
 & ToFu  & 266.18 & 233.87& x1.57 & {x1.90}& x1.85\\
 & DCT & 264.38 & 230.19& x1.57 & x1.86& x1.78  \\
 & DiffRate& 265.45&  235.11 & x1.57& x1.84 & x1.85  \\
 & \cellcolor{gray!15}\textbf{\ptome} & \cellcolor{gray!15} \textbf{268.42} & \cellcolor{gray!15} \textbf{236.25}& \cellcolor{gray!15} \textbf{x1.57}& \cellcolor{gray!15} \underline{x1.88}& \cellcolor{gray!15} \textbf{x1.85}\\
 \bottomrule
\end{tabular}}
\label{tab:re-trained-img-text}
\end{minipage}
\end{table}
\vspace{-0.18in}
Let $L$ denote the number of layers in the CLIP encoder and $N$ the number of visually encoded tokens. In our experiment, we apply \ptome to the ViT vision encoder of LLAVA, retaining only $r$ percent of tokens in each layer. This results in $r^LN$ tokens being fed into the LLM, significantly enhancing inference speed. We used LLaVA-1.5-7B and LLaVA-1.5-13B checkpoints to run off-the-shelf settings. Tables \ref{tab:off-the-shelf-vqa} and \ref{tab:run_time_vqa}, along with Figure \ref{fig:ots-llava}, illustrate that the \ptome algorithm consistently achieves superior performance compared to other merging and pruning methods, as well as existing SOTA models
such as BLIP-2~\citep{BLIP2}, InstructBLIP~\citep{dai2024instructblip}, IDEFICS-9B/80B~\citep{idefics2023}, with \textit{inference time nearly halved}. Remarkably, in some datasets like VisWiz and ScienceQA, the compressed model even surpasses the baseline model. We contend that this improvement stems from the merging of less significant tokens in \ptome, potentially enhancing the robustness of the language model (LLM).
\vspace{-0.1in}
\subsection{Image Classification on Imagenet-1k}
\label{subsec:ic}
\vspace{-0.05in}
In this task, we employed five ViT backbones of varying sizes—tiny (ViT-T), small (ViT-S), base (ViT-B), large (ViT-L), and huge (ViT-H) - which are pre-trained using either MAE \citep{mae} or DEIT \citep{DEIT} styles. These backbones were utilized to assess both off-the-shelf and retrained performance. All experiments were conducted on the ImageNet-1k dataset, which is a subset of ImageNet \citep{imagenet} containing labeled images spanning 1000 categories. 
\vspace{-0.28in}
\begin{table}[H]
\centering
\begin{minipage}[ht]{.62\linewidth}
\vspace{-0.15in}
\caption{\textbf{\scolorbox{cyan!30}{{Off-the-shelf}}  \textit{LLaVA-1.5 7B} ($r$=0.9) and \textit{LLaVA-1.5 13B} ($r$=0.925)} performance vs. \ptome and other token pruning/merging methods on six VQA datasets:  VQA-v2 \citep{vqa_v2}, GQA \citep{gqa}, VisWiz \citep{vizwiz},  TextVQA \citep{textvqa}, MME \citep{mme}
ScienceQA image (ScienceQA\textsuperscript{I}) \citep{scienceqa}.}
\resizebox{\textwidth}{!}{
\label{tab:off-the-shelf-vqa}
\setlength{\tabcolsep}{1.8pt}
\scalebox{0.5}{
\setlength{\tabcolsep}{1.8pt}
\centering
\begin{tabular}{l|l|cccccccc}
\toprule
\textbf{Model}&\textbf{LLM}&\textbf{$\text{VQA}^{\text{v2}}$}$\uparrow$&\textbf{GQA} $\uparrow$& \textbf{VisWiz}$\uparrow$ & \textbf{ScienceQA\textsuperscript{I}}$\uparrow$ & \textbf{TextVQA} $\uparrow$& \textbf{MME}$\uparrow$\\
 \midrule
BLIP-2&Vicuna-13B&41.0&41.0&19.6&61.0&42.5&1293.8 \\
InstructBLIP&Vicuna-7B&-& 49.2&34.5&60.5& 50.1&-\\
InstructBLIP&Vicuna-13B&-&49.5&33.4& 63.1& 50.7&1212.8\\
IDEFICS-9B&LLaMA-7B&50.9&38.4&35.5&-&25.9&-\\
IDEFICS-80B&LLaMA-65B&60.0&45.2&36.0&-& 30.9&-\\
 \midrule
LLaVA-1.5-7B&\multirow{6}{*}{Vicuna-7B}&\text{\color{gray!70}76.6}&\text{\color{gray!70}62.0}&\text{\color{gray!70}54.4}&\text{\color{gray!70}70.4}&\text{\color{gray!70}46.0}&\text{\color{gray!70}1514.7}\\
\hspace{0.05in}\  ToMe &&75.2&59.5&55.9&68.7&41.1&1412.4\\
\hspace{0.05in}\  ToFu&&75.1&59.4&55.8&68.5&41.2&1405.3\\
\hspace{0.05in}\  DCT&&67.8&56.2&55.7&65.8&26.3&1193.9\\
\hspace{0.05in}\ DiffRate&&72.0&57.9&55.4&66.4&30.6&1341.0\\
\hspace{0.05in}\cellcolor{cyan!30}\textbf{\  \ptome}&\cellcolor{cyan!30}&\cellcolor{cyan!30}\textbf{75.4}&\cellcolor{cyan!30}\textbf{59.9}&\cellcolor{cyan!30}\textbf{55.9}&\cellcolor{cyan!30}\textbf{69.0}&\cellcolor{cyan!30}\textbf{43.0}&\cellcolor{cyan!30}\textbf{1448.1}\\
 \midrule
LLaVA-1.5-13B&\multirow{6}{*}{Vicuna-13B}&\text{\color{gray!70}78.3}&\text{\color{gray!70}63.2}&\text{\color{gray!70}56.7}&\text{\color{gray!70}72.8}&\text{\color{gray!70}48.7}&\text{\color{gray!70}1522.6
}\\
\hspace{0.05in}\  ToMe&&76.0&59.9&55.9&73.8&43.1&1470.3\\
\hspace{0.05in}\  ToFu&&76.1&60.1&56.1&74.0&43.0&1471.0\\
\hspace{0.05in}\  DCT&&70.8&57.3&56.1&70.3&23.9&1355.8\\
\hspace{0.05in}\  DiffRate&&73.4&58.5&54.6&70.6&32.8&1395.4\\
\hspace{0.05in}\cellcolor{cyan!30}\textbf{\  \ptome}&\cellcolor{cyan!30}&\cellcolor{cyan!30}\textbf{76.8}&\cellcolor{cyan!30}\textbf{60.2}&\cellcolor{cyan!30}\textbf{56.1}&\cellcolor{cyan!30}\textbf{74.0}&\cellcolor{cyan!30}\textbf{45.6}&\cellcolor{cyan!30}\textbf{1490.1}\\
\bottomrule
\end{tabular}}}
\vspace{0.1in}
\caption{\textbf{Inference time} of LLaVA-1.5-7B and LLaVA-1.5-13B models when running on \textit{five V100-GPUs} and \textit{five A100-GPUs}.}
\resizebox{0.98\textwidth}{!}{
\label{tab:run_time_vqa}
\centering
\hspace{0.1in}
\begin{tabular}{lllllll}
\toprule
\textbf{Model}& \textbf{VQA\textsuperscript{v2}} $\downarrow$ & \textbf{GQA} $\downarrow$ & \textbf{VisWiz} $\downarrow$ & \textbf{ScienceQA\textsuperscript{I}}$\downarrow$  & \textbf{TextVQA}$\downarrow$ & \textbf{MME}$\downarrow$ \\
\midrule
LLava-1.5-7B & 09h:05m & 10m:25s  & 04m:36s  & 01m:50s & 10m:12s  & 02m:32s \\
\  ToMe  & 05h:38m & 06m:34s  & 03m:26s  & 01m:07s & 07m:37s  & 01m:24s \\
\  ToFu  & 05h:35m & 06m:32s  & 03m:29s  & 01m:06s & 07m:40s  & 01m:24s \\
\  DCT   & 05h:59m & 06m:41s  & 03m:28s  & 01m:08s & 08m:16s  & 01m:27s \\
\  DiffRate& 05h:39m & 06m:39s  & 03m:26s  & 01m:06s & 07m:36s  & 01m:21s \\
\  \textbf{\ptome}& 05h:44m & 06m:37s  & 03m:26s  & 01m:07s & 07m:37s  & 01m:23s \\
\midrule
LLava-1.5-13B & 13h:11m & 13m:05s & 07m:36s&04m:54s& 15m:04s &02m:59s \\
\  ToMe & 09h:28m& 09m:35s & 05m:58s & 03m:31s &11m:48s& 02m:16s\\
\  ToFu & 09h:26m& 09m32s& 05m:58s& 03m:26s& 11m:45s& 02m:15s\\
\  DCT & 10h:02m & 10m:53s& 06m:46s& 03m:45s& 12m:57s & 02m:34s\\
\  DiffRate& 09h:33m& 09:m44s& 06m:01s& 03m:37s & 11m:52s& 02m:18s\\
\  \textbf{\ptome}& 09h:32m& 09m:39s& 06m:03s & 03m:35s & 12m:08s& 02m:17s\\
\bottomrule
\end{tabular}}

\end{minipage}
\hspace{0.05in}
\begin{minipage}[ht]{.3\linewidth}
    \centering
    \includegraphics[width=1.0\textwidth]{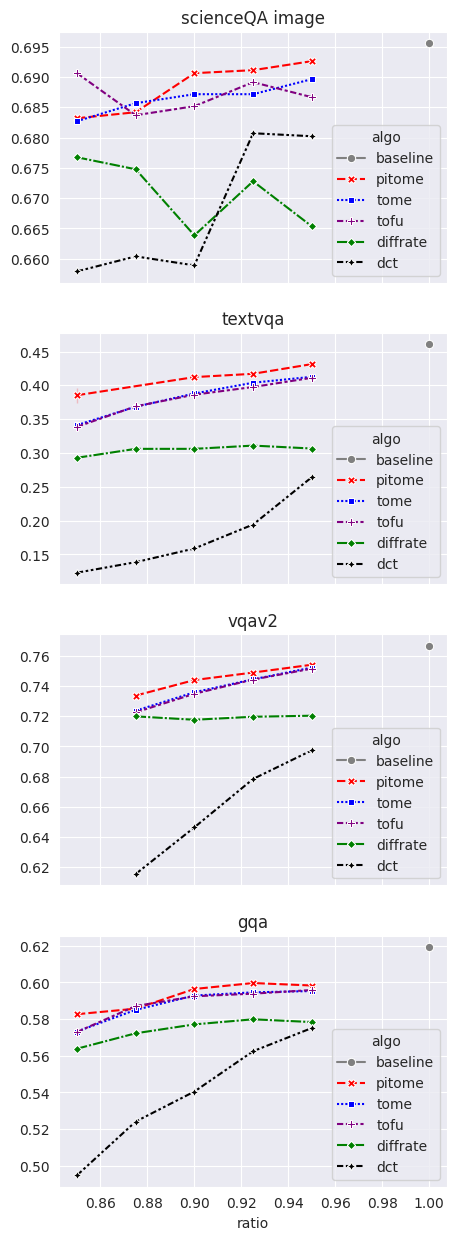}
    \captionof{figure}{\textbf{\scolorbox{cyan!30}{Off-the-shelf}} performance of \ptome on \textbf{LLaVA-1.5-7B} with different compressing ratio $r$.}
      \label{fig:ots-llava}
\end{minipage}
\end{table}
\vspace{-0.28in}
Table \ref{tab:ic} and Figure \ref{fig:mae-deit} present our experimental results, comparing \ptome with recent works, including SOTA efficient transformers such as Swin-B~\citep{CSWin-B}, CSWin-B~\citep{CSWin-B}, MViT-B/L~\citep{MViTv2-B}, MAE~\citep{mae},
and other token merging/pruning methods \citep{dynamic-vit,a-vit,spvit}. We observe that \ptome 
maintains high accuracy with an average performance drop of only $0.5\%$ after reducing up to $44\%$ of FLOPS (MAE-H), showcasing superior performance with comparable throughput. It is important to note that \textit{dynamic pruning-based methods} such as A-ViT~\citep{a-vit}, Dynamic ViT~\citep{dynamic-vit}, and SP-ViT~\citep{kong2022spvit} do not accelerate training speed due to using additional masks for padding tokens into a same dimension. On the retraining settings, we note that models compressed by \ptome also surpass merging/pruning methods by a large margin and approach the performance of the original models.

\vspace{-0.1in}
\subsection{Text Classification}\label{sec:text-class}
\vspace{-0.1in}
While previous studies have focused on benchmarking BSM-based algorithms within the vision or vision-language domain, we also extend experiments to the text domain, where input sequence lengths vary by sample. Specifically, we apply compression algorithms to the first three layers of the BERT model \citep{devlin2018bert}, reducing the number of tokens by $20\%$ in each layer. Our experiments utilize the SST-2 dataset ~\citep{sst2} with an average sequence length of 23.2 tokens and the IMDb dataset ~\citep{imdb} with an average sequence length of 292.2 tokens.

As demonstrated in Table \ref{table:text-performance} and Figure \ref{fig:text_ots} (Appendix), our findings indicate that \ptome performs better than other BSM-based baselines. Additionally, after retraining, the compressed BERT models achieve competitive records while significantly accelerating training speed compared to previous pruning methods such as PowER-BERT \citep{power-bert}, Fisher \citep{Fisher}, and LTP \citep{ltp}, as well as BERT-based efficient models like DistilBERT \citep{distilbert} and ALBERT \citep{albert}. Notably, we observe only a $0.4\%$ performance drop on the IMDb dataset and even surpass the original BERT model by $0.3\%$ on the SST-2 dataset. For detailed empirical results on this task, please refer to Appendix \ref{sec:text-appendix}.
\vspace{-0.1in}
\begin{table}[H]
\centering
\begin{minipage}[ht]{0.38\linewidth}
\captionof{table}{
    \textbf{Image Classification}: Performance of \ptome on Imagenet-1k, both \hlc[cyan!30]{off-the-shelf} (OTS acc) and after \hlc[gray!15]{retraining}(Trained acc), across ViT backbones. We benchmark with different architectures and merging/pruning methods.
}\label{tab:ic}
\centering
 \resizebox{1.0\textwidth}{!}{
 \setlength{\tabcolsep}{2.2pt}
    \begin{tabular}{ll|>{\columncolor{cyan!30}}c>{\columncolor{gray!15}}ccc}
    \toprule
    \textbf{Type} & \textbf{Model} & \cellcolor{white}\textbf{\makecell{OTS \\ Acc.}} & \cellcolor{white}\textbf{\makecell{Trained\\Acc.}}& \textbf{\makecell{Flops $\downarrow$}}& \textbf{\makecell{Train\\speed up}}\\
    \midrule
   \multirow{4}{*}{\makecell{Other\\models}}&Swin-B&n/a&84.0&15.4&$\times$\\
    &CSWin-B&n/a&84.2&15.0&$\times$\\
    &MViTv2-B&n/a&84.4&10.2&$\times$\\
    &MViTv2-L&n/a&85.3&42.1&$\times$\\
    \midrule
    \multirow{5}{*}{merge}
    &$\text{ToMe}^{\text{DEIT-T}}$&68.9&70.0&0.79&\checkmark\\
    &$\text{ToFu}^{\text{DEIT-T}}$&69.6&70.5&0.79&\checkmark\\
    &$\text{DCT}^{\text{DEIT-T}}$&67.6&68.7&0.79&\checkmark\\
    &$\text{DiffRate}^{\text{DEIT-T}}$&69.9&70.7&0.79&\checkmark\\
    &$\text{\textbf{\ptome}}^{\text{DEIT-T}}$&\textbf{70.8}&\textbf{71.6}&\textbf{0.79}&\checkmark\\
     &\text{\color{gray!70}$\text{ViT}^{\text{DEIT-T}}$}&\cellcolor{white}\text{\color{gray!70}72.3}&\cellcolor{white}\text{\color{gray!70}72.3}&\text{\color{gray!70}1.2}&$\times$\\
         \midrule 
    \multirow{3}{*}{prune} 
    &$\text{A-ViT}^{\text{DEIT-S}}$&n/a&78.6&2.9&$\times$\\
    &$\text{Dynamic-ViT}^{\text{DEIT-S}}$&n/a&79.3&2.9&$\times$\\
    &$\text{SP-ViT}^{\text{DEIT-S}}$&n/a&79.3&2.6&$\times$\\
    \cline{2-6}
    \multirow{6}{*}{merge} 
    &$\text{E-ViT}^{\text{DEIT-S}}$&-&79.5&2.9&$\times$\\
    &$\text{ToMe}^{\text{DEIT-S}}$&77.7&79.4&2.9&\checkmark\\
    &$\text{ToFu}^{\text{DEIT-S}}$&77.8&79.6&2.9&\checkmark\\
    &$\text{DCT}^{\text{DEIT-S}}$&74.8&78.6&2.9&\checkmark\\
    &$\text{DiffRate}^{\text{DEIT-S}}$&76.8&79.5&2.9&\checkmark\\
    &$\text{\textbf{\ptome}}^{\text{DEIT-S}}$&\textbf{79.1}&\textbf{79.8}&\textbf{2.9}&\checkmark\\
    &\text{\color{gray!70}$\text{ViT}^{\text{DEIT-S}}$}&\cellcolor{white}\text{\color{gray!70}79.8}&\cellcolor{white}\text{\color{gray!70}79.8}&\text{\color{gray!70}4.6}&$\times$\\  \midrule 
    \multirow{5}{*}{merge} &
    $\text{ToMe}^{\text{MAE-L}}$&82.9&85.0&31.0&\checkmark\\ 
    &$\text{ToFu}^{\text{MAE-L}}$&83.8&85.1&31.0&\checkmark\\
    &$\text{DCT}^{\text{MAE-L}}$&82.8&84.4&31.0&\checkmark\\
    &$\text{DiffRate}^{\text{MAE-L}}$&83.2&85.3&31.0&\checkmark\\
    &$\text{\textbf{\ptome}}^{\text{MAE-L}}$&\textbf{84.6}&\textbf{85.3}&\textbf{31.0}&\checkmark\\
     & \text{\color{gray!70} $\text{ViT}^{\text{MAE-L}}$}&\cellcolor{white}\text{\color{gray!70}85.7}&\cellcolor{white}\text{\color{gray!70}85.7}&\text{\color{gray!70}61.6}&$\times$\\ 
    \cline{2-6}
    \multirow{5}{*}{merge}
    &$\text{ToMe}^{\text{MAE-H}}$&85.6&86.4&92.8&\checkmark\\
    &$\text{ToFu}^{\text{MAE-H}}$&85.8&86.4&92.8&\checkmark\\
    &$\text{DCT}^{\text{MAE-H}}$&84.3&86.0&92.8&\checkmark\\
    &$\text{DiffRate}^{\text{MAE-H}}$&85.9&86.6&92.8&\checkmark\\
    &$\text{\textbf{\ptome}}^{\text{MAE-H}}$&\underline{\textbf{86.4}}&\underline{\textbf{86.7}}&\textbf{92.8}&\checkmark\\
     &\text{\color{gray!70}$\text{ViT}^{\text{MAE-H}}$}&\cellcolor{white}\text{\color{gray!70}86.9}&\cellcolor{white}\text{\color{gray!70}86.9}&\text{\color{gray!70}167.4}&$\times$ \\
     \bottomrule
  \end{tabular} 
}
  \end{minipage}
  \hspace{0.02in}
  \begin{minipage}[ht]{.6\linewidth}
	 \centering
   	\captionof{table}{\textbf{Text Classification}: \ptome vs other BERT-style compressed models and token pruning ones.} \label{table:text-performance}
   		\resizebox{0.75\textwidth}{!}{
   			\begin{tabular}{lll|ccc}
   				\toprule
   				\textbf{Dataset} & \textbf{Type} & \textbf{Model} &\textbf{Acc}  & \textbf{\makecell{Eval\\Flops $\uparrow$}} & \textbf{\makecell{Train\\Speed $\uparrow$}}\\
     			\midrule
     			\multirow{12}{*}{SST-2} &\multirow{2}{*}{\makecell{compressed \\ models}}
     			&ALBERT&91.3 &x1.0&x1.1\\
     			&&DistiledBERT& 91.1 & x2.0 &  x1.7 \\  [1mm]
     			\cline{2-6}
     			&&\text{\color{gray!70}BERT}& \text{\color{gray!70}91.4} & \textbf{\color{gray!70}x1.0} &  \text{\color{gray!70}x1.0} \\ 
   				&\multirow{3}{*}{\makecell{pruning\\+mask}}&PowER-BERT& 91.1& x2.5 & x1.0 \\ 
   				&&Fisher& 91.3& x1.6 &  x1.0 \\
   				&&LTP& 91.3& x2.9 &  x1.0 \\
   				\cline{2-6}
   				&\multirow{5}{*}{\makecell{merging}}&\cellcolor{cyan!30} \ptome& \cellcolor{cyan!30}{91.0}& \cellcolor{cyan!30}{x1.9}& \cellcolor{cyan!30}{x1.4}\\
   				&&\cellcolor{gray!15} ToMe& \cellcolor{gray!15}91.2& \cellcolor{gray!15}{x1.9}& \cellcolor{gray!15}x1.4\\
   				&&\cellcolor{gray!15} ToFu& \cellcolor{gray!15}89.8& \cellcolor{gray!15}{x1.9}& \cellcolor{gray!15}x1.4\\
   				&&\cellcolor{gray!15} DCT& \cellcolor{gray!15}90.7& \cellcolor{gray!15}{x1.9}& \cellcolor{gray!15}x1.4\\
   				&&\cellcolor{gray!15} DiffRate& \cellcolor{gray!15}89.7& \cellcolor{gray!15}{x1.9}& \cellcolor{gray!15}x1.4\\
   				&&\cellcolor{gray!15} \ptome& \cellcolor{gray!15}\textbf{91.7}& \cellcolor{gray!15}\textbf{x1.9}& \cellcolor{gray!15}\textbf{x1.4}\\
     			\midrule
     			\multirow{11}{*}{IMDb} &\multirow{2}{*}{\makecell{compressed\\models}}
     			&ALBERT&89.2 &x1.0&x1.2\\
   				&&DistiledBERT& 93.0 & x2.0 &  x1.9 \\ [1mm]
   				\cline{2-6}
   				&&\text{\color{gray!70}BERT}& \text{\color{gray!70}94.0} & \text{\color{gray!70}x1.0} &  \text{\color{gray!70}x1.0} \\ 
   				&\multirow{2}{*}{\makecell{pruning\\+mask}}& PowER-BERT& 92.5  & x2.7 & x1.0 \\ 
   				&& TR-BERT& 93.6  & x2.3 & x1.0 \\ 
   				\cline{2-6}
   				&\multirow{3}{*}{\makecell{merging}}&\cellcolor{cyan!30} \ptome& \cellcolor{cyan!30}{93.2}& \cellcolor{cyan!30}{x1.9} & \cellcolor{cyan!30}{x1.8} \\
   				&&\cellcolor{gray!15} ToMe& \cellcolor{gray!15}93.3& \cellcolor{gray!15}{x1.9} & \cellcolor{gray!15}{x1.8} \\
   				&&\cellcolor{gray!15} ToFu& \cellcolor{gray!15}92.6& \cellcolor{gray!15}{x1.9} & \cellcolor{gray!15}{x1.8} \\
   				&&\cellcolor{gray!15} DCT& \cellcolor{gray!15}92.4& \cellcolor{gray!15}{x1.9} & \cellcolor{gray!15}{x1.8} \\
   				&&\cellcolor{gray!15} DiffRate& \cellcolor{gray!15}92.4& \cellcolor{gray!15}{x1.9} & \cellcolor{gray!15}{x1.8} \\
   				&&\cellcolor{gray!15} \ptome& \cellcolor{gray!15}\underline{\textbf{93.6}}& \cellcolor{gray!15}\textbf{x1.9} & \cellcolor{gray!15}\textbf{x1.8} \\
   				\bottomrule
     		\end{tabular}}
       \vspace{0.15in}
          \hspace{-0.15in} \includegraphics[width=1.0\textwidth]{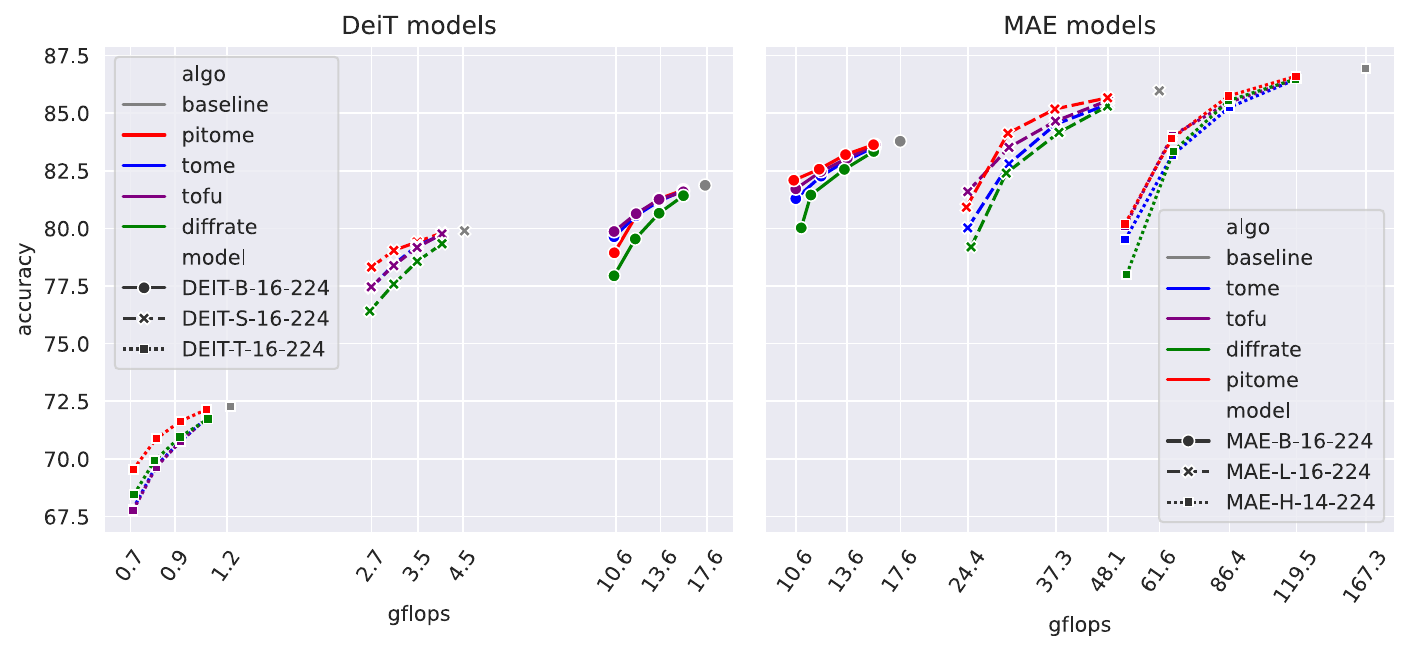}
          \vspace{-0.1in}
    \captionof{figure}{\textbf{\scolorbox{cyan!30}{Off-the-shelf}  results on Imagenet-1k. Zoom in for better view.}}
    \label{fig:mae-deit}
  \end{minipage}
\end{table}
\vspace{-0.3in}
\subsection{\ptome Ablation Studies}\label{sec:schedule}
\vspace{-0.1in}
\textbf{Contributions of energy scores and related factors.}
To assess the performance of the components used in \ptome, we conduct the following settings: 
(i) \ptome without protecting important tokens by our energy in Step 2, i.e., using odd and even indices in sorted energy score array as two sets in BSM; (ii) \ptome where the merging process in Step 3 conducted on two randomly sets $\mathcal{A},\,\mathcal{B}$ as baselines~\citep{tome,kim2024token} instead of leveraging ordered in sorted energy vectors $\mathbf{E(.)}$; (iii) \ptome without using our proposed energy score as in Eq\eqref{equa:energy} but utilizing other indicators like attention scores from the $\mathrm{[CLS]}$ (\ptome w cls attn) token~\citep{DiffRate} or mean of attention scores; (iv) \ptome 
using a fixed of $k$ removed token at each layer as ToMe~\citep{tome}
rather than a reducing ratio of $r$ as our configuration.

We run experiments on image-text retrieval and text classification tasks, reporting the results in Table~\ref{tab:ablation} for (i) and (ii), and in Figure~\ref{fig:ablation} for (iii) and (iv). The results demonstrate that all factors contribute to the performance of \ptome, with energy-based operations playing a particularly significant role. Additionally, reducing tokens with a ratio $r$ effectively eliminates redundant tokens in early layers while preserving informative ones in later layers. 

\textbf{Margin $m$ and $\alpha$ hyper-parameters.}
To validate the roles of these parameters in our energy score function in Eq.\eqref{equa:energy}, we conduct ablation studies on image-text retrieval task with (v) adaptive margin $m$ compared with a fixed value $m \in \{0.9, 0.45, 0.0, -1.0\}$ when varying the ratio $r$ and (vi) given a fixed value of $r$, changing the smooth constant value $\alpha$ in $\alpha(\exp(x-m) - 1)$ with $x < m$. Results for these settings are summarized in Figure \ref{fig:ablation_m} and Table \ref{tab:ablation_alpha}, respectively. We observe that while models with fixed tend to have the accuracy drop sharply when it is lower than some threshold, the adaptive margins achieve the best results across cases. We hypothesize that as the token space becomes sparser in deeper layers, PiToMe’s fixed $m$ approach likely assigns the same energy score to all tokens, making it difficult to isolate and protect tokens during merging. Table \ref{tab:ablation_alpha} also shows that $\alpha = 1.0$ is the best choice across margin values. 

Further details, including additional ablation study results, visualizations (output merging, open-chat with LLaVa), and extra \ptome experiments, are provided in the Appendix.

\begin{minipage}{0.55\textwidth}
\captionof{table} {\textbf{Impact} of different settings in \textbf{Steps 2 and 3}.}\label{tab:ablation}
\resizebox{0.9\textwidth}{!}{
\begin{tabular}{l|
lc|lc}
\toprule
\multirow{2}{*}{\textbf{Settings}}&\multicolumn{2}{|c}{\textbf{Image-Text Retrieval}}&\multicolumn{2}{|c}{\textbf{Text CLS.}}\\ 
\cline{2-5}
&ratio $r$& $Rsum$ &ratio $r$&acc\\
\midrule
  \multirow{3}{*}{\makecell{\ptome w/o\\ protecting tokens \\in  step 2}}&0.925&568.9&0.6&86.99\\
  &0.95&575.3&0.7&89.97\\ 
  &0.975&578.2&0.8&91.67\\
  \midrule
  \multirow{3}{*}{\makecell{\ptome using \\random split \\ in step 3}}&0.925&567.9&0.6&87.17\\ 
  &0.95&574.7&0.7&90.34\\
  &0.975&578.4&0.8&91.73\\
  \midrule
  \multirow{3}{*}{\ptome}& \cellcolor{cyan!30}0.925& \cellcolor{cyan!30}\textbf{573.4}& \cellcolor{cyan!30}0.6& \cellcolor{cyan!30}\textbf{89.20}\\ 
  & \cellcolor{cyan!30}0.95& \cellcolor{cyan!30}\textbf{577.8}& \cellcolor{cyan!30}0.7& \cellcolor{cyan!30}\textbf{91.47}\\
  & \cellcolor{cyan!30}0.975& \cellcolor{cyan!30}\textbf{580.1}& \cellcolor{cyan!30}0.8& \cellcolor{cyan!30}\textbf{93.26}\\
\bottomrule
\end{tabular}}
\end{minipage}
\hfill
\begin{minipage}{0.43\textwidth}
\vspace{-0.15in}
\captionof{table} {\textbf{Impact} of the constant $\alpha$ on the image-text retrieval task. Results are in recall sum; higher is better.}\label{tab:ablation_alpha}
\resizebox{0.9\textwidth}{!}{
\begin{tabular}{lccc}
\toprule
ration $r$     & $\alpha = 1.0$ & $\alpha = 0.5 $ & $\alpha = 0.$ \\
\midrule
0.85  & \textbf{519.98}         & 518.66          & 515.90         \\
0.875 & \textbf{545.90}          & 544.22          & 542.54        \\
0.90   & \textbf{562.82}         & 562.42          & 561.92        \\
0.925 & \textbf{571.88}         & 571.10           & 570.62   \\     
0.95  & \textbf{577.50}          & 577.43          & 577.40         \\
0.975 & \textbf{580.24}         & 579.82          & 579.76   \\
\bottomrule
\end{tabular}}
\end{minipage}
\vspace{-0.05in}
\begin{minipage}{{0.48\textwidth}}
        \centering
     \includegraphics[width=0.93\textwidth]{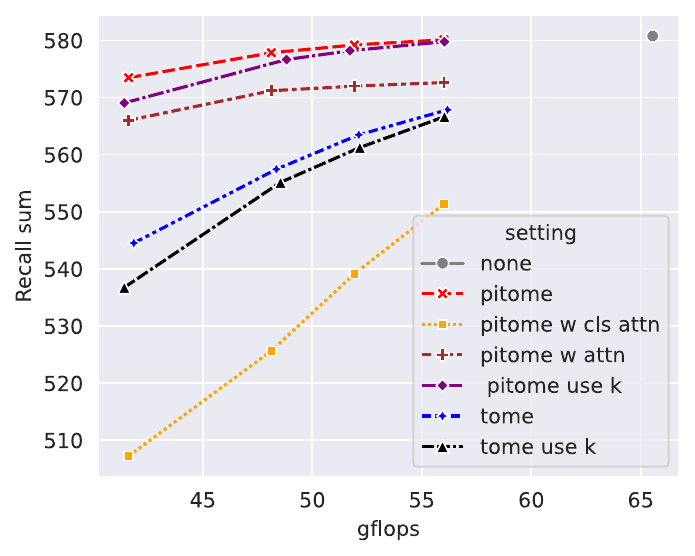}
     \vspace{-0.1in}
     \captionof{figure}{Ablation studies of \ptome.}
    \label{fig:ablation}
\end{minipage}
\hfill
\begin{minipage}{{0.44\textwidth}}
        \centering     
        \includegraphics[width=0.9\textwidth]{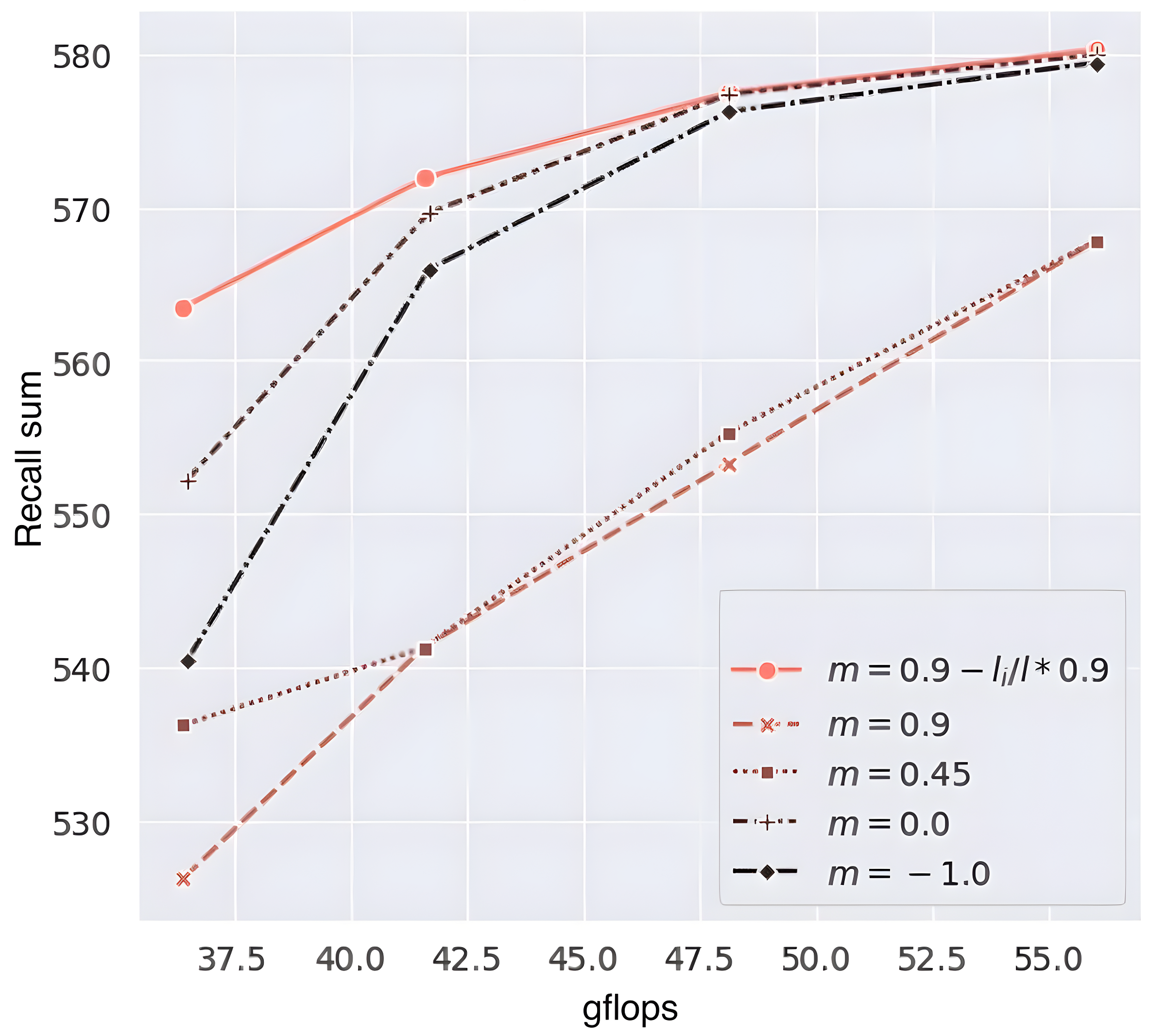}
     \vspace{-0.1in}
     \captionof{figure}{Ablation studies on adaptive margin $m$.}
    \label{fig:ablation_m}
\end{minipage}

\vspace{-0.1in}
\section{Conclusion}
\vspace{-0.15in}
This paper introduces \ptome, a novel algorithm that employs energy concepts to protect informative tokens during the token merging process. Our algorithm matches the efficiency of heuristic merging methods while maintaining a theoretical connection to the spectral properties of the input token space. In experiments on image classification, image-text retrieval, and VQA with LLaVA-1.5 7B/13B, \ptome consistently outperforms recent token merging and pruning methods, given the equivalent runtime and memory usage. 

\textbf{Limitations and Future Works}
Although our focus has been on tasks using ViT encoders for a variety of applications, we believe it is important to extend \ptome to generative tasks such as image generation (e.g., stable diffusion) or segmentation. This extension, however, necessitates the development of an \textit{unmerge mechanism} in the decoder, which remains an open question. Additionally, our energy score relies on a fully connected graph of input tokens, which can increase complexity as the input size grows. Constructing sparse graphs, therefore, might be beneficial for scaling in more challenging settings. Finally, designing a differentiable learning mechanism to optimize the reducing rate $r$ for token merging could enhance robustness and versatility across different downstream tasks.

\newpage
\subsection*{Acknowledgment}
The authors thank the International Max Planck Research
School for Intelligent Systems (IMPRS-IS) for supporting
Duy M. H. Nguyen. Duy M. H. Nguyen and Daniel Sonntag are also supported by the XAINES
project (BMBF, 01IW20005), No-IDLE project (BMBF,
01IW23002), and the Endowed Chair of Applied Artificial Intelligence, Oldenburg University.  Hoai-Chau Tran acknowledges the support from the AISIA Extensive Research Assistant Program 2023 (Batch 1) during this work and DFKI for supporting computing resources. TrungTin Nguyen acknowledges support from the Australian Research Council grant DP230100905. 
Ngan Le acknowledges funding support from the U.S. National Science Foundation (NSF) under Award No. OIA-1946391 and NSF EFRI BRAID 2223793. Binh T. Nguyen wants to thank the University of Science, Vietnam National University in Ho Chi Minh City for their support. Mathias Niepert acknowledges funding by Deutsche Forschungsgemeinschaft (DFG, German Research Foundation) under Germany’s Excellence Strategy - EXC and support by the Stuttgart Center for Simulation Science (SimTech). 


    

\addtocontents{toc}{\protect\setcounter{tocdepth}{2}}

\bibliography{references}

\appendix

\newpage
\begin{center}
{\bf \Large{Supplement to
``Accelerating Transformers with Spectrum-Preserving Token Merging''}}
\end{center}
\tableofcontents
\appendix
\begin{figure}[H]
    \centering
    \resizebox{0.8\textwidth}{!}{
    \input{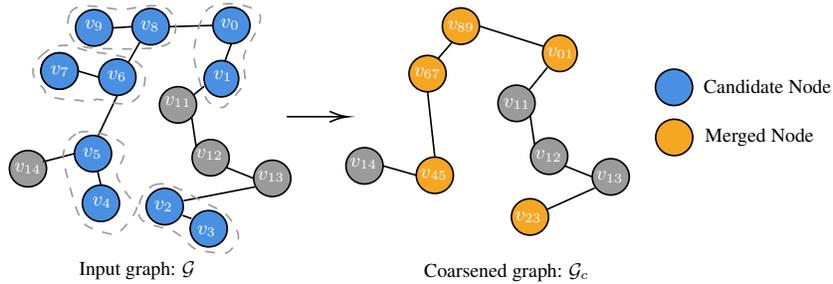}}
    \caption{Token merging outputs can be seen as coarsened graph from an input graph.}
    \label{fig:graph-coarsen}
\end{figure}

\section{Datasets Descriptions}\label{sec:datasets_details}
We present in Table \ref{tab:all_datasets} datasets used in our experiment. It is important to note that \ptome was run off-the-shelf on large datasets in VQA tasks such as VQA-v2 with $~447.8k$ sample or GQQ with $12.5$ questions, validating generalization and robustness of our merging mechanism.

\begin{table}[H]

\caption{Brief statistic of all datasets used in this paper.}\label{tab:all_datasets}
\centering
 \resizebox{\textwidth}{!}{
 \setlength{\tabcolsep}{1.8pt}
    \begin{tabular}{l|p{2cm}|p{2cm}|l|l}
    \toprule
    \textbf{Task} & \textbf{Name} & \textbf{Modality}& \textbf{No. Train}& \textbf{No. Test}\\
    \midrule
    \multirow{2}{*}{Image-Text retrieval}&Flickr30k \citep{FLickr30k} & Vision, Text &29k images, each with 5 captions & 1k images, each with 5 captions\\
    &MS-COCO\citep{MSCOCO}& Vision, Text &approx 118k images, each with 5 captions  & 5k images, each with 5 captions\\
    \midrule
    \multirow{6}{*}{\makecell{Visual Question \\Answering}}&VQA-v2\citep{vqa_v2} & Vision, Text & approx 443.7k questions& approx 447.8k questions\\
    &GQA \citep{gqa}& Vision, Text & approx 72k images with 943k questions&approx 10.2k images with 12.5k questions\\
    &VisWiz \citep{vizwiz}& Vision, Text & 8k image and questions & approx 4.32k image and questions\\
    &TextVQA\citep{textvqa}& Vision, Text & approx 28.4k images with 34.6k questions & approx 3.1k images with 4.2k questions\\
    &MME \citep{mme}& Vision, Text &- & 2375 images and questions\\
    &ScienceQA\citep{scienceqa}& Vision, Text & approx 4.11k image and questions & approx 2.11k image and questions\\
    \midrule
    \multirow{2}{*}{Text Classification}&IMDb \citep{imdb}& Text & 25k movie reviews.&25k movie reviews.\\
    &SST-2 \citep{sst2}& Text & 67,349 sentences & 872 sentences\\
    \midrule
    Image Classification&Imagenet-1k\citep{imagenet}& Vision&Approximately approx 1.28 million images &50k images (50 images per class)\\
    \midrule
  \end{tabular} 
}
 
\end{table}

\section{\ptome Algorithm}
\phantomsection
\subsection{Pseudo-Code Implementation}
The pseudo-code for our method is provided in Algorithm \ref{algo:pitome}. Here $\textbf{E}$ in line 3 is a vector that represents energy scores of all nodes calculated by the energy function \ref{equa:energy}. The final output is a $\mathrm{MERGE(.)}$ function, which serves as a lambda function that can be applied to any matrix $\mathbf{\hat{X}}^l$ at the $l$-th layer. The vector $\textbf{m}$ is a vector that contains information about token sizes (i.e, the number of tokens being merged into each token).
\vspace{-0.2in}
\begin{figure}[!ht]
    \centering
    \begin{minipage}[ht]{0.9\textwidth}
    \scalebox{1.0}{
        \begin{minipage}{\linewidth}
                    \begin{algorithm}[H]
            \caption{\ptome Algorithm}
            \label{algo:pitome}
            \begin{algorithmic}[1]
                \Function{pitome}{$\text{remain token ratio: } r, \text{input graph: } \mathcal{G}(\mathcal{V}, \mathcal{E})$} \Comment{Function to prepare for merging}
                    \State $k \gets N - N \cdot r$ \Comment{Compute number of nodes to merge}
                    \State $\textbf{s} \gets \text{argsort}(\bfE, \text{descending=True})$ \Comment{Compute energy scores}
                    \State $\mathrm{\textit{merge}},\,\mathrm{\textit{protect}} \gets \textbf{s}[:2 \cdot k], \textbf{s}[2 \cdot k:]$ \Comment{Identify mergeable and protected nodes}
                    \State $\textbf{n}_{a}, \textbf{n}_{b} \gets \mathrm{\textit{merge}}[::2], \mathrm{\textit{merge}}[1::2]$ \Comment{Split mergeable nodes}
                    \State $\mathcal{E}_{merge} \gets \mathcal{E}[\textbf{n}_a][\textbf{n}_b]$ \Comment{Get edge weights of mergeable nodes}
                    \State $\textbf{n}_{dst} \gets \text{argmax}(\mathcal{E}_{merge})$ \Comment{Find closest neighbors}
                    \Function{Merge}{$\textbf{X}$} \Comment{Function to perform merging}
                        \State $\textbf{X}_{protected} \gets \textbf{X}[\mathrm{\textit{protect}},:]$ \Comment{Extract protected tokens}
                        \State $\textbf{X}_{\mathcal{A}}, \textbf{X}_{\mathcal{B}} \gets \textbf{X}[\textbf{n}_{a}, :], \textbf{X}[\textbf{n}_{b}, :]$ \Comment{Extract tokens in set $\mathcal{A}$ and $\mathcal{B}$}
                        \State $\textbf{X}_{\mathcal{A}}, \textbf{X}_{\mathcal{B}} \gets \textbf{X}_{\mathcal{A}} \times \textbf{m}[\textbf{n}_{a}], \textbf{X}_{\mathcal{B}} \times \textbf{m}[\textbf{n}_{b}]$ \Comment{Weighted average}
                        \State $\textbf{X}_{\mathcal{B}} \gets \textbf{X}_{\mathcal{B}}.\text{scatter\_reduce}(\textbf{n}_{dst}, \textbf{X}_{\mathcal{A}}, \text{mode}=\text{"sum"})$ \Comment{Merge tokens}
                        \State $\textbf{X}_{\mathcal{B}} \gets \textbf{X}_{\mathcal{B}}/\textbf{m}_{\mathcal{B}}.\text{scatter\_reduce}(\textbf{n}_{dst}, \textbf{m}_{\mathcal{A}}, \text{mode}=\text{"sum"})$ \Comment{Weighted average}
                        \State \textbf{return} $\text{cat}(\textbf{X}_{protected}, \textbf{X}_{\mathcal{B}})$ \Comment{Concatenate and return merged tokens}
                    \EndFunction
                    \State \textbf{return} MERGE \Comment{Return merging lambda function}
                \EndFunction
            \end{algorithmic}
        \end{algorithm}
        \end{minipage}}
    \end{minipage}%
    \hspace{-0.1in}
    \begin{minipage}[ht]{0.0\textwidth}
    \end{minipage}
\end{figure}

\subsection{\ptome Complexity Analysis}
In algorithm \ref{algo:pitome}, in line $3$, the weighted graph is constructed through matrix multiplication, leading to a complexity of $\mathcal{O}\left(N^2h\right)$, where $h$ is the dimension of input vectors. Next, the computed energy scores are sorted, which have the complexity of $\mathcal{O}\left(N\log(N)\right)$. Lastly, in line $7$, the max operator for selecting the merge destination and the lambda $MERGE$ function, which performs tensor operations based on computed indices, lead to linear complexity. Combining these aspects, the overall time complexity of the \ptome function can be approximated as $\mathcal{O}\left(N^2h\right)$, considering the dominant factors contributing to computational cost. However, actual performance may vary depending on the specific PyTorch version and hardware utilization, with optimizations potentially altering these estimates. 

 In the BSM algorithms used by ToMe, after the bipartition of tokens into two sets $\mathcal{A}$ and $\mathcal{B}$ using odd and even indices, each set has $N/2$ tokens, it also requires calculating a similarity matrix between tokens in these two sets.   This operation also has the complexity of $\mathcal{O}\left(N^2h\right)$. The similarity matrix is then sorted to get the top $k$ tokens in set $\mathcal{A}$ with the highest similarity score used for merging, this operation also has the time complexity of $\mathcal{O}\left(N\log(N)\right)$, and the merging procedure is the same as our \ptome algorithm. So basically when compared to BSM, our algorithms have the same time complexity of $\mathcal{O}\left(N^2h\right)$, although in practice the speed of ToMe is a little bit faster than \ptome (a few milliseconds), but our algorithms give a much better trade-off between speed and accuracy. For more details about the BSM algorithms please refer to the ToMe paper\cite{tome}.
\subsection{Model complexity analysis}
In a standard Transformer layer, we have the time and space complexity to be $\mathcal{O}\left(N^2h + Nh^2\right)$ and $\mathcal{O}\left(N^2 + Nh\right)$. In each layer $ith$ of the model, we compress the number of tokens down to $rN$ using \ptome algorithms with the complexity of $\mathcal{O}\left((r^{i-1}N)^2h\right)$, so the layer $i$  will enjoy the time and space complexity of $\mathcal{O}\left((r^{i-1}N)^2h + r^iNh^2 + (r^{i-1}N)^2h\right)$ and $\mathcal{O}\left((r^{i-1}N)^2 + r^{i}Nh\right)$ \respn 
 Vision language models like LLaVA directly use output tokens from ViT encoders.  Let $l$ be the number of layers in the ViT encoder since we utilize \ptome in each layer in the ViT encoder part; the LLM model will have the time and space complexity of $\mathcal{O}\left((r^{l}N_{\mathrm{ViT}}+N_{\mathrm{LLM}})^2h + (r^{l}N_{\mathrm{ViT}}+N_{\mathrm{LLM}})h^2\right)$
and $\mathcal{O}\left((r^{l}N_{\mathrm{ViT}}+N_{\mathrm{LLM}})^2 + (r^{l}N_{\mathrm{ViT}}+N_{\mathrm{LLM}})h\right)$ which boosts the inference speed and saves a high amount of memory usage. The speedup could get even more impressive when used with higher batch size and image size. This also applies to models that use cross-attention modules for image text matching like BLIP and ALBEF, in which most of the computation expense comes from tokens encoded by the ViT model, the time and space complexity for the cross attention layers are $\mathcal{O}\left(r^{l}N_{\mathrm{vision}}N_{\mathrm{text}}h + N_{\mathrm{text}}h^2\right)$
and $\mathcal{O}\left(r^{l}N_{\mathrm{vision}}N_{\mathrm{text}} + N_{\mathrm{text}}h\right)$ \respn

\section{Performance of ToMe with Different Token Merging Schedules}\label{sec:merging-schedule}

In the original ToMe paper, the authors proposed a merging schedule that involves reducing tokens in each layer by a fixed $k$ tokens per layer. However, as illustrated in Figure \ref{fig:ablation}, we showed that this merging schedule is suboptimal for off-the-shell performance, and this section provides empirical results to confirm this claim.

In this experiment, we exclusively compare two versions of the BSM algorithms utilized in the ToMe paper: one that preserves a percentage $r$ of tokens in each layer and another that reduces a fixed $k$ tokens in each layer. However, for a more comprehensive comparison, we extensively apply these algorithms across 6 ViT backbones (DeiT-T, DeiT-S, DeiT-B, MAE-B, MAE-L, MAE-H) for image classification tasks using the Imagenet-1k dataset, as well as across 4 backbones for image-text retrieval tasks (CLIP-B, CLIP-L, BLIP, BLIP2) on FLickr30k and MSCOCO dataset.
\begin{figure}[t]
\centering
 \begin{minipage}[ht]{.5\linewidth}
    \centering
    \includegraphics[width=\textwidth]{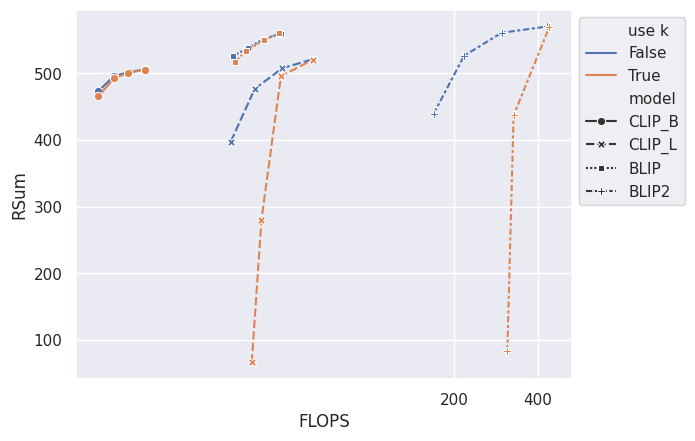}
    \subcaption{MSCOCO}
    \label{fig:coco_use_k_image_text_retrieval}
  \end{minipage}%
  \begin{minipage}[ht]{.5\linewidth}
    \centering
    \includegraphics[width=\textwidth]{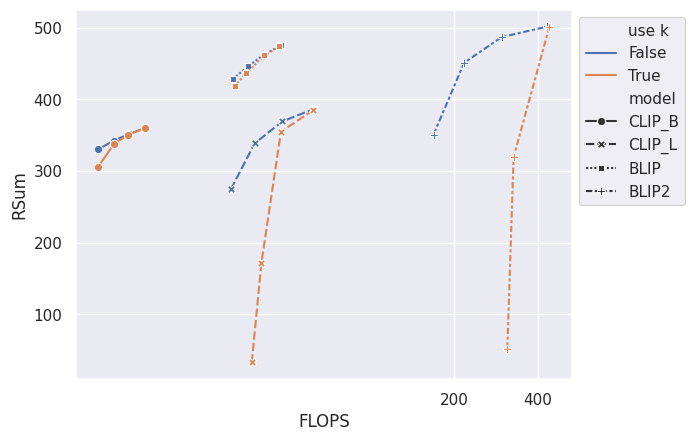}
    \subcaption{Flickr30k}
     \label{fig:flickr_use_k_image_text_retrieval}
  \end{minipage}
  \caption{\hlc[cyan!30]{\textbf{Off-the-shelf}} performance of all backbones for \textbf{image-text retrieval} task using different token merging schedules.}
\end{figure}

\begin{figure}[t]
\centering
 \begin{minipage}[t]{.48\linewidth}
    \centering
    \includegraphics[width=\textwidth]{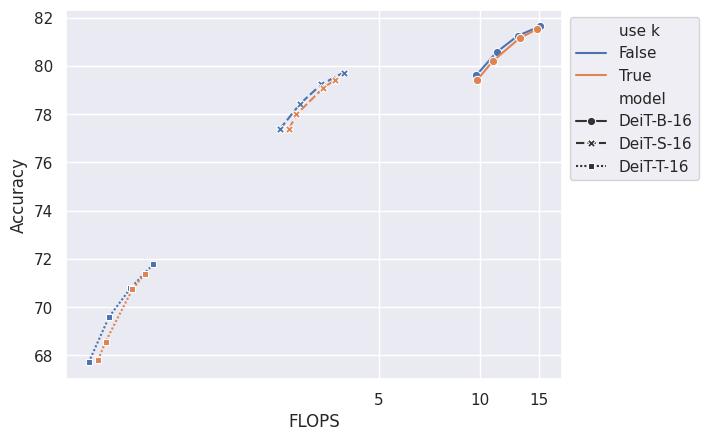}
    \subcaption{DEIT backbones models}
    \label{fig:coco_use_k}
  \end{minipage}%
  \begin{minipage}[t]{.48\linewidth}
    \centering
    \includegraphics[width=\textwidth]{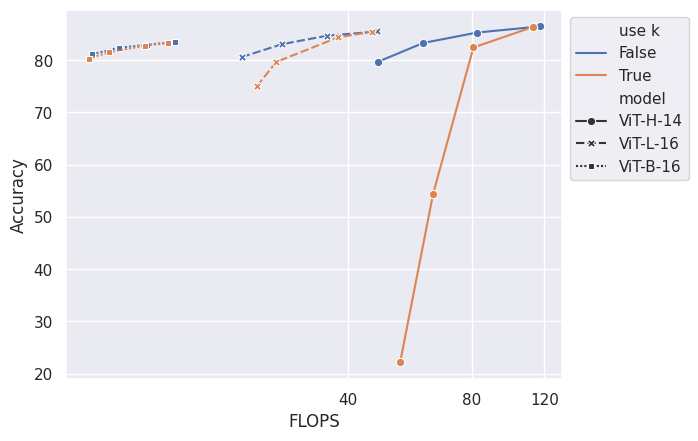}
    \subcaption{MAE backbone models}
     \label{fig:flickr_use_k}
  \end{minipage}
  \caption{\hlc[cyan!30]{\textbf{Off-the-shell}} performance of all backbones for \textbf{image classification} task using different token merging schedules. }
\end{figure}
From Figures \ref{fig:flickr_use_k}, \ref{fig:coco_use_k}, \ref{fig:flickr_use_k_image_text_retrieval}, and \ref{fig:coco_use_k_image_text_retrieval}, it is evident that, given the same FLOPS, the BSM version that uses the remaining percent $r$ shows a clear advantage, outperforming the original schedule by a large margin. This gap becomes even more pronounced when benchmarked on large models like MAE-L, MAE-H, and BLIP-2. During the experiment, we also observed that models compressed using the ratio $r$ tend to run a little faster since a large number of tokens are removed in earlier layers.
\vspace{-0.1in}
\section{Additional Experiments on Text Classification Task.}\label{sec:text-appendix}
This section comprehensively assesses the performance of \ptome against other baseline algorithms for sentiment text classification tasks. We conducted this experiment using the IMDb and SST-2, applying the compression algorithms to the first three layers of the model. To ensure consistency in our experimental results, we used two different backbones: BERT with 12 layers and DistilBERT with 6 layers. Since we only compress the first three layers, in addition to the baselines previously used throughout this paper, we introduced two additional baselines: ToFu-p and ToFu-m. ToFu-p uses BSM to prune tokens instead of merging them, while ToFu-m represents the original ToFu algorithm without pruning.
\begin{table}[!ht]
\centering
\caption{\textbf{Performance} of \ptome versus baselines algorithms when training \textbf{BERT and DistiledBERT} \textbf{when \scolorbox{gray!15}{retrained}  from scratch}.}
\hspace{0.05in}
 \resizebox{\textwidth}{!}{
\begin{tabular}{l|l|l
|cccc|cccc|cccc}
\toprule
\multirow{3}{*}{\textbf{Dataset}} & \multirow{3}{*}{\textbf{Model}} & \multirow{3}{*}{\makecell{\textbf{Compress}\\\textbf{method}}} & \multicolumn{4}{c|}{$r=0.8$} & \multicolumn{4}{c|}{$r=0.75$} & \multicolumn{4}{c}{$r=0.7$}\\ 
\cline{4-15}
& & & \textbf{accuracy} & \textbf{flops} &  \makecell{\textbf{eval}\\ \textbf{speed}} & \makecell{\textbf{train}\\ \textbf{speed}} & \textbf{accuracy} & \textbf{flops} & \makecell{\textbf{eval}\\ \textbf{speed}} &\makecell{\textbf{train} \\\textbf{speed}}& \textbf{accuracy} & \textbf{flops} & \makecell{\textbf{eval}\\\textbf{speed}} & \makecell{\textbf{train} \\\textbf{speed}} \\
\midrule
\multirow{10}{*}{SST-2} &\multirow{5}{*}{BERT} 
    & ToMe & 91.25 (-0.20) & x1.88 & x1.14 &x1.40 & 89.12 (-2.33) & x2.27& x1.22 & x1.56 & 88.00 (-3.45)& x2.72 & x1.30 & x1.68\\
 & & ToFu & 89.82 (-1.83)& x1.88 & x1.15 & x1.40&88.64 (-2.81)& x2.27 & x1.22 & x1.56 & 87.22 (-4.23)& x2.72 & x1.29 & x1.68\\
 & & DCT& 90.66 (-0.79)& x1.81 & x1.05 & x1.12 &89.31 (-2.14)& x2.13 & x1.12 & x1.43 &87.76 (-3.69)& x2.72 & x1.19 & x1.61\\
 & & DiffRate&89.72 (-1.73)&x1.88& x1.14 & x1.40 &87.96 (-3.49)&x2.27 & x1.21 & x1.55 & 87.64 (-3.81)& x2.72 & x1.22 & x1.68\\
 & & \cellcolor{gray!15} \textbf{\ptome}&\cellcolor{gray!15}\textbf{91.72 (+0.27)}&\cellcolor{gray!15}\textbf{x1.88}& \cellcolor{gray!15}\textbf{x1.19} & \cellcolor{gray!15}x1.39 &\cellcolor{gray!15}\textbf{90.28 (-1.17)}&\cellcolor{gray!15}\textbf{x2.27}& \cellcolor{gray!15}\textbf{x1.23} & \cellcolor{gray!15}x1.55 &\cellcolor{gray!15}\textbf{88.67 (-2.78)}&\cellcolor{gray!15}\textbf{x2.72}& \cellcolor{gray!15}x1.25 & \cellcolor{gray!15}x1.65 \\
\cline{2-15}
& \multirow{5}{*}{DistiledBERT} 
    & ToMe & 89.64 (-1.74) & x1.61 & x1.01 & x1.28 & 88.56 (-2.82)& x1.88 & x1.03 & x1.35 & 88.64 (-2.74)& x2.26 & x1.07 & x1.46 \\
 & & ToFu & 89.92 (-1.46) & x1.61 & x1.02 & x1.27 &88.85 (-2.53)& x1.88 & x1.04 & 
 x1.35 &88.76 (-2.62) & x2.26 & x1.07 & x1.46\\
 & & DCT& 89.59 (-1.79) & x1.51& x0.95 & x1.16 &88.08 (-3.30)& x1.61 & x1.00 & 
 x1.22 &87.70 (-3.68)& x1.88 & x1.02& x1.32\\
 & & DiffRate& 89.65 (-1.73) & x1.61 & x1.01 & x1.26& 89.05 (-2.33)& x1.88 & x1.02 & x1.33 & 87.88 (-3.50)& x2.26 & x1.07& x1.42\\
  & & \cellcolor{gray!15} \textbf{\ptome}& \cellcolor{gray!15}\textbf{90.31 (-1.07)} & \cellcolor{gray!15}\textbf{x1.61} & \cellcolor{gray!15}x1.02& \cellcolor{gray!15}x1.25 & \cellcolor{gray!15}\textbf{89.28 (-2.10)} & \cellcolor{gray!15}\textbf{x1.88} & \cellcolor{gray!15}\textbf{x1.05} & \cellcolor{gray!15} x1.34 & \cellcolor{gray!15}\textbf{88.89 (-2.49)} & \cellcolor{gray!15}\textbf{x2.26}  & \cellcolor{gray!15}\textbf{x1.10} & \cellcolor{gray!15}x1.41\\
 \midrule
\multirow{10}{*}{IMDb}& \multirow{5}{*}{BERT} 
    & ToMe & 93.25 (-0.75) & x1.91& x1.68 & x1.86 & 93.01 (-0.99) & x2.30 & x1.88 & x2.06 & 92.33 (-1.67)& x2.77& x2.10 & x2.36\\
 & & ToFu & 93.36 (-0.64)& x1.92 & x1.68 &x1.86&92.99 (-1.01)& x2.30 & x1.89& x2.06 & 92.34 (-1.66)&x2.77 &x2.09& x2.34\\
 & & DCT& 92.39 (-1.61)& x1.90 & x1.60& x1.79& 92.22 (-1.78)& x2.30 & x1.75 &x1.94 & 91.31 (-2.69)&x2.77& x1.97& x2.25\\
 & & DiffRate& 92.96 (-1.04)& x1.91 & x1.68& x1.86 & 92.53 (-1.47)& x2.30 & x1.86 & x2.03 & 92.10 (-1.90)& x2.77 & x2.08& x2.33 \\
 & & \cellcolor{gray!15} \textbf{\ptome}& \cellcolor{gray!15}\textbf{93.52 (-0.48)} & \cellcolor{gray!15}\textbf{x1.91} & \cellcolor{gray!15} x1.66& \cellcolor{gray!15}x1.84 &\cellcolor{gray!15} \textbf{93.27 (-0.73)} & \cellcolor{gray!15}\textbf{x2.30} & \cellcolor{gray!15}x1.84 & \cellcolor{gray!15}x2.01 & \cellcolor{gray!15}\textbf{92.74 (-1.26)} & \cellcolor{gray!15}\textbf{x2.77}  & \cellcolor{gray!15}x2.08 & \cellcolor{gray!15}x2.31\\
\cline{2-15}
& \multirow{5}{*}{DistiledBERT}
    & ToMe & 92.45 (-0.55) & x1.83 & x1.47 & x1.57 & 92.34 (-0.66)& x2.15 & x1.57 & x1.64 & 91.86 (-1.14)& x2.53 & x1.69 & x1.86\\
 &  & ToFu & 92.55 (-0.45)& x1.83 & x1.47 & x1.57 & 92.33 (-0.67)& x2.15& x1.57 &  
x1.64& 91.91 (-1.09)& x2.53 & x1.69& x1.85\\
 & & DCT & 92.38 (-0.62)& x1.61 &x1.39 & x1.50 & 91.87 (-1.13)& x1.79& x1.48 & x1.62 & 91.69 (-1.31)& x2.19 & x1.56 & x1.77 \\
 &  & DiffRate&92.43 (-0.57) & x1.83 & x1.44 & x1.55 & 92.16 (-0.84)& x2.15 & x1.57 & x1.62& 91.78 (-1.22)& x2.53&x1.63&x1.84\\
 & & \cellcolor{gray!15}\textbf{\ptome}& \cellcolor{gray!15}\textbf{92.71 (-0.29)} & \cellcolor{gray!15}\textbf{x1.83} & \cellcolor{gray!15}x1.43& \cellcolor{gray!15}x1.54 & \cellcolor{gray!15}\textbf{92.55 (-0.45)} & \cellcolor{gray!15}\textbf{x2.15} & \cellcolor{gray!15}x1.54 & \cellcolor{gray!15}x1.61& \cellcolor{gray!15}\textbf{92.06 (-0.94)} & \cellcolor{gray!15}\textbf{x2.53} & \cellcolor{gray!15}x1.62& \cellcolor{gray!15}x1.83 \\
\bottomrule
\end{tabular}}\label{tab:train_text}
\end{table}
\begin{figure}[b]
\centering
\includegraphics[width=\textwidth]{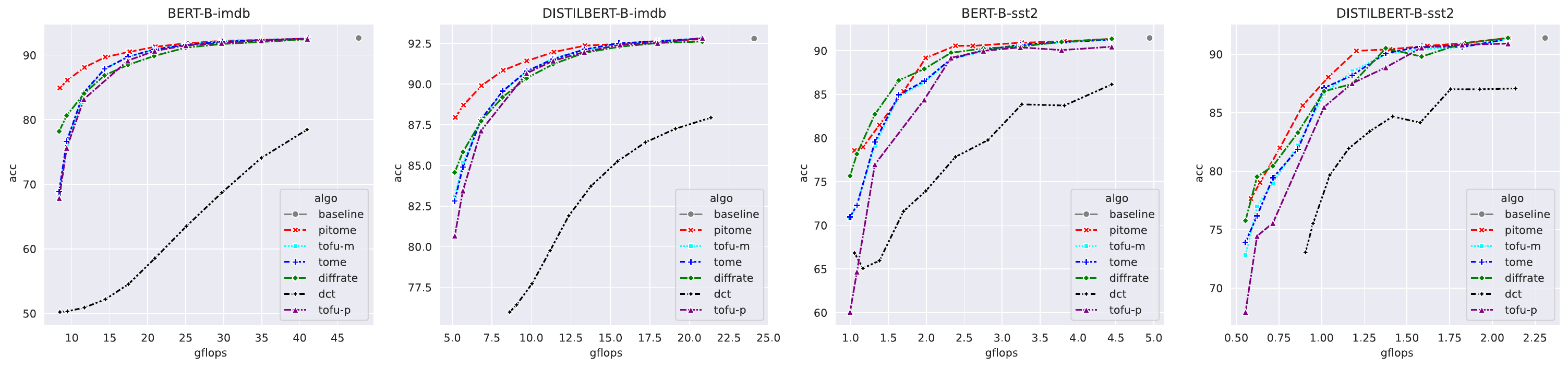}  
\captionof{figure}{\hlc[cyan!30]{\textbf{Off-the-shelf}} performance of various algorithms on the \textbf{text classification task}.
}
\label{fig:text_ots}
\end{figure}

As demonstrated in Table \ref{tab:train_text} and Figure \ref{fig:text_ots}, our findings align with empirical results from previous tasks, indicating that \ptome consistently achieves superior performance compared to other baselines. Particularly noteworthy is the performance on the IMDb dataset with a large context length, where even after reducing FLOPs by 80\%, models compressed by \ptome still maintain off-the-shelf accuracy above 85\%, while other baseline algorithms see the off-the-shelf accuracy drop below 70\%. Furthermore, Table \ref{tab:train_text} highlights that our algorithm can also facilitate better model learning compared to other methods, achieving high accuracy that closely approaches that of the original model.

\section{Proof of Theorem \ref{theorem_spectrum_consistent_PiToME}}\label{section_proof_theorem_spectrum_consistent_PiToME}
\subsection{Sketch of Proof}

The proof sketch for Theorem \ref{theorem_spectrum_consistent_PiToME} begins by defining coarsened and lifted versions of the original graph $\cG$ (see Definitions \ref{def_Gc} and \ref{def_Gl}) using the \ptome and ToMe algorithms. The goal is to demonstrate that the spectral distance between $\cG$ and its \ptome-coarsened counterpart converges to zero, whereas the distance for ToMe remains bounded away from zero. The sketch proceeds by introducing Propositions \ref{proposition_spectrum_consistent_PiToME_ToMe_convergence}, \ref{proposition_spectrum_consistent_PiToME}, and \ref{proposition_spectrum_consistent}, which build toward the main result:
\begin{enumerate}
    \item Proposition \ref{proposition_spectrum_consistent_PiToME_ToMe_convergence} establishes upper bounds on the edge weight differences between merged nodes under the \ptome and ToMe methods.

    \item Proposition \ref{proposition_spectrum_consistent_PiToME} employs standard mild assumptions to relate the cosine similarity among nodes within clusters, demonstrating that the upper bound error $\epsilon\ps_\text{\ptome}$ between the edge weights of merged nodes in \ptome-coarsened graphs $\cG_{\text{\ptome}}^{(n)}$ converges to $0$, whereas this property does not hold for the ToMe-coarsened graphs $\cG_{\text{ToMe}}^{(n)}$.

    \item Proposition \ref{proposition_spectrum_consistent} completes the sketch by bounding the spectral distances of the coarsened graphs in terms of the edge weight differences from the earlier Propositions \ref{proposition_spectrum_consistent_PiToME_ToMe_convergence} and \ref{proposition_spectrum_consistent_PiToME}.
\end{enumerate}
Each proposition is proven in detail within the appendix sections following this sketch. Note that Theorem \ref{theorem_spectrum_consistent_PiToME} immediately follows the following Propositions \ref{proposition_spectrum_consistent_PiToME_ToMe_convergence}, \ref{proposition_spectrum_consistent_PiToME}, and \ref{proposition_spectrum_consistent}, which are proved \resp in Sections \ref{sec_lemma_spectrum_consistent_PiToME_ToMe_convergence_proof}, \ref{sec_lemma_spectrum_consistent_PiToME_proof}, and \ref{sec_lemma_spectrum_consistent_proof}.


\begin{proposition}\label{proposition_spectrum_consistent_PiToME_ToMe_convergence}
     Suppose the graphs $\cG\ps_0$, $\cG_{\text{\ptome}}\ps$, and $\cG_{\text{ToMe}}\ps$ are coarsened from the original graph $\cG$ by iteratively merging pairs of nodes $v_{a_s}$ and $v_{b_s}$ \wrt the true partition $\cP\ps_{0}=\{\cV\ps_{0i}\}_{i \in [s]}$, the \ptome-partition $\cP\ps_{\text{\ptome}}=\{\cV\ps_{\text{\ptome}i}\}_{i\in[s]}$, defined by \ptome Algorithm \ref{algo:pitome}, and the ToMe-partition~\citep{tome,kim2024token}, $\cP\ps_{\text{ToMe}}=\{\cV\ps_{\text{ToMe}i}\}_{i\in[s]}$, for $s = N, \ldots,n+1$.
      We assume the following standard mild assumption:
     \begin{itemize}

         \item[(A2).] There exists a margin $m$ \st
         \begin{align}\label{eq_A2_appendix}
             \cos(v_{a_s}, v_{b_s}) \ge m > \cos(v_{a_s}, v_{c_s}), \quad \forall v_{a_s} \in \cV^{(s)}_{0i}, \forall v_{b_s} \in \cV^{(s)}_{0i}, \forall v_{c_s} \in \cV^{(s)}_{0j}, \forall i \neq j \in [s].
         \end{align}
     \end{itemize}
    Then, the edge weights of merged nodes from \ptome Algorithm \ref{algo:pitome} and the ToMe-partition~\citep{tome,kim2024token} satisfy
    \begin{align}\label{eq_lemma_spectrum_consistent_PiToME_ToMe_convergence}
        \|\bfW[a_s,:]-\bfW[b_s,:]\|_1 \le \epsilon\ps
    \end{align}
     for some nonnegative upper bounds $\epsilon\ps$, $s = N,N-1,\ldots,n+1$ defined as follows:
     \begin{align}\label{eq_epsilon_s_define}
     \epsilon\ps
     = \begin{cases}
          2(1-\cos(v_{a_s},v_{b_s}))& \text{ if } v_{a_s}\in \cV\ps_{0i}, v_{b_s} \in \cV\ps_{0i}, \forall i \in [s],\\
         3(1-\beta)& \text{ if } v_{a_s} \in \cV\ps_{0i}, v_{b_s} \in \cV\ps_{0j}, i\neq j \in [s].
     \end{cases}.
\end{align}
\end{proposition}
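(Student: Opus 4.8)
\textbf{Proof plan for Proposition \ref{proposition_spectrum_consistent_PiToME_ToMe_convergence}.}

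The plan is to bound $\|\bfW[a_s,:]-\bfW[b_s,:]\|_1 = \sum_{k} |\bfW[a_s,k] - \bfW[b_s,k]|$ by controlling, entry by entry, the difference of cosine-distance weights $\bfW[a_s,k] = 1 - \cos(v_{a_s},v_k)$ and $\bfW[b_s,k] = 1 - \cos(v_{b_s},v_k)$, so that the per-entry gap is $|\cos(v_{a_s},v_k) - \cos(v_{b_s},v_k)|$. I will split into the two cases appearing in \eqref{eq_epsilon_s_define}. \emph{Case 1 (intra-cluster merge):} both merged nodes $v_{a_s}, v_{b_s}$ lie in the same true part $\cV\ps_{0i}$. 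Here the key geometric fact is that when two unit-normalized feature vectors $v_{a_s}, v_{b_s}$ satisfy $\cos(v_{a_s},v_{b_s}) = 1-\delta$ (so they are $\sqrt{2\delta}$ apart in Euclidean norm up to scaling), then for \emph{any} third node $v_k$, $|\cos(v_{a_s},v_k) - \cos(v_{b_s},v_k)| \le \|\hat v_{a_s} - \hat v_{b_s}\| \le \sqrt{2(1-\cos(v_{a_s},v_{b_s}))} \le 2(1-\cos(v_{a_s},v_{b_s}))$ for the relevant range, by Cauchy--Schwarz applied to $\cos(v_{a_s},v_k) - \cos(v_{b_s},v_k) = \langle \hat v_{a_s} - \hat v_{b_s}, \hat v_k\rangle$. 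Summing over all $k$ and accounting for the self-entries $k=a_s, k=b_s$ (where one of the two weights is $0$) gives the claimed bound $2(1-\cos(v_{a_s},v_{b_s}))$; I may need to absorb the self-entries and a normalizing constant into the factor of $2$, or to argue that in the averaged adjacency the diagonal contribution is negligible — this bookkeeping is the routine part.

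\emph{Case 2 (cross-cluster merge):} $v_{a_s}\in\cV\ps_{0i}$, $v_{b_s}\in\cV\ps_{0j}$ with $i\neq j$. Here $v_{a_s}$ and $v_{b_s}$ need not be close, so I cannot use the Case 1 argument directly. Instead I will use assumption (A2): for any $v_k$, if $v_k$ is in cluster $i$ then $\cos(v_{a_s},v_k)\ge m$ while $\cos(v_{b_s},v_k) < m$, and symmetrically if $v_k\in\cV\ps_{0j}$; if $v_k$ is in neither, both cosines are $< m$. The strategy is to show each entry $|\cos(v_{a_s},v_k)-\cos(v_{b_s},v_k)|$ is at most $1-\beta$ where $\beta$ is the relevant lower bound on in-cluster cosine similarity guaranteed by (A2) (i.e., $\beta = m$ or a quantity controlled by $m$), because all cosines lie in an interval of width $\le 1-\beta$ once we use that in-cluster similarities are $\ge m$ and cross-cluster ones, while possibly negative, are still bounded below in a way that makes the \emph{difference} small — this is where I will need to be careful: a crude bound only gives $|\cos - \cos| \le 2$. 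To get $3(1-\beta)$ I expect to argue that the contribution splits as (i) entries $k$ in cluster $i\cup j$, bounded using the margin structure, and (ii) entries $k$ outside, where both cosines are close to each other because both are close to the ``background'' similarity level; the constant $3$ then comes from summing at most three regimes each contributing $\le 1-\beta$.

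The main obstacle I anticipate is Case 2: making precise what $\beta$ is and why the cross-cluster weight differences collapse to $O(1-\beta)$ rather than $O(1)$. This requires leveraging (A2) more quantitatively than just ``there exists a margin'' — presumably the intended reading is that $\beta$ is the infimum of in-cluster cosine similarities, so $1-\beta$ is small when clusters are tight, and the cross-cluster cosines, being below $m\le\beta$ but also (implicitly, via the discriminativeness assumption used elsewhere in the paper) bounded within a band, differ from one another by at most a comparable amount. I would state the precise hypothesis on $\beta$ at the start of Case 2 (consistent with how it is used later in Proposition \ref{proposition_spectrum_consistent_PiToME}), then verify the entrywise bound on each of the three index-regimes and sum. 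Once both cases are in hand, \eqref{eq_lemma_spectrum_consistent_PiToME_ToMe_convergence} follows with $\epsilon\ps$ as in \eqref{eq_epsilon_s_define}, and nonnegativity of $\epsilon\ps$ is immediate since $\cos(v_{a_s},v_{b_s})\le 1$ and $\beta\le 1$.
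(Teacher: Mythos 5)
There is a genuine gap, and it is in both cases, not only Case~2. The bound $\epsilon\ps$ in \eqref{eq_epsilon_s_define} does not depend on $N$, but your strategy sums $N$ entrywise differences $|\cos(v_{a_s},v_k)-\cos(v_{b_s},v_k)|$, each of which you control by Cauchy--Schwarz with $\|\hat v_{a_s}-\hat v_{b_s}\|=\sqrt{2(1-\cos(v_{a_s},v_{b_s}))}$. Summing those gives $O\bigl(N\sqrt{1-\cos(v_{a_s},v_{b_s})}\bigr)$, which is neither $N$-free nor of the right order in the gap (and, separately, the step $\sqrt{2\delta}\le 2\delta$ is false precisely in the relevant regime $\delta\to 0$). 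No per-entry estimate on the raw cosines can rescue this: you need almost all of the $N$ terms to vanish identically.

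What the paper actually does, and what you are missing, is a modelling simplification tied to the \ptome energy function $f_m$, not to the raw cosine. The proof is written for a single merge step ($n=N-1$, so each true part $\cV_{0i}$ has at most two elements), and it declares that whenever $v_a,v_b$ lie in different true parts the effective similarity is replaced by the constant $\beta:=\sup_{v_a\in\cV_{0i},\,v_b\in\cV_{0j},\,i\neq j}\alpha(\exp(\cos(v_a,v_b)-m)-1)<0$ coming from the ``otherwise'' branch of $f_m$. With that replacement, for any $k$ outside the (at most two, size-$\le 2$) clusters containing $a_s$ and $b_s$ you have $\bfW[a_s,k]=\bfW[b_s,k]=1-\beta$, so those terms cancel exactly, and the $\ell_1$ norm reduces to a sum of at most three surviving entries. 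Case~1 then gives $|W[a,a]-W[b,a]|+|W[a,b]-W[b,b]|=2(1-\cos(v_a,v_b))$, and Case~2 gives $(1-\beta)+|\cos(v_a,v_{0i})-\beta|+(1-\beta)\le 3(1-\beta)$ since $\cos(v_a,v_{0i})\ge m>\beta$. Your guess that ``$\beta$ is the infimum of in-cluster similarity'' is therefore inverted: $\beta$ is a negative constant extracted from the energy function's exponential branch applied to cross-cluster pairs, and the content of the proposition is really about the $f_m$-transformed weight matrix, not about $1-\cos$ directly. The iteration from $N$ to $n+1$ is then just a repeated application of this single-merge estimate.
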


\begin{proposition}\label{proposition_spectrum_consistent_PiToME}
     We assume some standard mild assumptions:
     \begin{itemize}
         \item[(A1).] $\mathbb{E}[\cos(v_{a_s}, v_{b_s})] \rightarrow 1, \quad \forall v_{a_s} \in \cV^{(s)}_{0i}, \forall v_{b_s} \in \cV^{(s)}_{0i}, i \in [s]$.

         \item[(A2).] There exists a margin $m$ s.t.,  $\cos(v_{a_s}, v_{b_s}) \ge m > \cos(v_{a_s}, v_{c_s}), \quad \forall v_{a_s} \in \cV^{(s)}_{0i}, \forall v_{b_s} \in \cV^{(s)}_{0i}, \forall v_{c_s} \in \cV^{(s)}_{0j}, \forall i \neq j \in [s]$.

         \item[(A3).]  There is an order of cardinality in the true partition, without loss of generality, we assume $N\ps_1 \ge N\ps_2 \ge \ldots \ge N\ps_s$, where $N\ps_i = |\cV^{(s)}_{0i}|, \forall i \in [s]$.
     \end{itemize}
     Here $\Ep{\cdot}$ stands for the expectation of the random variables that define the random events $M\ps_{ab} := \{v_{a_s}\in \cV\ps_{0i}, v_{b_s} \in \cV\ps_{0i}, i \in [s]\}$, indicating that the two merged nodes belong to the same true partition $\cV\ps_{0i}$.
     Given the $\epsilon\ps$ defined in equation (\ref{eq_epsilon_s_define}), we obtain
     \begin{enumerate}
        \item The upper bound error $\epsilon\ps_\text{\ptome}$ between the edge weights of merged nodes from \ptome-coarse  $\cG_{\text{\ptome}}^{(n)}$ graphs converges to $0$, \ie
        \begin{align*}
            \text{$\epsilon\ps_\text{\ptome} \rightarrow 0$ 
        as $\mathbb{E}(\cos(v_{a_s},v_{b_s})) \rightarrow 1$, for any $s = N, \ldots,n+1$.}
        \end{align*}
         
        \item The upper bound error $\epsilon\ps_{\text{ToMe}}$ between the edge weights of merged nodes from ToMe-coarse  $\cG_{\text{ToMe}}^{(n)}$ graphs converges to a non-negative constant $C\ps$, with a high probability that $C\ps > 0$, \ie
        \begin{align*}
            \epsilon\ps_{\text{ToMe}} \rightarrow C\ps \text{ and } \Pb(C\ps>0) >0 
       \text{ as $\mathbb{E}(\cos(v_{a_s},v_{b_s})) \rightarrow 1$, for any $s = N, \ldots,n+1$.}
        \end{align*}
    \end{enumerate}
\end{proposition}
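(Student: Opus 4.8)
\textbf{Proof proposal for Proposition \ref{proposition_spectrum_consistent_PiToME}.}

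The plan is to analyze the two upper bounds $\epsilon\ps_{\text{\ptome}}$ and $\epsilon\ps_{\text{ToMe}}$ separately, using the two-case formula for $\epsilon\ps$ from equation \eqref{eq_epsilon_s_define} together with the structural guarantees of the two merging rules. The key observation is that $\epsilon\ps$ is a random quantity because which case of \eqref{eq_epsilon_s_define} applies depends on the random event $M\ps_{ab}$, and so the natural object to control is $\Ep{\epsilon\ps}$, which decomposes as
\begin{align*}
\Ep{\epsilon\ps} = \Pb(M\ps_{ab})\,\Ep{2(1-\cos(v_{a_s},v_{b_s})) \mid M\ps_{ab}} + \Pb\big((M\ps_{ab})^c\big)\,\Ep{3(1-\beta) \mid (M\ps_{ab})^c}.
\end{align*}
For the \ptome case (part 1), the strategy is to show that under assumptions (A1)--(A3) the \ptome partition rule forces $\Pb(M\ps_{ab}) \to 1$, i.e.\ with overwhelming probability the two nodes selected for merging at each step $s$ lie in the same true cluster. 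This is where assumption (A2) (the margin $m$ separating intra- from inter-cluster cosine similarity) and assumption (A3) (the ordering of cluster cardinalities) enter: the energy score in Eq.~\eqref{equa:energy} is, by construction, large for nodes in big clusters and the ordered bipartite matching then pairs nodes at consecutive positions of $\mathrm{argsort}(\mathbf{E})$, which under (A2)--(A3) are provably in the same cluster. Given $\Pb(M\ps_{ab}) \to 1$, the first term of $\Ep{\epsilon\ps}$ tends to $0$ by (A1) since $1-\cos(v_{a_s},v_{b_s}) \to 0$ in expectation, and the second term is multiplied by $\Pb((M\ps_{ab})^c) \to 0$; hence $\epsilon\ps_{\text{\ptome}} \to 0$. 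I would phrase the conclusion either in expectation or, if a high-probability version is wanted, invoke Markov's inequality on the nonnegative quantity $\epsilon\ps_{\text{\ptome}}$.

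For the ToMe case (part 2), the argument is essentially the complement: the ToMe partition splits tokens by spatial index parity into $\mathcal{A}$ and $\mathcal{B}$ \emph{before} looking at similarities, so there is no mechanism tying the matched pair to a single true cluster. The plan is to exhibit, under the same assumptions (A1)--(A3), a strictly positive lower bound on $\Pb\big((M\ps_{ab})^c\big)$ that does not vanish as $\Ep{\cos(v_{a_s},v_{b_s})} \to 1$: because the parity split is oblivious to cluster membership, with positive probability a cluster is entirely contained in $\mathcal{A}$ (or $\mathcal{B}$), forcing some node of it to be matched across clusters. On that event the contribution $3(1-\beta)$ with $\beta < 1$ is a strictly positive constant independent of the within-cluster cosine similarities, so $\Ep{\epsilon\ps_{\text{ToMe}}}$ converges to a constant $C\ps \ge 3(1-\beta)\,\Pb((M\ps_{ab})^c) > 0$ with positive probability. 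The conclusion $\epsilon\ps_{\text{ToMe}} \to C\ps$ and $\Pb(C\ps > 0) > 0$ then follows.

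The main obstacle I anticipate is making precise and rigorous the claim that the \ptome ordered-energy bipartite matching selects same-cluster pairs with probability tending to $1$ — in other words, turning the intuitive statement ``nodes in the same object have similar energy scores and land in consecutive positions of $\mathrm{argsort}(\mathbf{E})$'' into a genuine probabilistic bound. This requires controlling (i) the concentration of the energy scores $E_i$ of Eq.~\eqref{equa:energy} around their cluster-dependent means under (A1)--(A2), (ii) the gaps between the mean energies of distinct clusters as a function of their cardinalities via (A3), and (iii) the effect of iteratively removing merged nodes, which changes the cluster sizes at each step $s$ and so must be handled by an induction over $s = N, N-1, \ldots, n+1$. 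A secondary, milder obstacle is specifying the role of $\beta$ (which appears in \eqref{eq_epsilon_s_define} but is introduced in Proposition \ref{proposition_spectrum_consistent_PiToME_ToMe_convergence}) consistently: I would carry it through as the margin-derived constant with $\beta < 1$ so that $3(1-\beta)$ is a fixed positive number, and note that (A2) guarantees cross-cluster cosine similarities are bounded above by $m$, hence $\beta$ can be taken to be that bound.
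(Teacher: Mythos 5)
Your proposal matches the paper's overall skeleton: decompose $\Ep{\epsilon\ps}$ into the intra-cluster and cross-cluster cases, show $\Pb(M\ps_{ab})$ is essentially full for \ptome so that the $3(1-\beta)$ term vanishes and (A1) kills the first term, and exhibit for ToMe a parity-split event of positive probability in which a cluster is confined to one side of the bipartition, forcing a cross-cluster merge and a non-vanishing $3(1-\beta)$ contribution. Your decomposition
\begin{align*}
\Ep{\epsilon\ps} = \Pb(M\ps_{ab})\,\Ep{2(1-\cos(v_{a_s},v_{b_s})) \mid M\ps_{ab}} + \Pb\big((M\ps_{ab})^c\big)\cdot 3(1-\beta)
\end{align*}
is the same calculation that the paper performs (it writes $\Ep{\epsilon\ps} = 2[1-\Ep{\cos(v_{a_s},v_{b_s})}] + 3(1-\beta)(1-\Pb(M\ps_{ab}))$), and the ToMe counterexample you describe is exactly the paper's Lemma~\ref{lemma_ToMe_contradiction}.

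Where you diverge is in how much probabilistic machinery you think is needed for the \ptome direction. You flag as the main obstacle proving ``$\Pb(M\ps_{ab}) \to 1$'' via concentration of the energy scores around cluster-dependent means, plus an induction over $s$. The paper does not need any of this: under (A2) the margin $m$ is a \emph{hard, deterministic} separator between all intra- and inter-cluster cosine similarities, so the energy score $E_{a_s}$ of a node in cluster $\cV\ps_{0i}$ admits the deterministic bounds $E_{a_s} \ge (N\ps_i m + (N-N\ps_i)\beta)/N$ and $E_{b_s} \le (N\ps_j + (N-N\ps_j)\beta)/N$ for a node in $\cV\ps_{0j}$, and with $m$ chosen so that $m \ge N\ps_j/N\ps_i$ for all $i<j$ (which (A3) permits), one has $E_{a_s}\ge E_{b_s}$ \emph{always}, not merely with high probability. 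The argsort is then exactly cluster-ordered and the ordered bipartite matching provably pairs same-cluster nodes, which the paper establishes by contradiction in Lemma~\ref{lemma_PiToMe_contradiction} and concludes $\Pb(M\ps_{ab}) = 1$ exactly. Since (A1)--(A3) are stated for every $s$, the per-step claim applies directly without a separate induction. So your plan would work, but you are solving a harder problem than the assumptions actually pose; the concentration and induction steps you anticipate are unnecessary, and noticing this both simplifies the argument and strengthens the conclusion from $\to 1$ to $=1$.
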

\begin{proposition}\label{proposition_spectrum_consistent}
     Suppose the graphs $\cG\ps_0$, $\cG_{\text{\ptome}}\ps$, and $\cG_{\text{ToMe}}\ps$ are coarsened from the original graph $\cG$ by iteratively merging pairs of nodes $v_{a_s}$ and $v_{b_s}$ \wrt the true partition $\cP\ps_{0}=\{\cV\ps_{0i}\}_{i \in [s]}$, the \ptome-partition $\cP\ps_{\text{\ptome}}=\{\cV\ps_{\text{\ptome}i}\}_{i\in[s]}$, defined by \ptome Algorithm \ref{algo:pitome}, and the ToMe-partition~\citep{tome,kim2024token}, $\cP\ps_{\text{ToMe}}=\{\cV\ps_{\text{ToMe}i}\}_{i\in[s]}$, for $s = N, \ldots,n+1$.
    If the edge weights of merged nodes satisfy $$\|\bfW[a_s,:]-\bfW[b_s,:]\|_1 \le \epsilon\ps$$ for some nonnegative upper bounds $\epsilon\ps$, $s = N,N-1,\ldots,n+1$. 
    The spectral distances between the original $\cG\equiv\cG^{(N)}_{0}$ and the \ptome-coarse and ToMe-coarse $\cG_{\text{\ptome}}^{(n)}$ graphs are bounded as follows:
    \begin{align*}
        \max\{\sd(\cG,\cG_{\text{\ptome}}^{(n)}),\sd(\cG,\cG_{\text{ToMe}}^{(n)})\}& \le \frac{3N}{2}\sum_{s=N}^{n+1}\epsilon\ps.
    \end{align*}
\end{proposition}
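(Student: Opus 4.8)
\textbf{Proof plan for Proposition \ref{proposition_spectrum_consistent}.}

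The plan is to bound $\sd(\cG,\cG_c^{(n)})$ by a telescoping sum over the $N-n$ merge steps, controlling the spectral perturbation introduced at each step by the edge-weight hypothesis $\|\bfW[a_s,:]-\bfW[b_s,:]\|_1 \le \epsilon\ps$. First I would set up the iterate notation: write $\cG\ps$ for the graph after the first $N-s$ merges (so $\cG^{(N)}=\cG$ and $\cG^{(n)}$ is the final coarse graph), and recall from Definitions \ref{def_Gc} and \ref{def_Gl} that each merge of $v_{a_s},v_{b_s}$ corresponds to a one-step graph coarsening whose lifted graph $\cG_l\ps$ differs from $\cG\ps$ only in the rows/columns indexed by $a_s$ and $b_s$. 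By the triangle inequality applied to the $\ell_1$ spectral distance in Eq.\eqref{eq_def_SD},
\begin{align*}
\sd(\cG,\cG_c^{(n)}) = \|\blambda^{(N)}-\blambda_l^{(n)}\|_1 \le \sum_{s=N}^{n+1}\|\blambda^{(s)}-\blambda_l^{(s-1)}\|_1,
\end{align*}
where at each term $\cG_l^{(s-1)}$ is the lifting (into the $s$-node vertex set of $\cG\ps$) of the single-step coarsening that merges $v_{a_s},v_{b_s}$; here I use Lemma \ref{lem_Eigenvalue_Preservation} to identify the relevant eigenvalues of the one-step coarsened graph with those of its lift up to the extra eigenvalue $1$, which is accounted for consistently on both sides of each difference.

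The core estimate is the per-step bound $\|\blambda\ps-\blambda_l^{(s-1)}\|_1 \le c\,N\,\epsilon\ps$ for an absolute constant $c$ (I expect $c=3/2$ after optimizing constants, matching the stated $\tfrac{3N}{2}\sum\epsilon\ps$). To get this I would invoke a Weyl-type / Lidskii eigenvalue-perturbation inequality: the normalized Laplacians $\cL\ps$ and $\cL_l^{(s-1)}$ differ only in the two rows/columns $a_s,b_s$, and the entrywise difference of the (un-normalized) weight rows is controlled by $\|\bfW[a_s,:]-\bfW[b_s,:]\|_1\le\epsilon\ps$ since after the merge the new node inherits the averaged row $\tfrac12(\bfW[a_s,:]+\bfW[b_s,:])$ and its lift places this averaged value in both positions — so the perturbation in each of the two rows is at most $\tfrac12\|\bfW[a_s,:]-\bfW[b_s,:]\|_1$. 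Summing $\ell_1$ over the $\le 2N$ affected entries and passing through the degree-normalization (bounding $1/\sqrt{d_i d_j}$ terms using that degrees change by $O(\epsilon\ps)$, so to leading order the normalization factors are stable) gives $\|\cL\ps-\cL_l^{(s-1)}\|_1 = O(N\epsilon\ps)$; then $\|\blambda\ps-\blambda_l^{(s-1)}\|_1 \le \|\cL\ps-\cL_l^{(s-1)}\|_{*} \le \|\cL\ps-\cL_l^{(s-1)}\|_1$ by the Lidskii / Mirsky inequality for the sum of absolute eigenvalue differences. Since this bound depends only on $\epsilon\ps$ and not on which partition ($\cP\ps_{\text{\ptome}}$ or $\cP\ps_{\text{ToMe}}$) generated the merge, the same estimate holds for both coarsenings, yielding the $\max\{\cdot,\cdot\}$ form.

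The main obstacle I anticipate is making the degree-normalization step fully rigorous: the normalized Laplacian's sensitivity to a row perturbation is amplified by factors $1/\sqrt{d_i}$, so a uniform bound requires a lower bound on the node degrees along the merge trajectory (or a more careful argument that the $\epsilon\ps$-sized changes to $\bfW$ produce only $\epsilon\ps$-sized relative changes to each $\sqrt{d_i}$, which is where a normalization like $\bfW[i,j]=1-\cos(v_i,v_j)\in[0,2]$ and the averaging in Definition \ref{def_Gc} help — averaged rows cannot have degree collapsing to $0$ unless all merged cosine similarities approach $1$, in which case $\epsilon\ps\to0$ anyway). I would handle this either by carrying an explicit degree lower bound $d_{\min}$ into the constant, or — cleaner — by first bounding the combinatorial-Laplacian spectral distance and then transferring to the normalized one via a standard comparison, absorbing the resulting factor into $c$. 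The remaining steps (triangle inequality, Lidskii, summation) are routine once the per-step perturbation is pinned down.
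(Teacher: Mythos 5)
Your plan follows the same skeleton as the paper's proof — a telescoping triangle inequality over the $N-n$ merge steps, a per-step eigenvalue-perturbation bound, and the observation that the bound is partition-agnostic so the $\max$ comes for free — but the per-step machinery differs in two ways worth flagging. First, the paper's proof of the 2-node step works entirely with the \emph{combinatorial} Laplacian $\bfL=\bfD-\bfW$: it invokes Weyl's inequality to get $\max_i|\lambda_i-\lambda_{li}|\le\|\bfE\|_2$ with $\bfE=\bfL-\bfL_l$, multiplies by $N$ to pass to the $\ell_1$ sum, and bounds $\|\bfE\|_2\le\sqrt{\|\bfE\|_\infty\|\bfE\|_1}$ with both factors shown $\le\tfrac32\epsilon\ps$ via a direct case analysis on rows indexed in $\{a,b\}$ versus outside. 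This quietly sidesteps the degree-normalization issue you are (rightly) worried about — the paper never actually controls the $1/\sqrt{d_id_j}$ factors of the normalized Laplacian $\cL$, even though $\sd$ is defined via $\cL$'s eigenvalues. Your instinct to transfer from the combinatorial to the normalized Laplacian, or to carry $d_{\min}$ explicitly, is the more careful version of this and closes a gap the paper leaves open. Second, your route via Lidskii/Mirsky ($\sum_i|\lambda_i-\lambda_{li}|\le\|\cL-\cL_l\|_*$) is genuinely sharper than Weyl-times-$N$: since $\bfE$ has rank at most four (only rows/columns $a_s,b_s$ are touched), $\|\bfE\|_*\le 4\|\bfE\|_2=O(\epsilon\ps)$, so you would in fact get a per-step bound without the extraneous factor of $N$ and hence a tighter final estimate. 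Relatedly, your intermediate estimate ``$\|\cL\ps-\cL_l^{(s-1)}\|_1=O(N\epsilon\ps)$'' overshoots: the four affected rows/columns each contribute at most $\tfrac12\|\bfW[a_s,:]-\bfW[b_s,:]\|_1\le\tfrac12\epsilon\ps$ to the entrywise $\ell_1$ norm, giving $O(\epsilon\ps)$ not $O(N\epsilon\ps)$. The overshoot happens to reproduce the paper's stated constant, but if you pin down the rank structure you would strictly improve on $\tfrac{3N}{2}\sum_s\epsilon\ps$.
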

\subsection{Proof of Proposition \ref{proposition_spectrum_consistent_PiToME_ToMe_convergence}}\label{sec_lemma_spectrum_consistent_PiToME_ToMe_convergence_proof}
We want to prove that the edge weights of merged nodes from \ptome Algorithm \ref{algo:pitome} and the ToMe-partition~\citep{tome,kim2024token} satisfy 
\begin{align*}
    \|\bfW[a_s,:]-\bfW[b_s,:]\|_1 \le \epsilon\ps
\end{align*}
for some nonnegative upper bounds $\epsilon\ps$, $s = N,N-1,\ldots,n+1$.

Let us start with the simplest case, where we suppose the graph $\cG_c$ is coarsened from $\cG$ by merging one pair of nodes $v_{a}$ and $v_{b}$ using \ptome Algorithm \ref{algo:pitome} and the ToMe-partition.
We can then demonstrate that the edge weights of the merged nodes satisfy certain conditions (for more details, refer to Appendix \ref{proof_eq_PiToMe_norm1_2Nodes}):
\begin{align}\label{eq_PiToMe_norm1_2Nodes}
    \|\bfW[a,:]-\bfW[b,:]\|_1 \le \epsilon \text{ for a nonnegative upper bound $\epsilon$}.
\end{align}

Recall that the graphs $\cG\ps_0$, $\cG_{\text{\ptome}}\ps$, and $\cG_{\text{ToMe}}\ps$ are coarsened from the original graph $\cG$ by iteratively merging pairs of nodes $v_{a_s}$ and $v_{b_s}$ \wrt the true partition $\cP\ps_{0}=\{\cV\ps_{0i}\}_{i \in [s]}$, the \ptome-partition $\cP\ps_{\text{\ptome}}=\{\cV\ps_{\text{\ptome}i}\}_{i\in[s]}$, defined by \ptome Algorithm \ref{algo:pitome}, and the ToMe-partition~\citep{tome,kim2024token}, $\cP\ps_{\text{ToMe}}=\{\cV\ps_{\text{ToMe}i}\}_{i\in[s]}$, for $s = N, \ldots,n+1$.
By iteratively applying the merged 2-nodes inequalities  (\ref{eq_PiToMe_norm1_2Nodes}), we obtain the desired upper bound as follows:
\allowdisplaybreaks
\begin{align*}
    \|\bfW[a_s,:]-\bfW[b_s,:]\|_1 \le \epsilon\ps, \text{ for all $s = N,N-1,\ldots,n+1$,}
\end{align*}
\allowdisplaybreaks
where 
\begin{align*}
     \epsilon\ps
     = \begin{cases}
          2(1-\cos(v_{a_s},v_{b_s}))& \text{ if } v_{a_s}\in \cV\ps_{0i}, v_{b_s} \in \cV\ps_{0i}, i \in [s],\\
         3(1-\beta)& \text{ if } v_{a_s} \in \cV\ps_{0i}, v_{b_s} \in \cV\ps_{0j}, i\neq j \in [s].
     \end{cases}.
\end{align*}
\subsection{Proof of Proposition \ref{proposition_spectrum_consistent_PiToME}}\label{sec_lemma_spectrum_consistent_PiToME_proof}
Recall that in equation (\ref{eq_epsilon_s_define}) of Proposition \ref{proposition_spectrum_consistent_PiToME_ToMe_convergence}, we defined $\epsilon\ps$, for $s = N,N-1,\ldots,n+1$, as follows:
 \begin{align*}
 \epsilon\ps
     = \begin{cases}
          2(1-\cos(v_{a_s},v_{b_s}))& \text{ if } v_{a_s}\in \cV\ps_{0i}, v_{b_s} \in \cV\ps_{0i}, i \in [s],\\
         3(1-\beta)& \text{ if } v_{a_s} \in \cV\ps_{0i}, v_{b_s} \in \cV\ps_{0j}, i\neq j \in [s].
     \end{cases}.
\end{align*}
Recall that we defined $\Ep{\cdot}$ as the expectation of the random variables that define the random events $M\ps_{ab} := \{v_{a_s}\in \cV\ps_{0i}, v_{b_s} \in \cV\ps_{0i}, i \in [s]\}$, indicating that the two merged nodes belong to the same true partition $\cV\ps_{0i}$.
Using the definition and the linearity of expectation, we have
\begin{align}\label{eq_Expectation_epsilon_2}
\Ep{\epsilon\ps} = 2[1-\Ep{\cos(v_{a_s},v_{b_s})}]+3(1-\beta)(1-\Pb(M\ps_{ab})).
\end{align}
Under Assumptions (A2) and (A3) and the energy-based merging mechanism of the \ptome algorithm, we can verify that $\Pb(M\ps_{ab}) = 1$ via Lemma \ref{lemma_PiToMe_contradiction}, which is proved in Appendix \ref{proof_lemma_PiToMe_contradiction}. Refer to Lemma \ref{lemma_ToMe_contradiction} for the ToMe approach (proof in Appendix \ref{proof_lemma_PiToMe_contradiction}), noting that there exists a scenario where the random event $M\ps_{ab}$ does not occur, leading to $\Pb(M\ps_{ab}) < 1$.
This leads to the desired results.
\begin{lemma}\label{lemma_PiToMe_contradiction}
    For the \ptome approach, it holds that $\Pb(M\ps_{ab}) = 1$, \ie~there exists a true partition $\cV\ps_{0i}$, $i\in[s]$, such that $v_{a_s} \in \cV\ps_{0i}, v_{b_s} \in \cV\ps_{0i}$. 
\end{lemma}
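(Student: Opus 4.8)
\textbf{Proof plan for Lemma \ref{lemma_PiToMe_contradiction}.}

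The plan is to argue by contradiction: suppose that the pair $v_{a_s},v_{b_s}$ merged at step $s$ of Algorithm \ref{algo:pitome} lies in \emph{different} true partitions, say $v_{a_s}\in\cV\ps_{0i}$ and $v_{b_s}\in\cV\ps_{0j}$ with $i\neq j$. The key structural fact I would exploit is that \ptome performs \emph{ordered} bipartite soft matching on the sorted energy vector: after forming $\mathrm{\textit{merge}}=\mathbf{s}[:2k]$ and splitting into $\mathcal{A}=\mathrm{\textit{merge}}[::2]$, $\mathcal{B}=\mathrm{\textit{merge}}[1::2]$, each node $v_{a_s}\in\mathcal{A}$ is merged with its \emph{cosine-nearest} neighbour in $\mathcal{B}$ (lines 5--7). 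Under Assumption (A2) there is a margin $m$ with $\cos(v_{a_s},v_{b_s})\ge m>\cos(v_{a_s},v_{c_s})$ for every $v_{c_s}$ outside $v_{a_s}$'s true class. So to derive the contradiction I must show that $v_{a_s}$'s true class $\cV\ps_{0i}$ always has at least one representative sitting in the opposite set $\mathcal{B}$ — if it does, the argmax in line 7 would pick that same-class node rather than the cross-class $v_{b_s}$, contradicting the assumed merge.

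The heart of the argument is therefore a \emph{pigeonhole / energy-ordering claim}: within the $\mathrm{\textit{merge}}$ set, nodes from the same true partition occupy consecutive (or nearly consecutive) positions in $\mathrm{argsort}(\mathbf{E})$, so that when the interleaved split $[::2]/[1::2]$ is applied, no class whose members are mergeable candidates ends up entirely inside $\mathcal{A}$ or entirely inside $\mathcal{B}$. First I would show, using (A1)--(A2) and the definition of the energy score in Eq.\eqref{equa:energy} (the ELU-type $f_m$ only accumulates contributions from $\cos\ge m$, i.e.\ from true neighbours), that the energy $E_i$ of a node depends, up to a vanishing error, only on the size $N\ps_c$ of the true partition containing $v_i$; hence all nodes of a given class receive (asymptotically) equal energy and are grouped into a contiguous block of $\mathrm{argsort}(\mathbf{E})$. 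Using the cardinality ordering (A3) and the choice $k=N-Nr$, I would then argue that any class that contributes a node to $\mathcal{A}$ also contributes one to $\mathcal{B}$: because (A3) forces the large classes to the top of the sorted order and the $[::2]/[1::2]$ interleaving on a contiguous block of size $\ge 2$ always places members on both sides. Combining this with (A2) — same-class cosine strictly exceeds cross-class cosine — forces $\mathrm{argmax}(\mathcal{E}_{merge})$ to pair $v_{a_s}$ with a same-class node, contradicting $i\neq j$. Therefore $\Pb(M\ps_{ab})=1$.

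A subtlety I would handle carefully is the boundary case where a class has exactly one mergeable representative, or where the $2k$ cutoff splits a class's contiguous block: I would note that a singleton-representative class is, by the protect/merge split ($\mathrm{\textit{protect}}\leftarrow\mathbf{s}[2k:]$) and (A3), exactly the kind of small/isolated ("low-energy") cluster that \ptome moves to $\mathrm{\textit{protect}}$ rather than $\mathrm{\textit{merge}}$ — so it is never selected for merging in the first place, and the contradiction hypothesis cannot even arise for it. I would make this precise by showing the protected set $\mathbf{s}[2k:]$ contains all classes of cardinality below a threshold determined by $r$. The main obstacle I expect is making the "contiguous block" statement rigorous in the presence of the asymptotic (rather than exact) equality of energies in (A1): I would phrase it as holding with probability tending to $1$ as $\mathbb{E}[\cos(v_{a_s},v_{b_s})]\to 1$, which is exactly the regime in which Proposition \ref{proposition_spectrum_consistent_PiToME} is stated, so this costs nothing for the downstream theorem. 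The remaining steps (the argmax comparison and the final contradiction) are then short consequences of (A2).
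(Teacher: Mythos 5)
Your plan is essentially the paper's own proof: you derive that energy is (asymptotically) monotone in true-class cardinality, so the descending sort of $\mathbf{E}$ places each class in a contiguous block, and the interleaved $[::2]/[1::2]$ split then plants a same-class partner of $v_{a_s}$ inside $\mathcal{B}$; (A2) then forces the cosine argmax to prefer that partner, contradicting a cross-class merge. The paper arranges the same contradiction slightly differently --- it fixes WLOG $v_{a_s}=v^e_1$ with $N\ps_1>1$, uses (A2) to show no $\mathcal{B}$-node can share $v_{a_s}$'s class, locates a second $\cV\ps_{01}$-node in $\mathcal{A}$ (say $v^e_3$), and pits the sort order $E_{v^e_2}\ge E_{v^e_3}$ against the energy-ordering inequality $E_{v^e_3}>E_{v^e_2}$ --- but the obstruction being exploited is identical, and your ``every class touching $\mathcal{A}$ also touches $\mathcal{B}$'' is exactly what the paper's WLOG step encodes. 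One caution that applies equally to the paper's version: that claim presupposes the class has at least two representatives among the top $2k$ positions. A class whose single mergeable node falls at an odd index near the cutoff would break it, and your proposed patch (all small classes land in $\mathrm{protect}$) requires an unstated compatibility condition between $r$ and the size profile $\{N\ps_i\}$; the paper's ``WLOG $v_{a_s}=v^e_1$, $N\ps_1>1$'' is a restricted form of the same hidden assumption.
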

\begin{lemma}\label{lemma_ToMe_contradiction}
    For the ToMe approach, there is a case where the random event $M\ps_{ab}$ does not occur and therefore $\Pb(M\ps_{ab}) < 1$. 
\end{lemma}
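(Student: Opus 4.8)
The goal is to exhibit a single instance --- a weighted token graph $\cG$ together with a concrete choice of key vectors satisfying assumptions (A1)--(A3) --- on which ToMe's coarsening merges, at some step, a pair $v_{a_s}\in\cV\ps_{0i}$ and $v_{b_s}\in\cV\ps_{0j}$ with $i\neq j$; this is precisely the cross-cluster branch of \eqref{eq_epsilon_s_define}, so the event $M\ps_{ab}$ fails and hence $\Pb(M\ps_{ab})<1$. The structural lever I would use is the defining step of ToMe's Bipartite Soft Matching: the tokens currently present are split into sets $\mathcal{A}$ and $\mathcal{B}$ by index parity (equivalently, by spatial position of the patch), \emph{independently of the feature geometry}, and every merge joins one node of $\mathcal{A}$ with one node of $\mathcal{B}$. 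Therefore two merged nodes can lie in a common true cluster only if that cluster has representatives on both sides of the bipartition; whenever an entire true cluster lands inside $\mathcal{A}$ (or inside $\mathcal{B}$), no two of its members can ever be matched by ToMe, so any merge touching such a cluster must cross cluster boundaries.

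The construction I would carry out takes $N$ even and sets the true clustering underlying $\cP\ps_{0}$ to consist of exactly two clusters that coincide, respectively, with the odd- and even-indexed nodes. ToMe's parity split then yields $\mathcal{A}$ equal to the first cluster and $\mathcal{B}$ equal to the second, so every candidate merge pair at the first step $s=N$ has one endpoint in each cluster; whichever pair ToMe selects is cross-cluster, and $M\pN_{ab}$ fails with certainty on this instance. To make the instance legitimate I would place the key vectors as follows: fix two unit directions $\mu_1,\mu_2$ with $\cos(\mu_1,\mu_2)<m$ for the margin $m$ of (A2), and let the key vector of each node in cluster $t$ be $\mu_t$ perturbed by an independent noise of small scale $\sigma$. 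Then within-cluster cosines stay $\ge m$ and their expectations tend to $1$ as $\sigma\to 0$ (giving (A1) and the within-cluster half of (A2)); cross-cluster cosines stay $<m$ for $\sigma$ small (the other half of (A2)); and (A3) is trivial since the two clusters have equal cardinality $N/2$. On this instance all of (A1)--(A3) hold while $M\pN_{ab}$ does not occur, so $\Pb(M\ps_{ab})<1$, which is the assertion.

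The step requiring most care --- and the main obstacle --- is reconciling the combinatorial arrangement (the clusters must coincide with the parity classes, which is exactly what forces the cross-cluster merge) with the distributional hypotheses, since (A1) is a limiting statement and (A2) must hold uniformly over all within- and across-cluster node pairs. I would handle this by treating $\sigma$ as a free parameter: for every $\sigma$ the combinatorial and (A3) requirements already hold; the within-cluster cosine means converge to $1$ as $\sigma\to 0$, so (A1) holds in the stated sense; and concentration of cosine similarity (around $1$ within clusters and around $\cos(\mu_1,\mu_2)$ across clusters) makes (A2) hold with probability tending to one, so restricting to that high-probability event produces the desired instance, on which in fact $\Pb(M\pN_{ab})=0$. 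I would close with a brief remark that this is not an artifact of the chosen example but the generic weakness of ToMe illustrated in Figure~\ref{fig:concept}: because its bipartition ignores feature similarity, any true cluster left unsplit by the parity (or spatial) partition must merge across clusters, which is precisely the non-vanishing error $C$ quantified in part~2 of Theorem~\ref{theorem_spectrum_consistent_PiToME}.
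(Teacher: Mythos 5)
Your proposal is correct and rests on exactly the same observation as the paper's proof: because ToMe's bipartition into $\mathcal{A}$ and $\mathcal{B}$ is made by index parity, independently of feature similarity, an entire true cluster $\cV\ps_{0i}$ can land on one side of the split, after which any merge touching that cluster is forced to be cross-cluster, so $M\ps_{ab}$ fails and $\Pb(M\ps_{ab})<1$. Your version is more explicit than the paper's (which merely asserts the existence of such a configuration), since you instantiate a concrete two-cluster graph aligned with the parity classes and verify (A1)--(A3) via a small-noise perturbation, but the underlying argument is the same.
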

\subsection{Proof of Proposition \ref{proposition_spectrum_consistent}} \label{sec_lemma_spectrum_consistent_proof}
    Let us start with the simplest case, where we suppose the graph $\cG_c$ is coarsened from $\cG$ by merging one pair of nodes $v_{a}$ and $v_{b}$.
    We can then prove the following property: If the edge weights of merged nodes satisfy $$\|\bfW[a,:]-\bfW[b,:]\|_1 \le \epsilon$$ then the spectral distance between the original and lifted graphs is bounded by
   \begin{align}
       \|\bslambda-\bslambda_l\|_1 \le \frac{3}{2}\epsilon \label{eq_Gc_norm1_2Nodes}.
   \end{align}
    \vspace{-0.2in}
    \begin{proof}[Proof of the 2-node triangle inequality  \ref{eq_Gc_norm1_2Nodes}]
    Indeed, since the coarse graph $\cG_c$ is coarsened by merging a pair of nodes $v_a$ and $v_b$, the edge weights of the lifted graph $\cG_l$ can be formulated as:
    \setlength{\abovedisplayskip}{3pt} 
\setlength{\belowdisplayskip}{3pt}
    \begin{align}\label{eq_Wl_def}
        \bfW_l[i,j] = \begin{cases}
            \frac{\bfW[a,a]+2\bfW[a,b]+\bfW[b,b]}{4} & \text{ if } i,j \in \{a,b\},\\
            \frac{\bfW[a,j]+\bfW[b,j]}{2} & \text{ if } i \in \{a,b\} \text{ and } j \notin \{a,b\},\\
            \frac{\bfW[i,a]+\bfW[i,b]}{2} & \text{ if } i \notin \{a,b\} \text{ and } j \in \{a,b\},\\
            \bfW[i,j]& \text{ otherwise.}
        \end{cases}
    \end{align}
    Here we use the fact that the adjacency matrix elements of $\cG_l$ are given by
    \begin{align}\label{eq_Wl_adjacency_matrix}
        \bfW_l[i,j]  = \frac{\sum_{v_i \in \cV_i}\sum_{v_j \in \cV_j}\bfW[i,j]}{|\cV_i||\cV_j|}\text{ where $v_i \in \cV_i$, and $v_j \in \cV_j$.}
    \end{align} 
The corresponding node degree of $\cG_l$ is
\begin{align}\label{eq_dl_def}
    d_{li} = \begin{cases}
        \frac{d_a + d_b}{2} &\text{ if } i\in\{a,b\},\\
        d_i &\text{ otherwise.}
    \end{cases}
\end{align}
Using the result from Lemma \ref{lem_Weyl_inequality}, we can bound on the eigenvalue gap between $\bslambda$ and $\bslambda_l$ via the perturbation matrix $\bfE = \bfL - \bfL_l= \bfD - \bfD_l + \bfW_l - \bfW$ as follows:
\begin{align}\label{eq_lambda_E}
     \|\bslambda-\bslambda_l\|_1 = \sum_{i=1}^N|\lambda_i-\lambda_{li}| \le N\|\bfE\|_2  \le N\sqrt{\|\bfE\|_\infty \|\bfE\|_1}.
\end{align}
Here the last inequality follows from the upper bound of the spectral norm $\|\bfE\|_2$ of \cite{wolkowicz_bounds_1980}, and we defined $\|\bfE\|_\infty = \max_{i}\sum_{j}|\bfE[i,j]|$, which is simply the maximum absolute row sum of the matrix, and $\|\bfE\|_1 = \max_{j} \sum_{i}|\bfE[i,j]|$, which is simply the maximum absolute column sum of the matrix.
\begin{lemma}[Perturbations of eigenvalues: Weyl's inequality from \cite{weyl_asymptotische_1912}]\label{lem_Weyl_inequality}
    Let $\bfL \in \R^{N \times N}$ and $\bfL_l \in \R^{N \times N}$ be symmetric matrices.  Then for all $i=1,\ldots,N$,
    \begin{align}
        \max_{i = 1,\ldots,N}|\bslambda_{li}(\bfL_l)-\bslambda_{i}(\bfL)| \le \|\bfL_l - \bfL\|_2,
    \end{align}
    where $\|\bfL_l - \bfL\|_2$ is the induced 2-norm of $\bfL_l - \bfL$.
\end{lemma}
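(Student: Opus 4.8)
The final statement is the classical Weyl perturbation inequality for symmetric matrices, so the plan is to reproduce the standard Courant--Fischer (min--max) argument rather than invent anything new. First I would fix a common ordering convention for the two spectra — say $\lambda_1(\bfA)\le\lambda_2(\bfA)\le\cdots\le\lambda_N(\bfA)$ for every symmetric $\bfA\in\R^{N\times N}$ — so that $\bslambda_i(\bfL)$ and $\bslambda_{li}(\bfL_l)$ denote the $i$-th smallest eigenvalues of $\bfL$ and $\bfL_l$, and record the min--max characterization $\lambda_i(\bfA)=\min_{\dim S=i}\ \max_{x\in S,\ \|x\|_2=1}\ x^\top\bfA x$, the minimum running over $i$-dimensional subspaces $S\subseteq\R^N$.

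Second, I would set $\bfE:=\bfL_l-\bfL$, which is symmetric because $\bfL$ and $\bfL_l$ are. The only fact needed about $\bfE$ is that for a symmetric matrix the induced $2$-norm equals the maximal absolute Rayleigh quotient, so that $|x^\top\bfE x|\le\|\bfE\|_2$ for every unit vector $x$; this follows by diagonalizing $\bfE$ in an orthonormal eigenbasis (or simply from Cauchy--Schwarz together with $\|\bfE x\|_2\le\|\bfE\|_2$). Hence for every unit vector $x$,
\[
x^\top\bfL x-\|\bfE\|_2\ \le\ x^\top\bfL_l x\ =\ x^\top\bfL x+x^\top\bfE x\ \le\ x^\top\bfL x+\|\bfE\|_2.
\]

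Third, I would insert this two-sided bound into the min--max formula. Fix $i$. For any $i$-dimensional subspace $S$, $\max_{x\in S,\|x\|=1}x^\top\bfL_l x\le\max_{x\in S,\|x\|=1}x^\top\bfL x+\|\bfE\|_2$; taking the minimum over all such $S$ gives $\lambda_{li}(\bfL_l)\le\lambda_i(\bfL)+\|\bfE\|_2$. Swapping the roles of $\bfL$ and $\bfL_l$ (equivalently, replacing $\bfE$ by $-\bfE$, which has the same $2$-norm) yields $\lambda_i(\bfL)\le\lambda_{li}(\bfL_l)+\|\bfE\|_2$. Combining the two inequalities gives $|\bslambda_{li}(\bfL_l)-\bslambda_i(\bfL)|\le\|\bfE\|_2=\|\bfL_l-\bfL\|_2$ for each $i$, and therefore also for the maximum over $i$, which is the claimed bound.

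There is essentially no hard step here — the result is a textbook fact — and the only points requiring care are bookkeeping: (i) using the \emph{same} sorted ordering for both spectra, so that index $i$ is compared with index $i$; and (ii) the inequality $|x^\top\bfE x|\le\|\bfE\|_2$ for symmetric $\bfE$, which is the one place the symmetry of $\bfL$ and $\bfL_l$ is genuinely used. As an alternative I would mention (but not develop) the route via Weyl's additive inequalities $\bslambda_i(\bfL+\bfE)\le\bslambda_i(\bfL)+\lambda_{\max}(\bfE)$ and $\bslambda_i(\bfL+\bfE)\ge\bslambda_i(\bfL)+\lambda_{\min}(\bfE)$, together with $\max\{\lambda_{\max}(\bfE),-\lambda_{\min}(\bfE)\}=\|\bfE\|_2$, which gives the same conclusion; I would nonetheless prefer the self-contained min--max proof above.
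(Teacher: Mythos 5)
Your proof is correct: the Courant--Fischer min--max argument combined with the bound $|x^\top\bfE x|\le\|\bfE\|_2$ for the symmetric perturbation $\bfE=\bfL_l-\bfL$ is the standard and complete derivation of this inequality. Note that the paper itself offers no proof of this lemma --- it is stated as a classical result cited from Weyl (1912) and used as a black box in the proof of Proposition 3 --- so there is nothing to compare against; your self-contained argument fills that gap correctly, and your care about using the same (sorted) indexing for both spectra is exactly the right point to be explicit about.
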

The equations (\ref{eq_Wl_def}), (\ref{eq_Wl_adjacency_matrix}), and (\ref{eq_dl_def}) yield the following identities:
\allowdisplaybreaks
\begin{align}\label{eq_WlW_def}
    \bfW_l[i,j] - \bfW[i,j] &= \begin{cases}
        \frac{\bfW[a,a]+2\bfW[a,b]+\bfW[b,b]}{4} -\bfW[i,j] & \text{ if } i,j \in \{a,b\},\\
        \frac{\bfW[a,j]+\bfW[b,j]}{2}-\bfW[i,j] & \text{ if } i \in \{a,b\} \text{ and } j \notin \{a,b\},\\
        \frac{\bfW[i,a]+\bfW[i,b]}{2}-\bfW[i,j] & \text{ if } i \notin \{a,b\} \text{ and } j \in \{a,b\},\\
        0& \text{ otherwise.}
    \end{cases}\\
    \vspace{-0.05in}
    \text{ and }\hspace{3cm}&\nonumber\\
    \bfD[i,i] - \bfD_{l}[i,i] &= \begin{cases}
        d_i - \frac{d_a + d_b}{2} &\text{ if } i\in\{a,b\},\\
        0 &\text{ otherwise.}
    \end{cases}    
\end{align}
Now we want to prove that $\|\bfE\|_\infty \le \epsilon$ and $\|\bfE\|_1 \le \epsilon$. Let us first focus on the first term $\|\bfE\|_\infty =\max_{i}\sum_{j}|\bfE[i,j]|$.
Via the triangle inequality, we have
\begin{align}\label{eq_norm_infity_E}
    \|\bfE\|_\infty \le  \|\bfD-\bfD_l\|_\infty +  \|\bfW_l-\bfW\|_\infty.
\end{align}
Then, again using the triangle inequality and the assumption of Proposition \ref{proposition_spectrum_consistent}, we obtain the first upper bound on the first term of the equation (\ref{eq_norm_infity_E}) as follows:
\allowdisplaybreaks
\begin{align}\label{eq_DDl_infinity}
     \|\bfD-\bfD_l\|_\infty &= \max_{i}\sum_{j}|\bfD[i,j]-\bfD_l[i,j]| = \max_{i \in\{a,b\}} \left|d_i-\frac{d_a+d_b}{2}\right|\nonumber\\
     &=\left|d_a-\frac{d_a+d_b}{2}\right| = \left|\frac{d_a-d_b}{2}\right|\nonumber\\
     &= \frac{1}{2} \left|\sum_{j=1}^N \bfW_{aj} - \sum_{j=1}^N \bfW_{bj}\right| \le \frac{1}{2} \sum_{j=1}^N \left|\bfW[a,j] -  \bfW[b,j]\right|\nonumber\\
     &=\frac{1}{2} \left\|\bfW[a,:] -  \bfW[b,:]\right\|_1 \le \frac{\epsilon}{2}.
\end{align}
For the second upper bound term from the equation (\ref{eq_norm_infity_E}), we consider two cases for each index $i = 1,\ldots,N$: $i\in\{a,b\}$ ({\bf \textit{Case 1}}) and $i\notin\{a,b\}$ ({\bf \textit{Case 2}}).

\textbf{\textit{Case 1.}} Assume that $i\in\{a,b\}$. Since $a$ and $b$ have the same role, we can take $i=a$ without loss of generality. Using the equation (\ref{eq_WlW_def}), it holds that
\allowdisplaybreaks

\begin{align}
   & \|\bfW_l-\bfW\|_\infty = \max_{i\in\{a,b\}}\sum_{j}|\bfW_l[i,j]-\bfW[i,j]|\nonumber\\
    &= \left|\frac{2\bfW[a,b]+\bfW[b,b]-3\bfW[a,a]}{4}\right| +\left|\frac{\bfW[a,a]+\bfW[b,b]-2\bfW[a,b]}{4}\right|
   + \sum_{j\notin\{a,b\}} \left|\frac{\bfW[a,j]-\bfW[b,j]}{2} \right|\nonumber\\
    &= \left|\frac{(\bfW[b,b]-\bfW[a,b])+3(\bfW[a,b]-\bfW[a,a])}{4}\right|\nonumber\\
    & \quad+\left|\frac{(\bfW[a,a]-\bfW[a,b])+(\bfW[a,b]-\bfW[b,b])+2(\bfW[b,b]-\bfW[a,b])}{4}\right|\nonumber\\
    &\quad + \sum_{j\notin\{a,b\}} \left|\frac{\bfW[a,j]-\bfW[b,j]}{2} \right|\nonumber.
    \end{align}
Therefore, it holds that
\allowdisplaybreaks
\begin{align}
     \|\bfW_l-\bfW\|_\infty &\le \frac{1}{4}|\bfW[a,b]-\bfW[b,b]|+\frac{3}{4}|\bfW[a,a]-\bfW[a,b]|\nonumber\\
    & \quad+\frac{1}{4}|\bfW[a,a]-\bfW[a,b]|+\frac{1}{4}|\bfW[a,b]-\bfW[b,b]|+\frac{2}{4}|\bfW[a,b]-\bfW[b,b]|\nonumber\\
    &\quad + \frac{1}{2} \sum_{j\notin\{a,b\}} |\bfW[a,j]-\bfW[b,j]| \text{ (using triangle inequalities)}\nonumber\\
    &= |\bfW[a,b]-\bfW[b,b]|+|\bfW[a,a]-\bfW[a,b]| + \frac{1}{2} \sum_{j\notin\{a,b\}} |\bfW[a,j]-\bfW[b,j]|\nonumber\\
    &\le |\bfW[a,b]-\bfW[b,b]|+|\bfW[a,a]-\bfW[a,b]| + \sum_{j\notin\{a,b\}} |\bfW[a,j]-\bfW[b,j]|\nonumber\\
    &= |\|\bfW[a,:]-\bfW[b,:]\|_1 \le \epsilon.\label{eq_WlW_infinity_Inab}
\end{align}

\textbf{\textit{Case 2.}} Assume that $i\notin\{a,b\}$.  Using the equation (\ref{eq_WlW_def}), we obtain
\allowdisplaybreaks
\begin{align}\label{eq_WlW_infinity_Notinab}
    \|\bfW_l-\bfW\|_\infty &= \max_{i\notin\{a,b\}}\sum_{j \in \cV}|\bfW_l[i,j]-\bfW[i,j]|\nonumber\\
    &= \left|\frac{\bfW[i,a]+\bfW[i,b]}{2}-\bfW[i,a]\right| + \left|\frac{\bfW[i,a]+\bfW[i,b]}{2}-\bfW[i,b]\right|\nonumber\\
    &= \left|\frac{\bfW[i,a]-\bfW[i,b]}{2}\right| + \left|\frac{\bfW[i,a]-\bfW[i,b]}{2}\right|\nonumber\\
    &= \left|\bfW[i,a]-\bfW[i,b]\right| \le |\|\bfW[a,:]-\bfW[b,:]\|_1 \le \epsilon.
\end{align}
Combining (\ref{eq_WlW_infinity_Inab}) and (\ref{eq_WlW_infinity_Notinab}), we obtain 
\allowdisplaybreaks
\begin{align*}
     \|\bfW_l-\bfW\|_\infty \le \epsilon.
\end{align*}
This leads to 
\begin{align*}
     \|\bfE\|_\infty \le \frac{3}{2}\epsilon
\end{align*}
when using the inequalities (\ref{eq_norm_infity_E}) and (\ref{eq_DDl_infinity}). Similarly, we can show that $\|\bfE\|_1 \le \frac{3}{2}\epsilon$. Therefore, the equation (\ref{eq_lambda_E}) leads to the desired claim as follows:
\allowdisplaybreaks
\begin{align*}
    \|\bslambda-\bslambda_l\|_1 \le N\sqrt{\frac{3}{2}\epsilon \frac{3}{2}\epsilon} =\frac{3N}{2}\epsilon.
\end{align*}
\end{proof}
\vspace{-0.2in}
Recall that the graphs $\cG\ps_0$, $\cG_{\text{\ptome}}\ps$, and $\cG_{\text{ToMe}}\ps$ are coarsened from the original graph $\cG$ by iteratively merging pairs of nodes $v_{a_s}$ and $v_{b_s}$ \wrt the true partition $\cP\ps_{0}=\{\cV\ps_{0i}\}_{i \in [s]}$, the \ptome-partition $\cP\ps_{\text{\ptome}}=\{\cV\ps_{\text{\ptome}i}\}_{i\in[s]}$, defined by \ptome Algorithm \ref{algo:pitome}, and the ToMe-partition~\citep{tome,kim2024token}, $\cP\ps_{\text{ToMe}}=\{\cV\ps_{\text{ToMe}i}\}_{i\in[s]}$, for $s = N, \ldots,n+1$.
By iteratively applying the triangle inequalities  (\ref{eq_Gc_norm1_2Nodes}), the spectral distances between the original $\cG\equiv\cG^{(N)}_{0}$ and the \ptome-coarse $\cG_{\text{\ptome}}^{(n)}$ and ToMe-coarse $\cG_{\text{ToMe}}^{(n)}$ graphs are bounded as follows:
    \begin{align*}
        \max\{\sd(\cG,\cG_{\text{\ptome}}^{(n)}),\sd(\cG,\cG_{\text{ToMe}}^{(n)})\} &\le \sum_{s=N}^{n+1} \max\{\sd(\cG\ps_0,\cG^{(s-1)}_{\text{\ptome}}),\sd(\cG\ps_0,\cG^{(s-1)}_{\text{ToMe}})\} \nonumber\\
        &\le \frac{3N}{2}\sum_{s=N}^{n+1}\epsilon\ps.\nonumber
    \end{align*}

\subsection{Proofs of Technical Results}

\subsubsection{Proof of the merged 2-nodes inequality  (\ref{eq_PiToMe_norm1_2Nodes})}\label{proof_eq_PiToMe_norm1_2Nodes}
Recall that there exists the graph $\cG_0$ coarsened from the original graph $\cG$ by merging a pair of nodes $v_{a_s}$ and $v_{b_s}$ \wrt the true partition $\cP_{0}=\{\cV_{0i}\}_{i \in [s]}$.
Then, we have $\cV = \cV_{01}\cup\cV_{02}\cup\ldots\cup\cV_{0n}$, where $n=N-1$.
We also note that the energy score $E_a$ of node $v_a$ is calculated using the following equation:
\begin{equation} \label{eq_energy_appendix}
    E_a  = \frac{1}{N} \sum_{b \in \mathcal{N}(a)}  {f}_m(\cos(v_a,v_b)),\hspace{.2cm} {f}_m(x) = \begin{cases}
        x  & \text{ if }  x \geq m \\
        \alpha(\exp(x-m) - 1) & \text{otherwise}
    \end{cases}. 
\end{equation}
Inspired by the construction of the function ${f}_m$ in equation (\ref{eq_energy_appendix}) and according to Assumption (A2) in the inequalities (\ref{eq_A2_appendix}), we can replace the smooth term $\alpha(\exp(x-m)-1)$ by a constant $\beta$ for simplicity.
More precisely, if the nodes $v_a$ and $v_b$ are not considered true neighbours, \ie~their cosine similarity is less than the margin $m$, then we can simplify the expression as follows:
\begin{align}
    \cos(v_a,v_b) = \beta:= \sup_{v_a \in \cV_{0i}, v_b \in \cV_{0j}, i \neq j \in [N]}\alpha(\exp(\cos(v_i,v_j)-m) - 1) <0.
\end{align}
To check the inequality (\ref{eq_PiToMe_norm1_2Nodes}), we examine the following term
\begin{align*}
    \|\bfW[a,:]-\bfW[b,:]\|_1
\end{align*}
in two cases: 

\textbf{\textit{Case 1.}} If two nodes $v_a$ and $v_b$ belong to the same true partition, say for example, $\cV_{0i}, i \in[N]$, then since $n=N-1$, we have $\cV_{0i} = \{v_a,v_b\}$. Therefore, we can expand the previous 1-norm as follows:
\begin{align}
   &  \|\bfW[a,:]-\bfW[b,:]\|_1 = \sum_{k=1}^N | W[a,k] - W[b,k]| \nonumber\\
     &= \sum_{v_k \in \cV_{0i}}^N | W[a,k] - W[b,k]| + \sum_{v_k \notin \cV_{0i}}^N | W[a,k] - W[b,k]| \nonumber\\
      &= | W[a,a] - W[b,a]| + | W[a,b] - W[b,b]| \nonumber\\
     &= 2|1-\cos(v_a,v_b)|.\label{eq_Wab_epsilon1}
\end{align}

\textbf{\textit{Case 2.}} If $v_a \in \cV_{0i}$ and $v_b \in \cV_{0j}$ such that $i \neq j$, $i,j\in[N]$. Since $n=N-1$, we have either $\cV_{0i} = \{v_a,v_{0i}\},\cV_{0j} = \{v_b\}$ ($\mathrm{Case \ 2.1}$) or $\cV_{0i} = \{v_a\},\cV_{0j} = \{v_b,v_{0j}\}$ ($\mathrm{Case \ 2.2}$). Let us first consider the $\mathrm{Case \ 2.1}$, then it holds that
\allowdisplaybreaks
\vspace{0.05in}
\begin{align}\label{eq_Wab_epsilon2_1}
     &\|\bfW[a,:]-\bfW[b,:]\|_1 = \sum_{k=1}^N | W[a,k] - W[b,k]| \nonumber\\
     &= \sum_{v_k \in \cV_{0i}}^N | W[a,k] - W[b,k]| + \sum_{v_k \in \cV_{0j}}^N | W[a,k] - W[b,k]| + \sum_{v_k \notin \cV_{0i},v_k \notin \cV_{0j}}^N | W[a,k] - W[b,k]| \nonumber\\
      &= | W[a,a] - W[b,a]| + | W[a,0i] - W[b,0i]|+ | W[a,b] - W[b,b]|  \nonumber\\
     &= (1-\beta) + |1-\cos(v_a,v_{0i})-(1-\beta)| + (1-\beta) \nonumber\\
     &= 2(1-\beta) +|\cos(v_a,v_{0i})-\beta| \le 3(1-\beta).
\end{align}

Now, let us first consider the $\mathrm{Case \ 2.2}$, then it holds that
\vspace{0.05in}
\begin{align}\label{eq_Wab_epsilon2_2}
     &\|\bfW[a,:]-\bfW[b,:]\|_1 = \sum_{k=1}^N | W[a,k] - W[b,k]| \nonumber\\
     &= \sum_{v_k \in \cV_{0i}}^N | W[a,k] - W[b,k]| + \sum_{v_k \in \cV_{0j}}^N | W[a,k] - W[b,k]| + \sum_{v_k \notin \cV_{0i},v_k \notin \cV_{0j}}^N | W[a,k] - W[b,k]| \nonumber\\
      &= | W[a,a] - W[b,a]| + | W[a,0j] - W[b,ji]|+ | W[a,b] - W[b,b]|  \nonumber\\
     &= (1-\beta) + |1-\cos(v_a,v_{0j})-(1-\beta)| + (1-\beta) \nonumber\\
     &= 2(1-\beta) +|\cos(v_a,v_{0j})-\beta| \le 3(1-\beta).
\end{align}

Combining  the previous equations (\ref{eq_Wab_epsilon1}), (\ref{eq_Wab_epsilon2_1}), and (\ref{eq_Wab_epsilon2_2}), we have 
\begin{align}\label{eq_Wab_epsilon}
     \|\bfW[a,:]-\bfW[b,:]\|_1
     \le \begin{cases}
          2|1-\cos(v_a,v_b)|& \text{ if } v_a, v_b \in \cV_{0i}, \forall i \in [N],\\
         3(1-\beta)& \text{ if } v_a \in \cV_{0i}, v_b \in \cV_{0j}, \forall i\neq j \in [N].
     \end{cases}.
\end{align}

\subsubsection{Proof of Lemma \ref{lemma_PiToMe_contradiction}}\label{proof_lemma_PiToMe_contradiction}
Recall the following Assumptions (A2) and (A3):
\begin{itemize}[noitemsep, nolistsep]
 \item[(A2).] There exists a margin $m$ s.t.,  $\cos(v_{a_s}, v_{b_s}) \ge m > \cos(v_{a_s}, v_{c_s}), \quad \forall v_{a_s} \in \cV^{(s)}_{0i}, \forall v_{b_s} \in \cV^{(s)}_{0i}, \forall v_{c_s} \in \cV^{(s)}_{0j}, \forall i \neq j \in [s]$.
 \vspace{0.05in}
 \item[(A3).]  There is an order of cardinality in the true partition, without loss of generality, we assume $N\ps_1 \ge N\ps_2 \ge \ldots \ge N\ps_s$, where $N\ps_i = |\cV^{(s)}_{0i}|, \forall i \in [s]$.
\end{itemize}

In \ptome approach, using Assumption (A2), given any two nodes $v_{a_s} \in \cV\ps_{0i}, v_{b_s} \in \cV\ps_{0j}, i< j \in [s]$, the energy scores $E_{a_s}$ and $E_{b_s}$ of nodes $v_{a_s}$ and $v_{b_s}$, respectively, are simplified as follows:
\setlength{\abovedisplayskip}{2pt} 
\setlength{\belowdisplayskip}{2pt} 
\begin{align} \label{equa_energy_appendix}
   E_{a_s} &= \frac{1}{N} \sum_{c_s \in \cN(v_{a_s})}  {f}_m(\cos(v_{a_s},v_{c_s})), \text{ where } {f}_m(x) = \begin{cases}
        x  & \text{ if }  x \geq m \\
        \beta & \text{ if } x < m
    \end{cases}\nonumber\\
    &= \frac{1}{N} \sum_{v_{c_s} \in \cV\ps_{0i}}  \cos(v_{a_s},v_{c_s}) + \frac{N-N\ps_i}{N}\beta \ge  \frac{N\ps_im+(N-N\ps_i)\beta}{N},\\
     E_{b_s} &= \frac{1}{N} \sum_{v_{c_s} \in \cV\ps_{0j}}  \cos(v_{b_s},v_{c_s}) + \frac{N-N\ps_j}{N}\beta \le \frac{N\ps_j+(N-N\ps_j)\beta}{N}.
\end{align}
Given the choice of the universal margin $m$ as follows: $m = \max\left\{\frac{N\ps_j}{N\ps_i}: i<j \in [s]\right\}$, it holds that $E_{a_s}\ge E_{b_s}$. Indeed, this is guaranteed as long as we have
\begin{align}
   \frac{N\ps_im+(N-N\ps_i)\beta}{N} &\ge \frac{N\ps_j+(N-N\ps_j)\beta}{N}.
\end{align}
This is equivalent that 
\begin{align}
   m &\ge \frac{N\ps_j+(N-N\ps_j)\beta-(N-N\ps_i)\beta}{N\ps_i} = \frac{N\ps_j+(N\ps_i-N\ps_j)\beta}{N\ps_i} \ge \frac{N\ps_j}{N\ps_i}.
\end{align}
Using Assumption (A3) and this choice of universal margin $m$, it holds that
\begin{align}\label{eq_margin_E}
    E_{a_s} \ge E_{b_s}, \text{ for any }v_{a_s} \in \cV\ps_{0i}, v_{b_s} \in \cV\ps_{0j}, i< j \in [s].
\end{align}

Recall that in \ptome approach, we use the ordered energy-based for the bipartite soft matching where we defined two set $\cA$ and $\cB$ with $|\cA| = |\cB| = k$ as follows:
\begin{align}\label{eq_define_A_B_appendix}
    \cA = \left\{ v^e_{1},v^e_{3},\ldots,v^e_{2k-1}\right\}, \cB = \left\{ v^e_{2},v^e_{4},\ldots,v^e_{2k}\right\},
\end{align}
where the nodes $v^e_{i}, i \in [2k]$, are sorted in decreasing order based on their energy scores, \ie $E_{v^e_{i}}> E_{v^e_{i+1}}, \forall i \in[2k-1]$.

We return to the proof of the Lemma \ref{lemma_PiToMe_contradiction} by contradiction.  Assume that $v_{a_s} \in \cV\ps_{0i}, v_{b_s} \in \cV\ps_{0j}, i\neq j \in [s]$. Without loss of generality, we assume that $v_{a_s} = v^e_{1} \in \cV\ps_{01}$ and $N\ps_1 = |\cV\ps_{01}| > 1$. 
Note that \ptome algorithm selects $v_{b_s} \in \cB$ such that
\begin{align*}
    v_{b_s} = \argmax{v_{c_s \in \cB}} \cos(v_{a_s},v_{c_s}).
\end{align*}
This is equivalent that for any $v_{c_s} \in \cB$, it holds that:
\begin{align*}
    \cos(v_{a_s},v_{c_s}) \le  \max{v_{c_s \in \cB}} \cos(v_{a_s},v_{c_s}) =  \cos(v_{a_s},v_{b_s})< m.
\end{align*}
Assumption (A2) implies that $v^e_{2i} \notin \cV\ps_{01},\forall i\in[k]$.  Since we have $N\ps_1 = |\cV\ps_{01}| > 1$, there exists at least one node in $\cA\cap \cV\ps_{01}$, say for example $v^e_3$. Using (\ref{eq_define_A_B_appendix}), we have $E_{v^e_{2}}\ge E_{v^e_{3}}$, which contradicts (\ref{eq_margin_E}) where we have $E_{v^e_{3}}> E_{v^e_{2}}$ since $v^e_{3} \in \cV\ps_{01}$ and $v^e_{2} \in \cV\ps_{0j}, j >1$.

\subsubsection{Proof of Lemma \ref{lemma_ToMe_contradiction}}\label{proof_lemma_ToMe_contradiction}
    On the contrary, in the Bipartite Soft Matching algorithm from the ToMe approach \cite{tome}, the authors divide the tokens into two distinct sets $\cA$ and $\cB$ and merge the top k similar tokens using some partitioning style like sequential, alternating and random without considering the ordered energy-based Bipartite Soft Matching like ours.
This leads to a case where the random event $M\ps_{ab}$ does not occur and therefore $\Pb(M\ps_{ab}) < 1$.
Indeed, this case happens when all the nodes from the true partition $\cV\,\ps_{01}$ are divided into the same set $\cA$. Therefore, the Bipartite Soft Matching algorithm has to choose and select the node $ v_{b_s} \in \cV\ps_{0j}, j>1 $, for merging. Actually, this case arises when all the nodes from the true partition $\cV\ps_{01}$ are distributed into the same set $\cA$. Therefore, the Bipartite Soft Matching algorithm has to select the node $v_{b_s} \in \cV\ps_{0j}, j>1$, for merging.

\section{Token Merging Outputs Visualization}\label{sec:examples}
To better illustrate the effectiveness of \ptome, we have incorporated many visualizations showcasing examples randomly sampled from the MSCOCO dataset, as depicted in Figures \ref{fig:visualize1}. In this analysis, we utilized the BLIP backbone and configured the reduction percentage to $r=0.85$. These visualizations portray the final representation of all remaining tokens in the last layers. Notably, tokens outlined with bolder \textcolor{darkcyan}{cyan} borders signify higher attention scores from the classification (CLS) token. Apparently, unlike ToMe and DiffRate, \ptome adeptly approximates the spectral spectrum of the original token space, thereby preserving the distribution of all tokens. Consequently, \ptome demonstrates a lower level of information distortion due to reduced mis-merged tokens, thus retaining attention maps for critical information in the final layer. As illustrated in Figures \ref{fig:visualize1},\ref{fig:visualize2},\ref{fig:visualize3},\ref{fig:visualize4},\ref{fig:visualize5} the attention map in the final layer of the model compressed by \ptome exhibits the closest resemblance to that of the baseline model.

\begin{figure}[t]
    \caption{Visualizations using random images sampled from the MSCOCO dataset.}
        \resizebox{\textwidth}{!}{\input{Figs/Visualizations/dog}}
        \subcaption{\textcolor{darkcyan}{A white dog} catching a novelty \textcolor{darkcyan}{flying disc} in \textcolor{darkcyan}{a competition}.}
        \label{fig:visualize1}
        \resizebox{\textwidth}{!}{\input{Figs/Visualizations/horse}}
        \subcaption{\textcolor{darkcyan}{A woman }riding a \textcolor{darkcyan}{horse} jumping it over \textcolor{darkcyan}{obstacles}.}
        \label{fig:visualize2}
        \resizebox{\textwidth}{!}{\input{Figs/Visualizations/motor}}
        \subcaption{\textcolor{darkcyan}{Three} different \textcolor{darkcyan}{motorcycle couples} riding down \textcolor{darkcyan}{a road}.}
        \label{fig:visualize3}
        \resizebox{\textwidth}{!}{\input{Figs/Visualizations/tennis}}
        \subcaption{\textcolor{darkcyan}{A man} holding \textcolor{darkcyan}{a tennis racquet} on \textcolor{darkcyan}{a tennis court.}}
        \label{fig:visualize4}
        \resizebox{\textwidth}{!}{\input{Figs/Visualizations/people}}
        \subcaption{\textcolor{darkcyan}{A group of people} who are around \textcolor{darkcyan}{a kitchen counter.}}
        \label{fig:visualize5}
\end{figure}

\section{OpenChat with Compressed LLaVA-1.5 Model}\label{sec:open-chat}
This section is dedicated to showcasing sample responses of LLaVA-1.5-7B when compressed using different algorithms. Here, we set the compression rate to $r=0.9$ for each CLIP layer. All sample answers are given in tables \ref{tab:example-baseball}, \ref{tab:example_horse}, \ref{tab:example-man-taxi}. Texts that are highlighted by \hlc[cyan!30]{blue} color indicate false information given by the model. From these sample answers, it is evident that, unlike other BSM-based approaches, our algorithm \ptome not only facilitates quicker model responses but also retains its capability to recognize objects, understand contextual information in the image, and provide correct answers closely aligned with those of the original LLaVA-1.5-7B model. 

\begin{table}[t]
    \centering
    \caption{}
    \label{tab:example-baseball}
    \resizebox{0.9\textwidth}{!}{
    \begin{tabular}{l|p{10cm}}
    \toprule
         \multicolumn{2}{c}{ \includegraphics[width=0.5\textwidth]{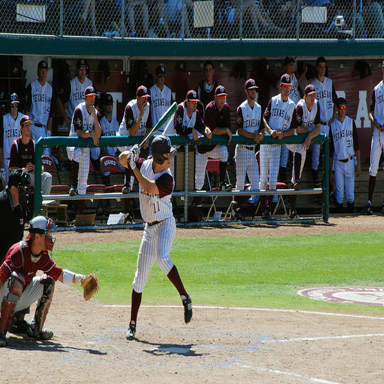} }\\
         \multicolumn{2}{c}{USER: Describe what you see}\\
         Model& Answer\\
         \midrule
         \textbf{LLaVA-1.5-7B}& In the image, a baseball game is taking place with a batter standing at home plate, holding a baseball bat and preparing to swing. The catcher is positioned behind the batter, ready to catch the ball if the batter misses. \hlc[cyan!30]{There are several other players on the field}, including teammates and opponents, all focused on the ongoing play. \hlc[cyan!30]{Some of the players are standing closer to the batter, while others are spread out across the field.} The scene captures the excitement and anticipation of a crucial moment in the game.\\
         \midrule
         \textbf{LLaVA-1.5-7B + \ptome}& In the image, a baseball player is in the middle of swinging a baseball bat, attempting to hit the ball during a game. There are several other people in the scene, \hlc[cyan!30]{some of whom are wearing baseball gloves}, indicating that they are also participating in the game. The presence of multiple players and the action of the batter suggest that this is an exciting moment in the game, with the batter trying to make contact with the ball and potentially score a run for their team.\\
         \midrule
         \textbf{LLaVA-1.5-7B +ToMe}&  In the image, a group of people is gathered on a baseball field, \hlc[cyan!30]{with some of them holding baseball bats}. It appears that they are either practicing or participating in a baseball game. \hlc[cyan!30]{There are several people standing in various positions on the field}, and some of them are closer to the foreground while others are further away. The scene captures the excitement and camaraderie often associated with baseball, as people come together to enjoy the game and improve their skills.\\
         \midrule
         \textbf{LLaVA-1.5-7B + ToFu}& In the image, there is a baseball game in progress \hlc[cyan!30]{with multiple players on the field}. One player is holding a baseball bat and is in the process of swinging it, likely attempting to hit the ball. Another player is holding a baseball glove, ready to catch the ball if it is hit. \hlc[cyan!39]{There are several other players on the field, some of them wearing baseball gloves as well}. A bench can be seen in the background, likely for players to sit on when they are not actively participating in the game. The scene captures the excitement and action of a baseball game.\\
         \midrule
         \textbf{LLaVA-1.5-7B + DiffRate} &  In the image, a group of people is playing a game of baseball on a field. \hlc[cyan!30]{There are several players on the field, with some of them holding baseball bats and wearing baseball gloves}. The players are actively engaged in the game, with one player holding a baseball bat and another player wearing a baseball glove, ready to catch the ball. \hlc[cyan!30]{The other players are also positioned around the field, either waiting for their turn to bat or preparing to field the ball}. The scene captures the excitement and energy of a live baseball game.\\
        \bottomrule
    \end{tabular}}
\end{table}

\begin{table}[t]
    \centering
    \caption{}
    \label{tab:example-man-taxi}
    \resizebox{0.9\textwidth}{!}{
    \begin{tabular}{l|p{10cm}}
    \toprule
         \multicolumn{2}{c}{ \includegraphics[width=0.5\textwidth]{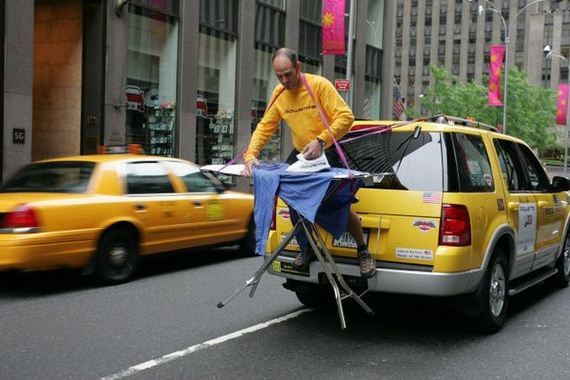} }\\
         \multicolumn{2}{c}{USER: Describe what you see.}\\
         Model& Answer\\
         \midrule
         \textbf{LLaVA-1.5-7B}&In the image, a man \hlc[cyan!30]{is standing on a folding chair} in the back of a yellow taxi cab. He is holding a blue shirt, possibly drying it or preparing to put it on. The taxi is driving down a busy street with multiple cars around it, including another taxi in front of it. The man's position on the chair and the presence of the shirt suggest that he might be a street vendor or a worker who is using the taxi as a mobile workspace.\\
         \midrule
        \textbf{ LLaVA-1.5-7B + \ptome}&  In the image, a man is standing on the back of a moving car, holding a clothesline with clothes hanging from it. The car appears to be a taxi cab, and the man is wearing a yellow shirt. The scene seems to be taking place in a busy city, as there are multiple cars visible in the background. The man's unusual activity of hanging clothes from the back of a moving car likely attracts attention and curiosity from passersby.\\
         \midrule
         \textbf{LLaVA-1.5-7B +ToMe}&  The image shows a man standing on the back of a taxi cab, \hlc[cyan!30]{holding a large umbrella}. The taxi is driving down a street, and the man appears to be \hlc[cyan!30]{using the umbrella to shield himself from the elements}. In addition to the taxi, there are several other vehicles in the scene, including \hlc[cyan!30]{a truck and a car}. The man on the taxi is the main focus of the image, and \hlc[cyan!30]{his actions suggest that he is trying to protect himself from rain or other weather conditions}.\\
         \midrule
        \textbf{ LLaVA-1.5-7B + ToFu}&The image depicts a busy city street scene with a man standing on the back of \hlc[cyan!30]{a truck}, possibly \hlc[cyan!30]{performing a stunt or participating in a parade}. The truck is positioned in the middle of the street, and the man appears to be the main focus of the scene. \hlc[cyan!30]{There are several other people in the image, some of whom are walking or standing near the truck}. Additionally, there are multiple cars parked or driving along the street, and \hlc[cyan!30]{a traffic light can be seen in the background}. The overall atmosphere suggests an urban environment with a lively and bustling atmosphere.\\
         \midrule
         \textbf{LLaVA-1.5-7B + DiffRate} &  The image features a man wearing a yellow shirt, \hlc[cyan!30]{standing next to a yellow car}. He appears to be \hlc[cyan!30]{holding onto the car}, possibly for support or to steady himself. The car is parked on a street, and there are other cars visible in the background. In addition to the man and the car, \hlc[cyan!30]{there are two other people in the scene, one standing further back and another closer to the foreground}. \hlc[cyan!30]{A handbag can also be seen placed on the ground near the center of the image.}\\
        \bottomrule
    \end{tabular}}
\end{table}

\begin{table}[t]
    \centering
    \caption{}
    \label{tab:example_horse}
    \resizebox{0.9\textwidth}{!}{
    \begin{tabular}{l|p{10cm}}
    \toprule
         \multicolumn{2}{c}{ \includegraphics[width=0.5\textwidth]{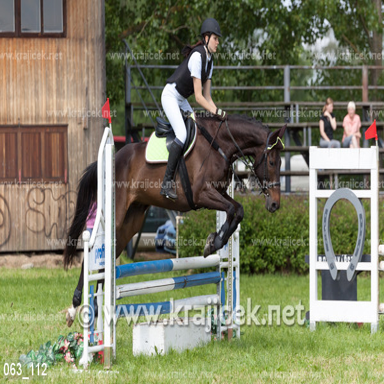} }\\
         \multicolumn{2}{c}{USER: Describe what is happening?}\\
         Model& Answer\\
         \midrule
         \textbf{LLaVA-1.5-7B}&In the image, a woman is riding a horse and jumping over a hurdle, which is a part of an equestrian competition. The horse and rider are in the middle of the jump, showcasing their skills and coordination. There are several other people in the scene, some of whom may be fellow competitors or spectators. The presence of multiple people and the focus on the horse and rider suggests that this is a competitive event or practice session.\\
         \midrule
        \textbf{ LLaVA-1.5-7B + \ptome}& In the image, a person is riding a horse while jumping over an obstacle, likely during an equestrian event or competition. The horse and rider are in the middle of the jump, showcasing their skill and coordination. There are several other people in the scene, some of whom might be fellow competitors, trainers, or spectators. \hlc[cyan!30]{One person is standing close to the rider on the horse}, while others are scattered around the area. The overall atmosphere of the image suggests an exciting and challenging equestrian event.\\
         \midrule
         \textbf{LLaVA-1.5-7B +ToMe}& In the image, a person is riding a horse and jumping over an obstacle, \hlc[cyan!30]{possibly a hurdle or a small hill}. The horse and rider are in the middle of the action, showcasing their skill and coordination. The scene captures the excitement and challenge of equestrian sports, such as show jumping or eventing.\\
         \midrule
        \textbf{ LLaVA-1.5-7B + ToFu}& In the image, a \hlc[cyan!30]{person is riding a horse while standing in a corral}. \hlc[cyan!30]{The horse and rider are positioned near a fence}, \hlc[cyan!30]{which could be a part of a fenced-in area or a stable}. The scene captures the bond between the rider and the horse as they interact and move together \hlc[cyan!30]{within the corral.}\\
         \midrule
         \textbf{LLaVA-1.5-7B + DiffRate} &  In the image, \hlc[cyan!30]{there are two people riding horses}, likely participating in a horseback riding lesson or practice session. \hlc[cyan!30]{The horses and riders are positioned next to each other, with one horse and rider on the left side and the other horse and rider on the right side of the scene}. The riders are wearing helmets for safety, and the horses appear to be well-trained and cooperative. The scene captures the essence of a horseback riding activity, where \hlc[cyan!30]{the riders are learning to control and communicate with their horses effectively.}\\
        \bottomrule
    \end{tabular}}
\end{table}

\end{document}